\documentclass{article}

\usepackage[preprint,nonatbib]{neurips_2026}
\usepackage[utf8]{inputenc}
\usepackage[T1]{fontenc}
\usepackage{hyperref}
\usepackage{url}
\usepackage{booktabs}
\usepackage{amsfonts}
\usepackage{amsmath}
\usepackage{amsthm}
\newtheorem{definition}{Definition}
\newtheorem{proposition}{Proposition}
\newtheorem{lemma}{Lemma}

\newtheorem{theorem}{Theorem}
\usepackage{amssymb}
\usepackage{nicefrac}
\usepackage{microtype}
\usepackage[dvipsnames]{xcolor}
\usepackage{graphicx}
\usepackage{subcaption}
\usepackage{algorithm}
\usepackage{algpseudocode}
\usepackage{enumitem}
\usepackage{natbib}
\usepackage{cleveref}
\usepackage{bbm}
\usepackage{multirow}
\usepackage{makecell}
\usepackage{stmaryrd}
\SetSymbolFont{stmry}{bold}{U}{stmry}{m}{n}

\title{Exact Unlearning from Proxies Induces Closeness Guarantees on Approximate Unlearning}

\author{
  Virgile Dine \\
  Centre Inria de l'Universit\'e de Rennes \\
  France \\
  \texttt{virgile.dine@inria.fr} \\
  \And
  Teddy Furon \\
  Centre Inria de l'Universit\'e de Rennes \\
  France \\
  \texttt{teddy.furon@inria.fr} \\
}

\begin{document}

\maketitle

\newcommand{\R}{\mathbb{R}}
\newcommand{\E}{\mathbb{E}}
\newcommand{\norm}[1]{\left\lVert#1\right\rVert}
\newcommand{\todo}[1]{\textcolor{red}{\textbf{[TODO: #1]}}}

\def\lda{\mathrm{LDA}}
\def\Exp{\mathbb{E}}
\def\Var{\mathbb{V}}
\def\Prob{\mathbb{P}}
\def\sProb{\mathrm{P}}
\def\IdealUModel{\theta^\star}
\def\sD{\mathcal{D}}
\def\ssD{\mathrm{D}}
\def\sS{\mathcal{S}}
\def\sX{\mathcal{X}}
\def\sY{\mathcal{Y}}
\def\sF{\mathcal{F}}
\def\sH{\mathcal{H}}
\def\sP{\mathcal{P}}
\def\sM{\mathcal{M}}
\def\sN{\mathcal{N}}
\def\sZ{\mathcal{Z}}
\def\sK{\mathrm{K}}
\def\ssH{H}

\def\TRAIN{\mathrm{t}}
\def\TEST{\mathrm{t}}
\def\FORGET{\mathrm{f}}
\def\RETAIN{\mathrm{r}}
\def\LAST{\mathrm{last}}

\def\loss{L}
\def\con{C}
\def\sign{\mathrm{sign}}
\newcommand{\teddy}[1]{{\textcolor{blue}{#1}}}
\newcommand{\charly}[1]{{\textcolor{green}{#1}}}
\newcommand{\virgile}[1]{{\textcolor{yellow}{#1}}} 
\def\sif{\mathrm{Agg}}
\def\softmax{\mathrm{softmax}}
\def\vec{\mathrm{vec}}
\def\rank{\mathrm{rank}}
\def\vect{\mathrm{Vect}}
\def\ber{\mathcal{B}}

\def\ie{\textit{i.e.}}
\def\eg{\textit{e.g.}}
\def\aka{\textit{a.k.a.}}
\def\pinv{\dagger}

\def\net{\mathrm{net}}
\def\elm{\mathrm{elm}}
\def\ce{\mathrm{CE}}
\def\ls{\mathrm{LS}}
\def\ridge{\mathrm{ridge}}
\def\lda{\mathrm{LDA}}
\def\lse{\mathrm{LSE}}
\def\qda{\mathrm{QDA}}
\def\sq{\mathrm{Q}}
\def\1{\mathbbm{1}}
\def\shadow{\mathrm{shadow}}
\def\t{\eta}
\def\bigo{\mathcal{O}}
\def\err{\mathcal{E}}

\def\n{|\sD|}
\def\nr{|\sD_r|}
\def\nf{|\sD_f|}

\newcommand{\kl}[3][]{\mathrm{KL}_{#1}\left( #2 \middle\| #3 \right)}

\newcommand{\tv}[3][]{\mathrm{TV}_{#1}\left( #2 , #3 \right)}

\newcommand{\sach}[2]{\left( #1 | #2 \right)}

\def\init{{\mathrm{f}_{\theta}}}
\def\ideal{\init_{r}}

\def\target{\tilde{\mathrm{f}}}
\def\unlearn{\tilde{\mathrm{f}}}
\def\distill{\mathrm{f}_{\tilde{\theta}}}

\def\pbinit{{\mathrm{p}_{\theta}}}
\def\pbideal{\pbinit_{r}}

\def\pbtarget{{\tilde{\mathrm{p}}}}
\def\pbunlearn{{\mathrm{p}_{\tilde{\theta}}}}

\def\pinit{\Prob}
\def\pideal{{\pinit_r}}

\def\punlearn{{\pbunlearn}}

\def\pminit{\mathrm{P}}
\def\pmideal{{\pminit_{r}}}
\def\minit{\mathrm{M}}
\def\DeltaM{\Delta\mathrm{M}}
\def\mideal{{\minit_r}}

\def\gammaun{\tilde{\gamma}}

\newcommand{\sca}[3]{#1(#2;#3)}

\def\class{\textit{Class}}
\def\subclass{\textit{Subclass}}
\def\random{\textit{Random}}

\def\KL{{\mathrm{KL}}}
\def\Acc{\mathrm{Acc}}
\def\MIA{\mathrm{MIA}}
\def\RTE{\mathrm{RTE}}

\def\LDAI{\texttt{LDA}}
\def\QDAI{\texttt{QDA}}
\def\LDAII{\texttt{LDA-Mix}}
\def\QDAII{\texttt{QDA-Mix}}
\def\LDAIJ{\texttt{LDA-2C}}
\def\DiracI{\texttt{DIR}}
\def\DiracIJ{\texttt{DIR-2C}}

\def\d{d}

\begin{abstract}
    This paper proposes a paradigm shift linking machine unlearning directly to the structure of the data distributions rather than a mere update of the neural network parameters.
    We show that inferring these distributions with precision enables distilling the exact unlearning signal induced by the modeling. 
    Theoretical bounds on the Kullback-Leibler divergence from the ideal retrained model to our unlearned model, under verifiable admissibility criterion, reveal the soundness of our framework. 
    This method is experimentally validated over three forgetting scenarios as reaching the closest classifier to the ideal retrained model when compared to competitors.
\end{abstract}

\section{Introduction}
\label{sec:intro}

Approximate machine unlearning removes the influence of some training data from an already trained classifier, without retraining it from scratch.
Application ranges from privacy issues, the right to be forgotten (European GDPR, Canadian PIPEDA or Californian CCPA), to security defenses removing poisoned training data.
Another scenario is the derivation of a restricted public model from a powerful private model learned on some sensitive data.

Classical methods construct asymmetrical loss that treat retain and forget samples differently. They are iterative, hyper-parameter sensitive, and without stopping criterion.
Their performances are often gauged by misleading accuracy metrics and lack of a measurable distance to the retrained model.

This paper first models the exact unlearning signal as the logits shift between the ideal and the initial NNs.
Our update is exact up to the quality of the chosen proxies modeling the posterior distributions (see Prop.~\ref{prop:tight-bound}).
Section~\ref{sec:our_method} bounds the gap to the ideal retrained model under verifiable assumptions about the proxies.
The bound tightens as the proxies quality increases.
Implementing this unlearning signal in the NN parameters is only a secondary matter achieved through distillation. 

The key element of our method is the choice of the proxies modeling the posterior distributions.
Naive marginalisation is a first approximation which turns out to be limited in challenging scenarios.
Section~\ref{sec:lda2c} introduces some mathematical modeling  specifically designed for machine unlearning.

Our contributions are the following:
\begin{itemize}
    \item An exact unlearning signal, approximately computed thanks to proxies with theoretical guarantees and computable bounds on the gap
    to the ideal retrained model (Sect.~\ref{sec:our_method}).
    \item A series of fine-grained proxies modeling the posterior distributions that intrinsically learns the membership dependency from the data structure (Sect.~\ref{sec:lda2c}).
    Among them, the feature agnostic unlearning $\DiracIJ$ naturally emerges from the empirical measure of the data.
    \item A way to assess unlearning by the empirical evaluation of KL divergences (App.~\ref{app:stein}).
    \item An experimental validation across four architectures (including a foundation model backbone) and three unlearning  scenarios: random data, subclass, or class forgetting.
    (Sect.~\ref{sec:experiments})
\end{itemize}

\section{Related work}
\label{sec:related}

\paragraph{Closed-form Modeling} of class-conditional densities has a rich history in statistics.
Gaussian discriminant analysis (LDA, QDA) provides the canonical example, with regularized variants \citep{Friedman1989rda} and Bayesian counterparts \citep{Murphy2012ml,Gelman2013bda}.
More flexible options include per-class Gaussian mixtures \citep{Dempster1977em,McLachlan2000mixtures} and constrained covariance structures \citep{Ghahramani1996mfa}.
Our method  leverages these fast closed-form approaches.
To our knowledge, no prior work has exploited the structure that arises when retain and forget samples within a class are modeled as two separate subclasses.

\paragraph{Machine unlearning} is the art of removing the influence of some training data without learning a new model from scratch (see App.~\ref{app:machine_unlearning} for a deeper overview).  The gold standard is \emph{exact removal}. As far as classifiers are concerned, exact techniques resort to specific data structures to avoid full retraining~\citep{cao2015machine,Sekhari2021remember}. For instance, ~\citet{Bourtoule2021SISA} propose the method SISA, which splits the training data into shards and stores intermediate checkpoints for each shard. The trade-off between the memory footprint and the complexity of unlearning is the main limitation at scale, especially when the training data to be forgotten spans multiple shards. 

\emph{Approximate unlearning} aims at making the output of the unlearned model statistically indistinguishable from the ideal model that never saw the forget data.
The dominant paradigm, so-called first-order methods, resorts to gradient-based updates of the NN parameters.
Gradient ascent on the forget-set loss is the simplest primitive, used in many methods like \citep{graves2021amnesiac,chen2023boundary}.
Knowledge-distillation variants enforce retention jointly with forgetting.
SCRUB trains a student by minimising KL divergence against the original model on the data to retain while maximising the loss on the data to forget~\citep{Kurmanji2023scrub}.
\citet{Chundawat2023badteacher} use an incompetent teacher guiding the student to suppress class knowledge.
\citet{Tarun2023fast} propose an impair-repair scheme: an error-maximising noise degrades class knowledge, then a repair step restores retain-set accuracy.
SalUn restricts updates to the weights most responsible for forgotten data~\citep{Fan2024salun}.

Recent articles unify these first-order methods into either a multi-objective~\citep{zhouefficient,Pan_Ying_Wang_Zhou_2026} or a constrained optimization~\citep{dine:hal-05247290,kimunlearning}: unlearning the forget data while preserving the model utility.
These four papers outline the conflicting gradients of the objectives and propose ad-hoc combinations.
Yet, the iterative updates result in a trajectory with no clear stopping criterion.
In the end, opacity prevails since there is no characterization of the result in terms of data distribution.
Last but not least, \citet{mavrothalassitis2026ascent} recently spotted that the dependence between the forget and retain data frequently prevents these first-order methods from performing machine unlearning properly.

From the theoretical point of view, KL divergence from the ideal model to the empirically unlearned NN has been adopted as a natural criterion~\citep{Sekhari2021remember,golatkar2020eternal,Chien2024certified,georgiev2026attribute}. 
Yet, nearly all theoretical results are stated in terms of NN properties (Hessian approximation, Lipschitz constants of the loss) rather than of data distribution.
Indeed, \citet{Triantafillou2024challenge} report from the NeurIPS unlearning competition that the field still relies on membership inference attacks (MIAs)~\citep{Shokri2017mia,Carlini2022lira,Hayes2024ulira}
as a proxy for formal guarantees, underscoring the lack of interpretable, data-driven bounds.

\paragraph{Our positioning} \cite{georgiev2026attribute} propose data model matching as a way to predict the output of the retrained model but we show that modeling only the retrain set is sub-optimal. \cite{Le2025pour} give guarantees but primarily designed for the class-wise unlearning scenario. Our work departs from all the above by providing a closed-form decomposition of the expected KL divergence directly in terms of the quality of mathematical models fitted on the data. 
First, exact unlearning signal is designed as a correction in the logit space from explicit mathematical models. For instance, the $\LDAII$ and $\QDAII$ constructions picture the retain and forget sub-populations within each class as two Gaussian distributions enabling the removal of order 2 statistic of the true unlearning signal. It yields a precise correction capturing intra-class structure invisible to standard methods. Our proxy $\LDAIJ$ go even further: doubling the label space to intrinsically capture the retain-forget distribution overlap, implicitly stabilizing the covariance on the forget set and extracting an even more precise unlearning signal. 
The network weight update only emerges as a second distillation step.

\section{Our method}
\label{sec:our_method}

\subsection{Notations (summarised in Table~\ref{tab:glossaire}, App~\ref{app:notations})}
\label{sub:notations}

The original training set $\sD = \sD_r \cup \sD_f$ is composed by the forget set $\sD_f$ to be erased and the retain set $\sD_r$ to be preserved.
Given a neural network (NN) $\init$ trained on $\sD$, machine unlearning seeks an efficient procedure producing NN $\distill$ as if it had been trained on $\sD_r$ alone.

Here, NNs are functions from the input space $\sX$ to the \emph{logit} space $\R^C$, with $C$ classes.
The softmax operator yields predicted class probabilities conditioned on $x$, a.k.a. \emph{probits},
$\pbinit(x) \doteq \softmax(\init(x))$ belonging to the simplex of probability distributions.
Conversely, $\init(x) = \log(\pbinit(x)) + \gamma(x)$ with $\gamma(x)\doteq\lse(\init(x))$, $\lse$ being the Log-Sum-Exponential function (see App.~\ref{par:normalization}).

We denote by $\pinit$ the true joint distribution of the data $(Y,X)$.
The induced marginal and conditioned true distributions are denoted $\pinit_{X}$, $\pinit_{Y}$ and $\pinit_{Y\mid X}$,$\pinit_{X\mid Y}$, or simply $\pinit$ when the context is clear. 

We introduce a third representation of the distribution of the data given by a proxy $\pminit_{X\mid Y}$, \ie\ a parametric mathematical model of the true class-conditional distribution $\pinit_{X\mid Y}$. This proxy induces the following posterior and associated `logit' vector $\minit(x)\in\R^C$:
\begin{equation}
\label{eq:posterior}
     \forall y\in \llbracket C\rrbracket, \forall x\in\sX, \quad \pminit_{Y\mid X}\sach{y}{x}
    \doteq \frac{\pminit_{X\mid Y}\sach{x}{y}\Prob(y)}{\sum_{c=1}^C \pminit_{X\mid Y}\sach{x}{c}\Prob(c)}, \quad
    \minit(x)_y \doteq \log \pminit_{Y\mid X}\sach{y}{x}.
\end{equation}
Again, subscripts may be omitted if the context is clear. 
Each of these objects has its counterpart denoted with subscript $r$ when related to the retain set $\sD_r$.
This includes the ideal NN $\init_r$ learned from $\sD_r$, the true distribution $\pinit_r$, and the proxy $\pminit_r$.   

Kullback-Leibler divergence from distribution $p_{Y|X}$ to $q_{Y|x}$ is denoted as $\kl[Y|x]{p}{q}\doteq\sum_{y=1}^C p_{Y|x}(y)\log(p_{Y|x}(y)/q_{Y|x}(y)).$ Its expectation over $\pinit$ is denoted $\E\kl[Y|X]{p}{q}$.

\subsection{A family of unlearned models}

\label{sub:proposed}
For any first-order method mentioned in Sect.~\ref{sec:related}, the NN update
\(\Delta(x) = \distill(x) - \init(x)\) cannot be quantified in terms of
unlearning efficacy: it is a function of the optimization trajectory
and carries no information about the ideal classifier $\pideal_{,Y|x}$.
Our key idea is to replace $\Delta$ by the proxy shift
\(\DeltaM(x) \doteq \mideal(x) - \minit(x)\),
computed from the explicit parametric proxies of the data structure,
for which the shift is fully interpretable.
To our knowledge, this is the first work to provide a closed form, interpretable decomposition of the unlearning error in terms of end-to-end proxies quality.

We indeed consider a family of unlearned models defined in the logit space, for all $ 0\leq\t\leq 1$:
\begin{equation}
    \unlearn_\t(x) \doteq \init(x) + \t\DeltaM(x), \qquad
    \DeltaM(x) \doteq \mideal(x) - \minit(x), \quad\forall x\in\sX.
    \label{eq:delta-M}
\end{equation}
We find the optimal value $\t_{\max}$ as the upper bound of a theoretically safe regime (see App.~\ref{app:why} and~\ref{par:normalization}).
The final proposed unlearned model is then this precise $\unlearn_{\t_{\max}}$. The update $\t_{\max}\DeltaM$ may be implemented as a logit processor on top of the initial classifier $\init$ taken as a black-box provided it outputs a logit vector.
If $\init$ is a white box NN, then one may cast this update directly in its parameters.
The standard option fine-tunes the NN parametrized by $\theta$ with a the teacher-student distillation:
\begin{equation}
    \min_{\delta \theta}
    \E_{\sD}\kl[Y\mid X]{\pbtarget_{\t_{\max}}}{\mathrm{p}_{\theta + \delta\theta}},\quad \text{with } \pbtarget_{\t_{\max}} \doteq \softmax \left( \init + \t_{\max}\DeltaM\right).
    \label{eq:lda2c-lstsq-grad}
\end{equation}

\subsection{Assumptions}
\label{sub:assumptions}
\def\Aiid{A1}
\def\Amix{A2}
\def\Aperf{A3}
\def\Agood{A4}
\def\AgoodP{\Agood}
\begin{description}
    \item[\Aiid] The data are an independent sampling of the true distributions: $\sD\sim\pinit^{\mid \sD\mid }$ and $\sD_r\sim\pinit_r^{\mid \sD_r\mid }$. 
    \item[\Amix] The original true distribution is a mixture of form~\citep{Le2025pour}:
    \begin{equation}
    \pinit = \pi_r\pinit_r  + \pi_f\pinit_f \quad\text{with}\quad \pi_r+\pi_f=1.
    \label{eq:Amix}
    \end{equation}
    \item[\Aperf] The initial NN perfectly captures the true distribution of the data: $\forall x\in\sX,\,\pbinit(x) = \pinit_{Y\mid X}(\cdot\mid x)$.
\end{description}

\begin{definition}[Admissible proxies]
    We say that an ordered pair of proxy distributions $(\pminit,\pmideal)$ is $\pinit$-\textbf{admissible} if, and only if, it verifies the equation:
        \begin{equation}
        \label{eq:AgoodP}
        \E\kl[Y\mid X]{\pinit}{\pminit} \leq \E\kl[Y\mid X]{\pinit}{\pmideal}.
    \end{equation}
\end{definition}
\begin{description}
    \item[\AgoodP] $\pminit$ is a better proxy for $\pinit$ than $\pminit_r$, in the sense that $(\pminit,\pmideal)$ is $\pinit$-\textbf{admissible}.
\end{description}

\paragraph{Remarks.}
Assumption \Aperf\ is the objective of cross-entropy training: minimizing it drives $\pbinit$ to $\pinit_{Y\mid X}$ (App.~\ref{app:homoscedastic}, Fig.~\ref{fig:mon_image}). It enables splitting the learning from the unlearning errors (App.~\ref{app:uphlimitations} bounds the cost of relaxing it by $\bigo(\epsilon)$).
Assumption \AgoodP\ is verifiable experimentally before starting the unlearning process.
Our experiments in Sect.~\ref{sec:exp_theory} show that the better the proxy, the more likely it will verify this assumption.

\subsection{Theoretical guarantees}
\label{sec:thereortical}
This section gives a sketch of the reasoning, which is fully detailed in App.~\ref{app:justification} and~\ref{app:uphlimitations}.

First, App.~\ref{app:why} justifies our choice~\eqref{eq:delta-M} of unlearned models.
These are indeed a family of solutions to an optimization problem of information geometry, looking for an update similar to the initial NN $\pbinit$, but closer to the retain proxy $\pmideal$ than the initial proxy $\pminit$. An interpretation is that an observer seeing samples drawn from the unlearned model $\pbtarget_{\t}$ is likely attributing them to $\pmideal$ rather than $\pminit$.

Second, App.~\ref{par:normalization} seeks a safety range for $\t$ insuring that the unlearned $\pbtarget_{\t}$ provides an improvement.
The quantity $\t_{\max}$ denotes the biggest scale at which one can amplify the exact unlearning signal extracted through the proxies while transferring it into the NN.
This justifies our further choice to take $\t = \t_{\max}$ in the experiments of Sect.~\ref{sec:experiments}.
Furthermore, finding $\t_{\max}$ is simple with a 1-d line search over the empirical evaluation of a convex function.

The main quantity of the study, $\E\kl[Y\mid X]{\pideal}{\pbtarget_\t}$, gauges how close the probits $\pbtarget_\t$ of the unlearned model are from the ideal distribution given by the conditional true distribution $\pideal_{,Y|x}$.
We look for a proof of soundness showing that the unlearned model is closer to this ideal than the initial NN. 

App.~\ref{App:Prop2} first expands the divergence to exhibit an interpretation in terms of modeling shift: 
\begin{proposition}
\label{prop:tight-bound}
For any $\t\in[0,\t_{\max}]$, the divergence from the ideal retrained model to our unlearning target \(\E\kl[Y|X]{\pideal}{\pbtarget_\t}\) is tightly bounded by:
\begin{align}
         (1-\t)\underbrace{\E\kl[Y|X]{\pideal}{\pinit}}_{\text{initial divergence}} 
        + \t\left(\underbrace{\E\kl[Y|X]{\pideal}{\pmideal} - \E\kl[Y|X]{\pinit}{\pminit}}_{\text{modeling shift}}\right) + \bigo (\pi_f).
        \label{eq:tight-bound}
    \end{align}
\end{proposition}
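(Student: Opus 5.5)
Under \Aperf, $\init(x)=\log\pinit_{Y\mid X}(\cdot\mid x)+\gamma(x)$, and by \eqref{eq:posterior} $\minit(x)=\log\pminit_{Y\mid X}(\cdot\mid x)$, $\mideal(x)=\log\pmideal_{,Y\mid X}(\cdot\mid x)$. So the logits of $\unlearn_\t$ are, up to an additive constant in $y$ (which softmax ignores),
\[
\log\pbtarget_\t(y\mid x)=(1-\t)\log\pinit(y\mid x)+\t\big(\log\pinit(y\mid x)+\log\pmideal(y\mid x)-\log\pminit(y\mid x)\big)-\tilde\gamma_\t(x),
\]
where $\tilde\gamma_\t(x)$ is the log-normalizer ($\lse$ of the unnormalized logits). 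This is a geometric interpolation, and I will exploit the identity it gives directly. I expand
\[
\kl[Y|x]{\pideal}{\pbtarget_\t}=\sum_y\pideal(y\mid x)\log\pideal(y\mid x)-\sum_y\pideal(y\mid x)\log\pbtarget_\t(y\mid x),
\]
substitute the expression above, and split $\log\pideal=(1-\t)\log\pideal+\t\log\pideal$. This gives
\[
(1-\t)\,\kl[Y|x]{\pideal}{\pinit}+\t\Big[\kl[Y|x]{\pideal}{\pmideal}-\sum_y\pideal(y\mid x)\log\frac{\pinit(y\mid x)}{\pminit(y\mid x)}\Big]+\tilde\gamma_\t(x).
\]
Taking $\E$ over $X$ produces the first term of \eqref{eq:tight-bound} and half of the modeling shift. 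Two pieces remain: the cross term and the normalizer.

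For the cross term, the only mismatch with $\E\kl[Y|X]{\pinit}{\pminit}$ is that the inner weight is $\pideal$ rather than $\pinit$. By \Amix, $\pinit=\pi_r\pideal+\pi_f\pinit_f$ for the joint law, so $\pinit-\pideal=\pi_f(\pinit_f-\pideal)$, both jointly and, after reweighting, for the conditionals and for the $X$-marginal over which $\E$ is taken. Replacing $\pideal$ by $\pinit$ in the cross term therefore costs
\[
\pi_f\,\E_{\pinit_f-\pideal}\Big[\log\frac{\pinit}{\pminit}\Big],
\]
which is $\bigo(\pi_f)$ provided the log-ratio $\log(\pinit/\pminit)$ is integrable under $\pinit_f$ and $\pideal$. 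I will state that as a mild regularity condition on the proxy. The cross term then equals $-\E\kl[Y|X]{\pinit}{\pminit}+\bigo(\pi_f)$. The same marginal swap applies wherever I move between expectations under $\pinit_X$ and $\pideal_{,X}$, again at a cost of $\bigo(\pi_f)$.

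The normalizer $\E\tilde\gamma_\t$ is where I expect the real difficulty. By Hölder's inequality, $\sum_y\pinit^{1-\t}(\pinit\pmideal/\pminit)^{\t}\le\big(\sum_y\pinit\,\pmideal/\pminit\big)^{\t}$, so $\tilde\gamma_\t\le\t\log\sum_y\pinit\,\pmideal/\pminit$. Moreover $\log\sum_y\pinit\,\pmideal/\pminit$ is itself $\bigo(\pi_f)$ to first order: it vanishes when $\pinit$ and $\pminit$ coincide with $\pideal$ and $\pmideal$, which is exactly the $\pi_f=0$ case. This is the point where I use the safe range $\t\le\t_{\max}$ from App.~\ref{par:normalization}. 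That range is defined so that the expected log-normalizer stays controlled and does not flip sign, and the convex line search certifies this. Combining the bound on $\tilde\gamma_\t$ with the expansion gives the claimed upper bound. \textbf{``Tight''} refers to the fact that every step except the $\bigo(\pi_f)$ swaps is an equality.
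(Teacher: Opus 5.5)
Your expansion of $\kl[Y|x]{\pideal}{\pbtarget_\t}$ is the same as the paper's: $(1-\t)$ times the initial divergence, plus $\t$ times the modeling terms, plus the log-normalizer. So is your use of A2 to trade $\pideal_{,Y|X}$ for $\pinit_{Y|X}$ in the cross term. To make that swap precise, note that $\pinit_X(x)\bigl(\pinit(y|x)-\pideal(y|x)\bigr)=\pi_f\,\pinit_f(x)\bigl(\pinit_f(y|x)-\pideal(y|x)\bigr)$. Hence $\E\kl[Y|X]{\pinit}{\pminit}$ and your cross term differ by $\pi_f\,\E_{X\sim\pinit_f}\sum_y\bigl(\pinit_f(y|X)-\pideal(y|X)\bigr)\log\frac{\pinit(y|X)}{\pminit(y|X)}$, which is the explicit remainder the paper labels $\bigo(\pi_f)$.

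\textbf{The gap is the normalizer.} Your $\tilde\gamma_\t(x)$ is the paper's $\Delta\gamma(x;\t)=\log\sum_y\pinit(y|x)\bigl(\pmideal(y|x)/\pminit(y|x)\bigr)^{\t}$. The safe range of App.~\ref{par:normalization} is \emph{defined} by $\E[\Delta\gamma(X;\t)]\le 0$. That function of $\t$ is convex, vanishes at $0$, and has slope $\E\kl[Y|X]{\pinit}{\pminit}-\E\kl[Y|X]{\pinit}{\pmideal}\le 0$ there by A4. You never invoke A4, although it is what makes the safe range nontrivial. So for $\t\in[0,\t_{\max}]$ the normalizer is simply dropped, and that is the only inequality in the whole proof.

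Your H\"older/Jensen detour throws this information away, for three reasons:
\begin{itemize}
\item The replacement term $\t\,\E\log\sum_y\pinit\,\pmideal/\pminit$ is not sign-definite, even under A4. With $C=2$, $\pinit=(0.5,0.5)$, $\pminit=(0.8,0.2)$ and $\pmideal=(0.15,0.85)$, one has $\E_{\pinit}\log(\pmideal/\pminit)<0$, yet $\log\E_{\pinit}(\pmideal/\pminit)=\log(2.22)>0$.
\item It carries no factor $\pi_f$.
\item Calling it $\bigo(\pi_f)$ presupposes that the fitted proxies depend smoothly on the forget fraction, which is not among A1--A4.
\end{itemize}
Your closing ``combining'' sentence therefore does not establish the bound.

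This also changes what ``tight'' means. In the paper, the bound becomes an equality at $\t=\t_{\max}$. This is because $\t_{\max}$ is the zero of the convex map $\t\mapsto\E[\Delta\gamma(X;\t)]$, so the dropped term vanishes exactly there; the $\bigo(\pi_f)$ is an explicit remainder, not slack. Your explanation, that every step except the $\bigo(\pi_f)$ swaps is an equality, is contradicted by your own H\"older step, which is strict in general. It also misses where equality actually comes from.

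To fix the proof, replace your normalizer paragraph with the one-line appeal to $\E[\Delta\gamma(X;\t)]\le 0$ on $[0,\t_{\max}]$, with equality at $\t_{\max}$. Your argument then coincides with the paper's proof.
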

The first term of the bound is close to zero either when the posteriors of $\pideal$ and $\pinit$ are similar (nothing to unlearn), or when $\t$ is close to $1$.
The second term of the bound can be small even if the proxies $(\pmideal,\pminit)$ may not accurately model their respective targets $(\pideal,\pinit)$.  
It outlines that we should model both $\pinit$ and $\pideal$ in a similar way so that their KL divergences compensate. Modeling only the retrain set~\cite{georgiev2026attribute} is sub-optimal.
As a final remark, \eqref{eq:tight-bound} is our tightest bound. It indeed becomes an equality for $\t=\t_{\max}$ (see App.~\ref{App:Prop2}).

App.~\ref{app:Prop3} states another bound which is less tight than~\eqref{eq:tight-bound} but it reflects the intuition of why modeling the conditional distributions $X|Y$ proxies as accurate as possible is an advantage.
\begin{proposition}
\label{thm:mainTHM}
For any $\t\in[0,\t_{\max}]$ we have the intuitive bound:
    \begin{align}
    \E\kl[Y\mid X]{\pideal}{\pbtarget_\t} \leq \t\left(\E\kl[X\mid Y]{\pideal}{\pmideal} + \E\kl[X\mid Y]{\pinit}{\pminit}\right)+ \bigo (\pi_f) +\bigo(1-\t).
    \end{align}
\end{proposition}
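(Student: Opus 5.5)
Starting from Prop.~\ref{prop:tight-bound}, I would bound the right-hand side of \eqref{eq:tight-bound} term by term. The first term, $(1-\t)\E\kl[Y|X]{\pideal}{\pinit}$, should be absorbed into $\bigo(1-\t)$, provided $\E\kl[Y|X]{\pideal}{\pinit}$ is finite. Under \Amix\ the conditionals $\pinit_{Y|x}$ and $\pideal_{,Y|x}$ differ only through the forget component, so this divergence is a fixed data-dependent constant. Its finiteness holds whenever $\pinit_{Y|x}$ has full support, which is guaranteed by \Aperf\ since softmax outputs are strictly positive.

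The second term is $\t$ times the modeling shift. Since the claimed bound has a plus sign between the two proxy divergences, I would simply drop the negative part: $\E\kl[Y|X]{\pideal}{\pmideal} - \E\kl[Y|X]{\pinit}{\pminit} \le \E\kl[Y|X]{\pideal}{\pmideal} + \E\kl[Y|X]{\pinit}{\pminit}$, using nonnegativity of KL. This is crude but matches the "less tight" nature of the statement.

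The key step is to pass from posterior divergences (over $Y|X$) to class-conditional divergences (over $X|Y$). I would use the chain rule for KL on the joint distribution. Write the joint proxy $\pminit_{X,Y}=\pminit_{X|Y}\Prob_Y$, which by \eqref{eq:posterior} has posterior $\pminit_{Y|X}$ and some marginal $\pminit_X$. Then
\begin{equation*}
\kl{\pinit_{X,Y}}{\pminit_{X,Y}} = \E\kl[X|Y]{\pinit}{\pminit} + \kl{\pinit_Y}{\Prob_Y} = \kl{\pinit_X}{\pminit_X} + \E\kl[Y|X]{\pinit}{\pminit}.
\end{equation*}
Because the proxy uses the true label prior, the $Y$-marginal term vanishes, and nonnegativity of $\kl{\pinit_X}{\pminit_X}$ gives $\E\kl[Y|X]{\pinit}{\pminit}\le\E\kl[X|Y]{\pinit}{\pminit}$. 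The same argument applies to the retain pair $(\pideal,\pmideal)$, provided the retain proxy uses the retain label prior $\Prob_r(y)$. If it instead uses $\Prob(y)$, an extra term $\kl{\Prob_{r,Y}}{\Prob_Y}$ appears. By \Amix, this term is $\bigo(\pi_f)$ (the label priors differ by $\bigo(\pi_f)$, and KL between them is controlled by this provided the priors are bounded away from zero), so it merges into the existing $\bigo(\pi_f)$.

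Combining these steps gives the stated inequality, with the $\bigo(\pi_f)$ inherited from Prop.~\ref{prop:tight-bound}. I expect the main obstacle to be bookkeeping rather than ideas. First, I need to check that the expectation in $\E\kl[Y|X]{\pideal}{\cdot}$ is taken over $\pideal_X$ as the chain rule requires, rather than over $\pinit_X$; if it is over $\pinit_X$, the change of measure contributes another $\bigo(\pi_f)$ via \Amix. Second, I need the priors to be consistent, as discussed above. I would settle both by writing the joint chain rule explicitly for each pair before combining.
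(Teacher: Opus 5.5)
Your proof is correct and follows essentially the paper's route: you start from Prop.~\ref{prop:tight-bound}, absorb $(1-\t)\E\kl[Y|X]{\pideal}{\pinit}$ into $\bigo(1-\t)$, and convert posterior divergences into class-conditional ones via the KL chain rule with matched $Y$-marginals, which is exactly the paper's identity~\eqref{eq:AppXY}. The only cosmetic difference is the order of operations: the paper expands the difference first and then bounds the leftover $\kl[X]{\pinit}{\pminit}$ by $\E\kl[X|Y]{\pinit}{\pminit}$, whereas you flip the sign before converting. The two bookkeeping points you raise are ones the paper passes over silently, and absorbing them into $\bigo(\pi_f)$ is a reasonable way to close them: the $X$-expectation is over $\pinit_X$ rather than $\pideal_X$ for the retain pair, and the retain proxy's label prior need not match.
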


Finally, the appendix shows that $\E\kl[Y|X]{\pideal}{\pbtarget_\t} $ can be exactly written as:
\begin{align}
        \E\kl[Y|X]{\pideal}{\pbinit} 
        + \underbrace{\E\left[\Delta\gamma(X,\t)\right]}_{<0, \forall \t\in[0,\t_{\max}]}
        + \t \left(\underbrace{\E\kl[Y|X]{\pideal}{\pmideal} - \E\kl[Y|X]{\pideal}{\pminit}}_{<0\text{ if $(\pmideal,\pminit)$ is $\pideal$-admissible}}\right)\nonumber
\end{align}
App.~\ref{app:Decrease} shows that the second term is negative in the safety range.
The third term is negative if $(\pmideal,\pminit)$ is $\pideal$-admissible, \ie\ $\pmideal$ is a better proxy for $\pideal$ than $\pminit$. 
The proof of soundness reads as:
\begin{proposition}[Provable decrease of KL]
    \label{prop:decrease}
    For any $\t\in[0,\t_{\max}]$, if one further assumes that the pair of proxies $(\pmideal,\pminit)$ is $\pideal$-admissible, 
    \begin{equation}
        \E\kl[Y|X]{\pideal}{\pbtarget_\t} \leq \E\kl[Y|X]{\pideal}{\pbinit}
    \end{equation}
\end{proposition}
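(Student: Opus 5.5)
The plan is to prove the exact decomposition displayed just before the statement, and then to sign its three terms. Write $\t$ for the scale and $\DeltaM=\mideal-\minit$ as in~\eqref{eq:delta-M}. By \Aperf, $\init(x)=\log\pinit_{Y|X}(\cdot|x)+\gamma(x)$. Hence the unlearned probits are
\[
\log\pbtarget_\t(y|x)=\log\pinit(y|x)+\t\,\DeltaM(x)_y-\Delta\gamma(x,\t),
\qquad
\Delta\gamma(x,\t)\doteq\lse\bigl(\log\pinit(\cdot|x)+\t\DeltaM(x)\bigr).
\]
The second identity uses that $\lse(\log\pinit(\cdot|x))=0$, since the posterior is already normalized. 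Also, $\mideal$ and $\minit$ are log-posteriors by~\eqref{eq:posterior}, so their $\lse$ vanishes and $\DeltaM(x)_y=\log\pmideal(y|x)-\log\pminit(y|x)$.

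Next I substitute this into $\E\kl[Y|X]{\pideal}{\pbtarget_\t}=\E\sum_y\pideal(y|X)\log\bigl(\pideal(y|X)/\pbtarget_\t(y|X)\bigr)$. Expanding the logarithm gives three pieces. The first is $\E\kl[Y|X]{\pideal}{\pbinit}$. The second is $\E[\Delta\gamma(X,\t)]$, which comes out with a plus sign because $\sum_y\pideal(y|x)=1$. The third is
\[
-\t\,\E\sum_y\pideal(y|X)\bigl(\log\pmideal-\log\pminit\bigr)=\t\bigl(\E\kl[Y|X]{\pideal}{\pmideal}-\E\kl[Y|X]{\pideal}{\pminit}\bigr),
\]
obtained by adding and subtracting $\log\pideal$. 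This recovers the identity stated before the proposition. Here the expectation is over the retain marginal $\pinit_{r,X}$, which is the natural reading for divergences from $\pideal$.

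For the conclusion, the third term is $\leq0$ directly by $\pideal$-admissibility of $(\pmideal,\pminit)$, since $\t\ge0$. The result then follows once $\E[\Delta\gamma(X,\t)]\le0$ on $[0,\t_{\max}]$.

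The main obstacle is this sign of $\E\Delta\gamma$, which is the content of App.~\ref{app:Decrease}. The plan is to use that $g(\t)\doteq\E[\Delta\gamma(X,\t)]$ is convex in $\t$, because $\lse$ is convex and composed here with a map affine in $\t$. It also satisfies $g(0)=\E\,\lse(\log\pinit(\cdot|X))=0$. A convex function vanishing at $0$ is nonpositive exactly on the interval between $0$ and its other root, if one exists. So I will take $\t_{\max}$, as defined in App.~\ref{par:normalization} by the 1-d line search over this convex function, to be the largest $\t\le1$ with $g(\t)\le0$. Then $g\le0$ on all of $[0,\t_{\max}]$ by convexity: $g(\t)\le(1-\t/\t_{\max})g(0)+(\t/\t_{\max})g(\t_{\max})\le0$.

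If instead $\t_{\max}$ is defined through a different criterion in the appendix, I would try to show it coincides with this sign-change point. For example, I would check the derivative $g'(0)=\E\sum_y\pinit(y|X)\DeltaM(X)_y$ and relate it to $\t_{\max}$. Combining the three signed terms yields $\E\kl[Y|X]{\pideal}{\pbtarget_\t}\le\E\kl[Y|X]{\pideal}{\pbinit}$.
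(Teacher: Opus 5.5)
Your three-term identity and the way you sign its terms match the paper's proof (App.~\ref{app:Decrease}). The third term is nonpositive by $\pideal$-admissibility, and the $\Delta\gamma$ term is nonpositive by convexity and the definition of $\t_{\max}$. Your handling of the $\Delta\gamma$ term is tidier than the paper's. Convexity of $\lse$ composed with a map affine in $\t$ replaces the second-derivative computation. The convex-combination argument then gives $g\le 0$ on $[0,\t_{\max}]$ directly from $g(0)=0$ and $g(\t_{\max})\le 0$, so \Agood\ is only needed to make $\t_{\max}>0$.

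The step that fails as written is your choice of outer measure. In the paper, $\E\kl[Y|X]{p}{q}$ always averages over $X\sim\pinit_X$ (Sect.~\ref{sub:notations}, Table~\ref{tab:glossaire}). Correspondingly, $\t_{\max}$ is the zero of $\t\mapsto\E_{X\sim\pinit_X}[\Delta\gamma(X;\t)]$, estimated by the line search \eqref{eq:MaxInPratice} over the full set $\sD$, not over $\sD_r$. If you average over $\pinit_{r,X}$, your $g$ is a different function, and the paper's $\t_{\max}$ says nothing about its sign on $[0,\t_{\max}]$. By \Amix, $\E=\pi_r\E_{r}+\pi_f\E_{f}$. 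On the forget region $\pmideal/\pminit$ is small on the labels that $\pinit$ favours, so $\Delta\gamma$ can be strongly negative there. Hence $\E[\Delta\gamma]\le 0$ does not imply $\E_{r}[\Delta\gamma]\le 0$. Redefining $\t_{\max}$ from your retain-marginal $g$ proves a variant with a different safety range, not the stated proposition.

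The repair costs nothing. Your identity $\kl[Y|x]{\pideal}{\pbtarget_\t}=\kl[Y|x]{\pideal}{\pbinit}+\Delta\gamma(x;\t)+\t\bigl(\kl[Y|x]{\pideal}{\pmideal}-\kl[Y|x]{\pideal}{\pminit}\bigr)$ holds pointwise in $x$, since it only uses $\sum_y\pideal(y|x)=1$. Integrate it against $\pinit_X$ and read the admissibility hypothesis under that same marginal. The middle term then becomes exactly the function whose nonpositivity on $[0,\t_{\max}]$ defines the safety range.
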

The above result is only a sufficient condition over the decreasing. However it suggests that by striving to achieve the best possible modeling of the data and create $(\pmideal,\pminit)$ an $\pideal$-admissible pair of proxy, it can only improve the resulting unlearning.

\section{Unlearning aware data modeling}
\label{sec:lda2c}
This section shows how to take into account the unlearning scenario to construct proxies $(\pminit,\pmideal)$ that are close to their respective target distributions.
This is important for two reasons: i) Assumption \Agood\ is valid, ii) the bound of Prop.~\ref{thm:mainTHM} is small. 
Our method can be applied to any type of parametric distributions.
This section illustrates it with Gaussian distributions to model $X|y$
(Eq.~\eqref{eq:posterior} recovers  the posteriors $\pmideal \sach{\cdot}{x}$ and $\pminit \sach{\cdot}{x}$), or empirical Dirac distribution to model $(X,Y)$.
\subsection{Naive models}
\LDAI\ is a first model considering one distribution per class sharing a single covariance matrix, both for $\pminit$ and $\pmideal$.
There are $2C$ expectation vectors and 1 covariance matrix.
\begin{equation}
    \label{eq:LDAI}
\pminit: X|y \sim \mathcal{N}(\mu_{y};\Sigma), \quad \pmideal: X|y \sim \mathcal{N}(\mu_{r,y};\Sigma).
\end{equation}
\QDAI\ is the variant using one covariance per class. There are $2C$ expectation vectors and $C$ covariance matrices:
\begin{equation}
    \label{eq:QDAI}
\pminit: X|y \sim \mathcal{N}(\mu_{y};\Sigma_y), \quad \pmideal: X|y \sim \mathcal{N}(\mu_{r,y};\Sigma_y).
\end{equation}
\DiracI\ is an alternative based on the empirical distribution, \ie\ a sum of Diracs
\begin{equation}
    \label{eq:DiracI}
    \pminit: (X,Y) \sim \sD, \quad \pmideal: (X,Y) \sim \sD_r.
\end{equation}
App.~\ref{app:exact_dirac} shows that this creates an unlearned model, independent of $\eta>0$, s.t. $\pbtarget(x) = \pbinit(x)$ if $x\notin\sD_f$, and
for the forget set:
\begin{equation}
    {\pbtarget(x)}_c =
    \begin{cases}
        0 & c = y(x), \\
        \pbinit(x)_c\,/\,(1-\pbinit(x)_{y(x)}) & c \neq y(x).
    \end{cases}
    \label{eq:dd_target_softmax}
\end{equation}
This is useless in practice.
Yet, the KL distillation drives the unlearned NN \emph{towards} a total annihilation of the forget set.
The global effect is a strong decrease of the probit related to $y(x)$ for the forget data.
This is interesting for class-wise unlearning but dramatic for the subclass forgetting (see Sect.~\ref{sec:exp_setup}).

\subsection{Mixture models}
Better modeling takes into account the data from the forget set.
Following assumption \Amix, for each class $y$, we infer the distribution of the forget set $\sD_f(y)$ with a data model $\pminit_f\sach{x}{y}$. 
For each class $y$, the rules of total probabilities gives:
\begin{equation}
\label{eq:lda2c-mixture}
    \pminit(x \mid y) = (1-\pi_f(y)) \pmideal \sach{x}{y} + \pi_f(y) \pminit_{f}\sach{x}{y},
\end{equation}
where $\pi_f(y) = \nicefrac{|\sD_f(y)|}{|\sD(y)|}$
is the true intra-class proportion of the forget set.
This proxy is more informative yielding an update $\DeltaM(x)$ if and only if there exists a natural spread between the forget and the retain set at the raw data level.
It precisely checks if there is something to unlearn (or not). 

\LDAII\ considers one distribution per class and per state $s\in\{r,f\}$ with a shared covariance per state.
There are $2C$ expectation vectors and $2$ covariance matrices:
\begin{equation}
    \label{eq:LDAII}
    \pmideal: X|y \sim \mathcal{N}(\mu_{r,y};\Sigma_r), \quad \pminit_f: X|y \sim \mathcal{N}(\mu_{f,y};\Sigma_f).
\end{equation}
\QDAII\ is the variant with one covariance per class and state.
There are $2C$ expectation vectors and $2C$ covariance matrices:
\begin{equation}
    \label{eq:QDAII}
    \pmideal: X|y \sim \mathcal{N}(\mu_{r,y};\Sigma_{r,y}), \quad \pminit_f: X|y \sim \mathcal{N}(\mu_{f,y};\Sigma_{f,y}).
\end{equation}
The method \DiracI\ stays unaffected by this awareness.

\subsection{Doubling labels}
Another refinement 
artificially considers the variable $(y,s)\in\llbracket C \rrbracket\times\{r,f\}$ as a label itself and produces a mixture on these $2C$ labels. Doing so, the produced proxy on the initial data intrinsically estimate the quantity $\pminit\sach{s}{x}$ which helps the unlearning.
To our knowledge, no other unlearning work lift the labeled data $(x,y)$ in upper dimension while doubling the label dimension in order to have a richer mixture.
This subtle shift results in a final mixture on the posterior rather than on $X\mid y$.
App.~\ref{app:exact_dirac_2c} details the projection back to dimension $C$ to update the initial model.

\LDAIJ\ applies a unique LDA  on $(y,s)\in\llbracket C \rrbracket\times\{r,f\}$.
There are thus $2C$ expectation vectors and $1$ covariance matrix, whose estimation is more robust than for \LDAII. 
\begin{equation}
    \label{eq:LDAIJ}
    \pminit: X|y,s \sim \mathcal{N}(\mu_{s,y};\Sigma), \quad \pmideal: Y|x \sim \pminit(Y|x,r).
\end{equation}
\QDAI\ remains unaffected by double labels.

\DiracIJ\
\begin{equation}
    \label{eq:DiracIJ}
    \pminit: (X,Y,s) \sim \sD\times \{r,f\}, \quad \pmideal: Y|x \sim \pminit(Y|x,r).
\end{equation}
App.~\ref{app:exact_dirac} shows that this creates an unlearned model, independent of $\eta>0$, s.t. $\pbtarget(x) = \pbinit(x)$ if $x\notin\sD_f$, and
for the forget set:
\begin{equation}
    {\punlearn(x)}_c = \frac{|\sD_r(y(x))|}{|\sD(y(x))|} \pbinit(x)_c + \frac{|\sD_f(y(x))|}{|\sD(y(x))|}
    \begin{cases}
        0 & c = y(x), \\
        \pbinit(x)_c\,/\,(1-\pbinit(x)_{y(x)}) & c \neq y(x).
    \end{cases}
\end{equation}
This avoids the total canceling in~\eqref{eq:dd_target_softmax}. Again, this unlearned model is useless in practice, but it plays the important role of the teacher in the KL distillation. 
The effect is a slight decrease of the probit related to $y(x)$ for the forget data. 

\section{Experiments}
\label{sec:experiments}

We evaluate our unlearning protocol along three axes:
\textit{(i)} it fits a well-defined unlearning signal leading to substantial improvements (Sect.~\ref{sec:exp_theory}),
\textit{(ii)}~which proxy fits which
forgetting scenario or which architecture (Sect.~\ref{sec:exp_theory}),
and \textit{(iii)}~how the resulting unlearned models compare to the standard SOTA baselines, and the
ideal NN trained from scratch (Sect.~\ref{sec:exp_benchmark}). This section reports only a representative subset of our experimental results. A non-exhaustive ablation study is provided in App.~\ref{app:experiments_aux}. The full set of results is released as a supplementary zip archive, documented in App.~\ref{app:experiments_aux}.

\subsection{Setup}
\label{sec:exp_setup}
\paragraph{Datasets and architectures.}
CIFAR-10 ($C=10$, 2 superclasses)~\citep{Krizhevsky2009cifar} is
the primary testbed. CIFAR-100 ($C=100$, 20 superclasses)~\citep{golatkar2020eternal} probes
scaling to a denser label space.
We consider 4 architectures.
The first three use \textbf{DINOv2-Small}~\citep{oquab2023dinov2} as a frozen backbone foundation model, completed with
the heads: \textbf{linear} ($384\to C$),
\textbf{mlp1} ($384\to 256\to C$),
or \textbf{mlp2} ($384\to 256 \to 256\to C$).
The fourth architecture is a \textbf{ResNet-18} where our method is applied to low-dimension projected features (see App.~\ref{app:random_projection}).

\paragraph{Three forgetting scenarios.}
We consider the following scenarios.
\emph{\random}: with $\sD_f$ sampled uniformly from $\sD$ 
\emph{\class} where $\sD_f$ is an entire class.
\emph{\subclass} for CIFAR-10 consist in training a binary classifier ($C=2$, vehicle vs. animal)  and forget one subclass per class at a time. The same holds for CIFAR-100 with $C=20$  
    classes grouping each five subclasses. 
    This is the most realistic scenario since the forget set is structured
    within the classes. 
    This scenario, as expected~\citet{mavrothalassitis2026ascent}, causes gradient ascent baselines to systematically
    fail.

\paragraph{Unlearning methods.} We used the five models presented in Sect.~\ref{sec:lda2c}.
Note that class covariance matrices in \QDAI\ and \QDAII\ are approximated through their diagonal to obtain a fast inversion process.
Six gradient-based baselines covering the standard taxonomy:
\textbf{FT} (retain fine-tuning),
\textbf{GA}~\citep{graves2021amnesiac}, \textbf{GA+FT} (one ascent followed by fine-tuning),
\textbf{RL+FT}~\citep{chen2023boundary} (random labelling the forget set),
\textbf{SCRUB}~\citep{Kurmanji2023scrub} (min-max teacher-student),
\textbf{SalUn}~\citep{Fan2024salun} (gradient ascent masking and RL+FT).
The from-scratch ideal \textbf{Retrain} and the initial~\textbf{Base} NNs are
reported as references.

\paragraph{Protocol.}
Tables report \textbf{best epoch} by minimum $\mathrm{KL}_f$ on
the training data. This selection rule is uniform across methods and
maximally favours the SOTA baselines since it lets them back off before destroying the NN's behavior like SCRUB.
Hence Fig.~\ref{fig:hero-evolution} sheds some light on the reading of the benchmark table~\ref{tab:hero-benchmark}. All presented results are means over 5 seeds, with standard deviation bands in Fig. \ref{fig:hero-evolution}.

\paragraph{Metrics.}
\textit{(i)} $\Acc_f, \Acc_t$ are accuracies
on forget / full test sets,
\textit{(ii)} $\KL_f, \KL_t$ are expectations of $\kl[Y\mid X]{\pideal}{\punlearn}$
estimated on the same sets, where $\pideal$ is replaced by the ideal NN $\init_r$ learned from scratch. $\KL_{\LAST}$ reports the $\KL_{\TEST}$ after the last epoch.
\textit{(iii)} In place of an empirical MIA score we report the EER attacker bound
$N(\alpha)$ derived in Sect.~\ref{app:stein} (also visualised in
Fig.~\ref{fig:hero-evolution}, panel $(b)$). Shadow-model-free empirical MIAs are known to provide
unstable signal in the unlearning regime~\citep{Hayes2024ulira}; the
Stein bound bypasses this fragility by reading attacker capability
directly from $\KL_f$. The raw fast empirical MIA scores were binary with $1$ for the initial model and $0$ for the retrained and every unlearning methods we tested. Those results remain available in the released results file we provide for inspection. \textit{(iv)} $\RTE$ is the run time efficiency (accounting proxy fit plus distillation for our methods).
$\KL$ to the retrained model is the primary quality metric: it captures directly the distributional alignment.
$\Acc_f$, although widely reported, cannot diagnose this.

\subsection{Experimental validation of our method}
\label{sec:exp_theory}
\paragraph{Convex curves and $\t_{\max}$.}
Figure~\ref{fig:h_eta} verifies two claims about function $\Delta\gamma$ approximated by $h$~\eqref{eq:MaxInPratice}:
it is convex and admits a unique zero $\t_{\max}$ in $(0,1]$.
Its value depends a lot on the scenario and the choice of proxies.
Of note, $\t_{\max}$ is not an indicator of an effective unlearning as measured in nats with $\KL_t$ or $\KL_f$ in Tab.~\ref{tab:distillation}.
For example, $\LDAII$ has small $\t_{\max}$ values while being a quite effective method.
What matters is indeed the scale of the unlearning signal $\t_{\max}\DeltaM$.
Prop.~\ref{prop:tight-bound} backs up this observation showing that the efficacy also depends on the initial divergence and the model shift. 
Adapting the value of $\t_{\max}$ is thus of utmost importance.

Table~\ref{tab:reco} shows that the complexity of the proxies improves the rate at which the assumption $\AgoodP$ is empirically verified.
A failure implies that $\t_{\max}$ is set to $0$ and there is no unlearning to perform.
Two reasons explain this case.
There is nothing to unlearn because $\pinit=\pinit_r$.
This is often the case with the unconsistent $\random$ scenario which randomly forgets few typical data (not outliers).
$\LDAI$ and $\QDAI$ fail: there is nothing to unlearn in terms of first and second order statistics.
The proxy family is too coarse to capture subtle differences between these distributions. 
The gain of our doubling label over the mixture modeling is seen in the subclass scenario for resnet:
$\LDAII$ has one fail over the $5$ seeds where, then it does not create unlearning signal.

\begin{figure}[b]
  \centering
  \includegraphics[width=0.98\columnwidth]{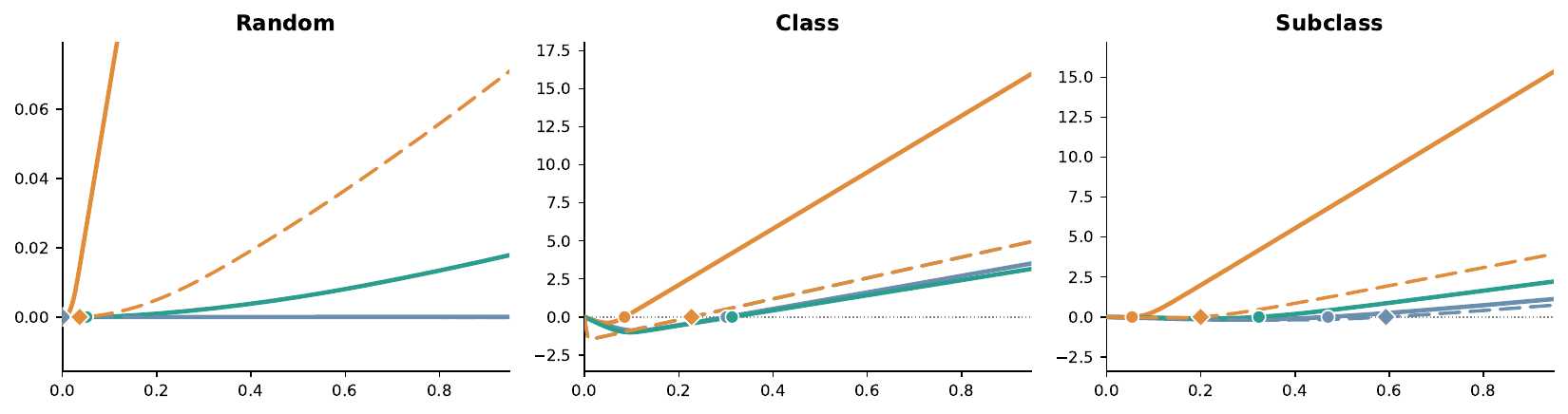}
  \caption{Convexity of $h(\eta)$ on DINOv2+mlp1/CIFAR-10 for 3 forgetting scenarios and 5 proxy models:
  \textcolor{CadetBlue}{\LDAI~\eqref{eq:LDAI}, \QDAI~\eqref{eq:QDAI} (dashed)},
  \textcolor{BurntOrange}{\LDAII~\eqref{eq:LDAII}, \QDAII~\eqref{eq:QDAII} (dashed)},
    \textcolor{teal}{\LDAIJ~\eqref{eq:LDAIJ}}.
  }
  \label{fig:h_eta}
\end{figure}

\begin{table}[t]
  \centering
  \small
  \caption{Recommendations on CIFAR10.
  Each cell reports (top) $\t_{\max}$ --mean$\pm$std over 5 runs-- and (bottom) the
  percentage of runs where $\AgoodP$ holds}
  \label{tab:reco}
  \begin{tabular}{lccccc}
    \toprule
     & \LDAI & \LDAII & \LDAIJ & \QDAI & \QDAII \\
    \midrule
    \multicolumn{6}{l}{\textit{Across scenarios --- arch \texttt{dinov2-mlp1}}} \\
  \class         & \shortstack{$0.29 \pm 0.00$\\$\boldsymbol{100\%}$} & \shortstack{$0.08 \pm 0.00$\\$\boldsymbol{100\%}$} & \shortstack{$0.30 \pm 0.00$\\$\boldsymbol{100\%}$} & \shortstack{$0.22 \pm 0.00$\\$\boldsymbol{100\%}$} & \shortstack{$0.22 \pm 0.00$\\$\boldsymbol{100\%}$} \\
  \subclass      & \shortstack{$0.46 \pm 0.01$\\$\boldsymbol{100\%}$} & \shortstack{$0.05 \pm 0.00$\\$\boldsymbol{100\%}$} & \shortstack{$0.32 \pm 0.00$\\$\boldsymbol{100\%}$} & \shortstack{$0.58 \pm 0.01$\\$\boldsymbol{100\%}$} & \shortstack{$0.20 \pm 0.00$\\$\boldsymbol{100\%}$} \\
  \random        & \shortstack{$0.40 \pm 0.49$\\$40\%$} & \shortstack{$0.00 \pm 0.00$\\$80\%$} & \shortstack{$0.07 \pm 0.03$\\$\boldsymbol{100\%}$} & \shortstack{$0.60 \pm 0.49$\\$60\%$} & \shortstack{$0.03 \pm 0.02$\\$\boldsymbol{100\%}$} \\
    \midrule
    \multicolumn{6}{l}{\textit{Across architectures --- scenario \subclass\ (airplane)}} \\
  Linear         & \shortstack{$0.40 \pm 0.01$\\$\boldsymbol{100\%}$} & \shortstack{$0.03 \pm 0.00$\\$\boldsymbol{100\%}$} & \shortstack{$0.26 \pm 0.00$\\$\boldsymbol{100\%}$} & \shortstack{$0.51 \pm 0.01$\\$\boldsymbol{100\%}$} & \shortstack{$0.16 \pm 0.00$\\$\boldsymbol{100\%}$} \\
  MLP-1          & \shortstack{$0.46 \pm 0.01$\\$\boldsymbol{100\%}$} & \shortstack{$0.05 \pm 0.00$\\$\boldsymbol{100\%}$} & \shortstack{$0.32 \pm 0.00$\\$\boldsymbol{100\%}$} & \shortstack{$0.58 \pm 0.01$\\$\boldsymbol{100\%}$} & \shortstack{$0.20 \pm 0.00$\\$\boldsymbol{100\%}$} \\
  MLP-2          & \shortstack{$0.43 \pm 0.01$\\$\boldsymbol{100\%}$} & \shortstack{$0.06 \pm 0.00$\\$\boldsymbol{100\%}$} & \shortstack{$0.30 \pm 0.01$\\$\boldsymbol{100\%}$} & \shortstack{$0.54 \pm 0.02$\\$\boldsymbol{100\%}$} & \shortstack{$0.18 \pm 0.01$\\$\boldsymbol{100\%}$} \\
  ResNet-18      & \shortstack{$0.43 \pm 0.12$\\$\boldsymbol{100\%}$} & \shortstack{$0.02 \pm 0.01$\\$80\%$} & \shortstack{$0.32 \pm 0.11$\\$\boldsymbol{100\%}$} & \shortstack{$0.24 \pm 0.06$\\$\boldsymbol{100\%}$} & \shortstack{$0.09 \pm 0.01$\\$\boldsymbol{100\%}$} \\
    \bottomrule
  \end{tabular}
\end{table}

\paragraph{About the distillation.}
Table~\ref{tab:distillation} shows that, in every case, there is overall no problem in distilling the unlearning signal~\eqref{eq:lda2c-lstsq-grad}.
Surprisingly, the distilled NN $\pbunlearn$ gets slightly better results than $\pbtarget$.
This is due to the fact that the KL divergences are measured from the ideal NN $\pbinit_r$ since $\pinit_r$ is not available.
This gives an advantage to $\pbunlearn$ as it shares the same NN architecture.
This advantage is not unfair because getting closer to the ideal NN $\pbinit_r$ is the ultimate goal of machine unlearning.

\begin{table}[t]
  \centering
  \small
  \caption{Comparison $\pbtarget$ \textit{vs.} $\pbunlearn$ NN on CIFAR10,
  architecture \texttt{dinov2-mlp1}.
  Each cell reports $\KL_{\TEST}$ (top) and $\KL_{\FORGET}$ (bottom) in nats,
  mean$\pm$std over 5 seeds.}
  \label{tab:distillation}
  \begin{tabular}{lccccc}
    \toprule
      & \LDAII & \LDAIJ & \QDAII & \DiracI & \DiracIJ \\
    \midrule
    \multicolumn{6}{l}{\textit{\class\ (airplane)}} \\
  $\pbtarget$  & \shortstack{$0.07 \pm 0.01$\\$0.72 \pm 0.12$} & \shortstack{$\boldsymbol{0.06 \pm 0.01}$\\$\boldsymbol{0.63 \pm 0.11}$} & \shortstack{$0.08 \pm 0.01$\\$0.80 \pm 0.14$} & -- & \shortstack{$\boldsymbol{0.06 \pm 0.01}$\\$\boldsymbol{0.62 \pm 0.11}$} \\
    \cmidrule(lr){1-6}
  $\pbunlearn$ & \shortstack{$\boldsymbol{0.07 \pm 0.01}$\\$\boldsymbol{0.52 \pm 0.08}$} & \shortstack{$0.07 \pm 0.01$\\$0.58 \pm 0.09$} & \shortstack{$\boldsymbol{0.07 \pm 0.01}$\\$\boldsymbol{0.57 \pm 0.08}$} & \shortstack{$0.07 \pm 0.01$\\$0.58 \pm 0.09$} & \shortstack{$0.07 \pm 0.01$\\$0.58 \pm 0.11$} \\
    \midrule
    \multicolumn{6}{l}{\textit{\subclass\ (airplane)}} \\
  $\pbtarget$  & \shortstack{$0.11 \pm 0.00$\\$1.1 \pm 0.03$} & \shortstack{$\boldsymbol{0.05 \pm 0.00}$\\$\boldsymbol{0.46 \pm 0.01}$} & \shortstack{$0.05 \pm 0.00$\\$0.54 \pm 0.02$} & --& \shortstack{$\boldsymbol{0.05 \pm 0.00}$\\$\boldsymbol{0.51 \pm 0.00}$} \\
    \cmidrule(lr){1-6}
  $\pbunlearn$ & \shortstack{$0.09 \pm 0.01$\\$0.87 \pm 0.05$} & \shortstack{$\boldsymbol{0.03 \pm 0.00}$\\$\boldsymbol{0.29 \pm 0.02}$} & \shortstack{$\boldsymbol{0.04 \pm 0.00}$\\$\boldsymbol{0.42 \pm 0.03}$} & \shortstack{$0.84 \pm 0.05$\\$8.8 \pm 0.46$} & \shortstack{$0.05 \pm 0.00$\\$0.48 \pm 0.01$} \\
    \midrule
    \multicolumn{6}{l}{\textit{\random\ (50 samples)}} \\
  $\pbtarget$  & \shortstack{$0.00 \pm 0.00$\\$0.15 \pm 0.11$} & \shortstack{$\boldsymbol{0.00 \pm 0.00}$\\$\boldsymbol{0.09 \pm 0.07}$} & \shortstack{$\boldsymbol{0.00 \pm 0.00}$\\$\boldsymbol{0.13 \pm 0.09}$} & -- & \shortstack{$0.00 \pm 0.00$\\$0.15 \pm 0.12$} \\
    \cmidrule(lr){1-6}
  $\pbunlearn$ & \shortstack{$0.03 \pm 0.01$\\$0.07 \pm 0.08$} & \shortstack{$0.03 \pm 0.00$\\$\boldsymbol{0.07 \pm 0.05}$} & \shortstack{$0.02 \pm 0.01$\\$0.09 \pm 0.08$} & \shortstack{$\boldsymbol{0.02 \pm 0.00}$\\$\boldsymbol{0.06 \pm 0.04}$} & \shortstack{$\boldsymbol{0.02 \pm 0.00}$\\$0.08 \pm 0.07$} \\
    \bottomrule
  \end{tabular}
\end{table}

\subsection{Benchmark}
\label{sec:exp_benchmark}
This section focuses on $\subclass$ scenario because it is more challenging than $\class$, the other scenarios are deferred to App.~\ref{app:experiments_aux}.
The other methods we introduced in Sect.~\ref{sec:lda2c} are not part of the following benchmark because we cherry picked the presented here $\LDAIJ$ and $\DiracIJ$ over the recommendations tables. Moreover we show in Prop.~\ref{prop:homo-hypo} in App.~\ref{app:homoscedastic} that the different $\KL$ being taken from $\lda$ or $\qda$ proxies are equivalent up to an additionnal homoscedastic hypothesis cost. 
The entire database of results is released as a supplementary zip folder detailed in App~\ref{app:experiments_aux}. 
\begin{table*}[t]
  \centering
  \small
  \caption{Benchmark on CIFAR10 / \texttt{dinov2-mlp1}, scenario
  {\textit{SUBCLASS}} (airplane). Mean$\pm$std over 5 seeds.
  $\RTE$ is normalised by the $\RTE$ retrain. \textbf{Bold}: best means closest to
  $\pbideal$ for $\Acc_{f}$}
  \label{tab:hero-benchmark}
  \begin{tabular}{lcccccc}
    \toprule
     & $\KL_{\TEST}$ (nats) & $\KL_{\LAST}$ (nats) & $\KL_{\FORGET}$ (nats) & $\Acc_{\TEST}$ (\%) & $\Acc_{\FORGET}$ (\%) & $\RTE$ (\%) \\
    \midrule
  $\pbinit$      & $0.48 \pm 0.01$ & -- & $4.8 \pm 0.06$ & $99.7 \pm 0.0$ & $100.0 \pm 0.0$ & -- \\
  $\pbideal$     & $0.00 \pm 0.00$ & -- & $0.00 \pm 0.00$ & $95.8 \pm 0.0$ & $61.6 \pm 0.3$ & $100$ (ref) \\
    \midrule
  \LDAIJ         & $\boldsymbol{0.03 \pm 0.00}$ & $\boldsymbol{0.04 \pm 0.00}$ & $\boldsymbol{0.29 \pm 0.02}$ & $95.8 \pm 0.1$ & $\boldsymbol{58.5 \pm 0.7}$ & $4.9 \pm 0.07$ \\
  \DiracIJ       & $0.05 \pm 0.00$ & $0.05 \pm 0.00$ & $0.48 \pm 0.01$ & $\boldsymbol{99.5 \pm 0.1}$ & $98.8 \pm 0.7$ & $4.9 \pm 0.09$ \\
    \midrule
  SCRUB          & $0.08 \pm 0.01$ & $4.7 \pm 0.09$ & $0.88 \pm 0.11$ & $95.1 \pm 0.5$ & $55.7 \pm 5.0$ & $6.4 \pm 0.12$ \\
  SalUn          & $1.1 \pm 0.05$ & $1.1 \pm 0.05$ & $12 \pm 0.56$ & $89.5 \pm 0.0$ & $0.2 \pm 0.0$ & $4.4 \pm 0.15$ \\
  RL+FT          & $0.83 \pm 0.05$ & $0.90 \pm 0.03$ & $8.6 \pm 0.54$ & $89.7 \pm 0.1$ & $2.3 \pm 0.4$ & $4.3 \pm 0.10$ \\
  GA+FT          & $1.3 \pm 0.11$ & $1.3 \pm 0.11$ & $14 \pm 1.2$ & $91.2 \pm 0.1$ & $13.7 \pm 0.6$ & $19 \pm 0.30$ \\
  FT             & $0.17 \pm 0.02$ & $0.20 \pm 0.04$ & $1.7 \pm 0.21$ & $98.8 \pm 0.2$ & $91.2 \pm 2.1$ & $19 \pm 0.44$ \\
  GA             & $9.2 \pm 0.19$ & $858 \pm 33$ & $28 \pm 0.90$ & $60.4 \pm 0.1$ & $0.0 \pm 0.0$ & $\boldsymbol{2.2 \pm 0.09}$ \\
    \bottomrule
  \end{tabular}
\end{table*}

\paragraph{Criticism of the SOTA baselines.}
Even if through the lense of accuracies, SOTA methods seems to perform unlearning, Figure~\ref{fig:hero-evolution} shows a more complex reality.
SCRUB is not stable in term of KL by design: the gradient ascent to move away from the teacher over the forget set implies a catastrophic behavior.
It reaches good statistics in Table~\ref{tab:hero-benchmark} because we retain the best KL over the epochs.
In practice, without this criterion, its unstable behavior makes it a poor baseline.
The other methods reach a ceiling because they quickly over-unlearned.
SalUn, GA, RL+FT, and GA+FT destroys the forget accuracy.
They cause catastrophic forgetting and without a clear stopping criterion, this instability makes them useless in practice.
Fig.~\ref{fig:hero-evolution} shows that FT goes in the right direction but matching the ideal NN $\pbinit_r$ takes more time than retraining the model from scratch.

\begin{figure}[b]
  \centering
  \includegraphics[width=\columnwidth]{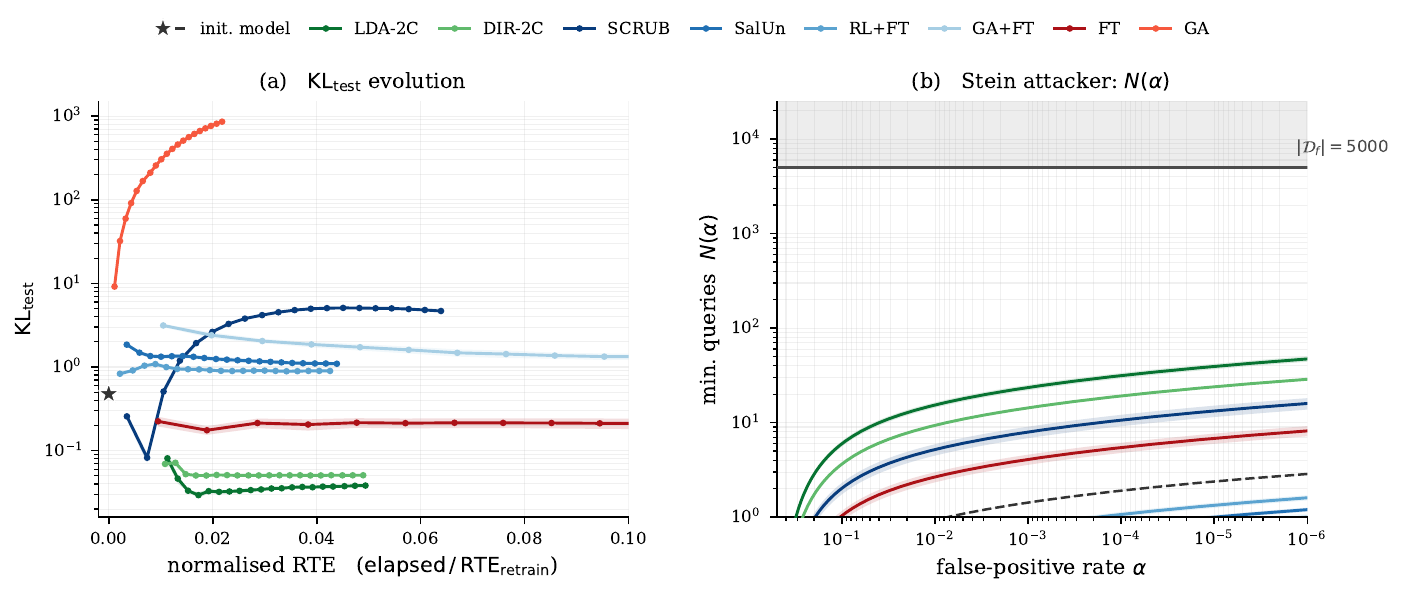}
  \caption{(a) $\KL_t$ as a function of 
  RTE normalised by the
  Retrain RTE on CIFAR-10 / DINOv2 + MLP-1, scenario \subclass\
  (airplane). (b) Number of queries an attacker would need to distinguish the unlearned model from the retrained model up to an $\alpha$ rate of false positive.}
  \label{fig:hero-evolution}
\end{figure}

\paragraph{Our method}
spends an overhead to compute the unlearning signal, but the distillation starts producing good NNs from the beginning and quickly converges in Fig.~\ref{fig:hero-evolution}.
Table~\ref{tab:hero-benchmark} shows that distilling from our simplest method $\DiracIJ$ already surpasses SOTA baselines in $\Acc_t$, $\KL_t$ while on par in $\RTE$.
Its main pitfall is that it remains too confident on forget data as its $\Acc_f$ is bigger than the accuracy of the ideal NN $\pbinit_r$ in Table~\ref{tab:hero-benchmark}.
$\LDAIJ$ is a richer modeling accounting for the underlying feature distribution.
Its distilling outperforms $\DiracIJ$ and almost perfectly mimics $\pbinit_r$.

\section{Conclusion: advantages and limitations}
\label{sec:conclusion}

Our theoretical framework introduces $\pbtarget_{\t} = \pbinit + \t\,\DeltaM$, a closed-form unlearning operator
 emerging from an explicit constrained optimization problem (App.~\ref{app:justification}). The extracted unlearning signal is derived from any data proxies. We derive bounds (tight in Prop.~\ref{prop:tight-bound} and pedagogical in Prop.~\ref{thm:mainTHM}) on the KL divergence to the retrained model under a
verifiable admissibility criterion~\AgoodP. 
The decreasing property in Prop.~\ref{prop:decrease} is a theoretical guarantee that cannot be verified before the unlearning process, contrary to the previous bounds. 
Three concrete advantages emerge: \emph{(i)} the procedure unlearns
black-box networks outputting logits, \emph{(ii)} the
distilled NNs systematically beats the analytic target itself
(Tab.~\ref{tab:distillation}), and \emph{(iii)} the Stein bound
(App.~\ref{app:stein}) translates $\KL_f$ into a minimum number of
attacker queries, replacing fragile empirical MIA scores with an
interpretable, closed-form privacy guarantee.

We see the following limitations.
\emph{(i)} The protocol assumes the data controller still holds
$\sD_r$ to fit $\pmideal$. As a countermeasure, $\DiracIJ$ only needs $\sD_f$ and the proportions $\pi_f$ label-wise.
Also, the distillation may use a data set ${\sD_r}^\prime\cup\sD_f$ different than $\sD$. 
\emph{(ii)} In high feature dimension, as for ResNet, estimating $\lda$/$\qda$ covariances become ill-conditioned.
As a countermeasure, we resort to dimension reduction by projecting features onto a small subspace and restrict covariances to diagonal matrices (App.~\ref{app:random_projection}).
\emph{(iii)} Admissibility~\AgoodP\ is checked on the empirical $\pbinit$ before unlearning; admissibility on the true
$\pinit$ therefore holds only up to $\bigo(\epsilon)$ of good training assumption~\Aperf. \emph{(iv)} The decrease property~\ref{prop:decrease} cannot be verified before unlearning because the unlearner do not have access to $\pideal$ to verify that $(\pmideal,\pminit)$ is $\pideal$-admissible.
\emph{(v)} Our analysis proposes a short list of possible proxies without any claim about their optimality.

\newpage

\newpage
\bibliographystyle{abbrvnat}
\bibliography{references}

\appendix
\newpage
\appendix

\section{Machine unlearning overview}
\label{app:machine_unlearning}

This appendix expands the related work presented in Sect.~\ref{sec:related}.

\subsection{Closed-form modeling}
Closed-form modeling of class-conditional densities has a rich history in statistics.
Among these, LDA is the simplest closed-form maximum-likelihood classifier under class-conditional Gaussian data with a shared covariance \citep{Mclachlan2004discriminant} (see App.~\ref{app:homoscedastic}).
Linear probing, which freezes a pretrained encoder and trains only the final linear layer, is a standard diagnostic for representation quality \citep{Alain2016probes,Kornblith2019transfer}.
\citet{Kumar2022lpft} show that full fine-tuning can distort strong pretrained features, reducing out-of-distribution accuracy, while last-layer probing preserves feature geometry.
Their findings have motivated last-layer strategies in transfer learning.
The neural collapse phenomenon \citep{Papyan2020collapse} further justifies this: near convergence, within-class variability is small and approximately isotropic across classes, a regime in where a LDA head is nearly Bayes-optimal.

\subsection{Machine unlearning}
The literature is now broad enough that several survey papers
cover the field from complementary angles~\citep{Nguyen2022survey,Xu2024surveyTaxonomy,Liu2024federatedSurvey,Liu2025rethinkingLLM}.

\paragraph{Motivation and regulatory context.}
The renewed interest in machine unlearning is driven by data-protection
regulations such as the EU General Data Protection Regulation (GDPR) and its
``right to be forgotten'' \citep{Voigt2017gdpr}, 
together with the California Consumer Privacy Act (CCPA) and similar
frameworks worldwide. Beyond legal compliance, unlearning has been
identified as a tool for removing the influence of poisoned, biased,
mislabelled, copyrighted or otherwise undesirable training samples
\citep{cao2015machine,Bourtoule2021SISA,Goel2024corrective}, 
and as a primitive for AI safety (e.g., suppressing hazardous knowledge in
foundation models) \citep{Li2024wmdp}. 

\paragraph{Exact unlearning.}
Machine unlearning was formalised by \citet{cao2015machine}, who introduced
the term and proposed summation-form algorithms that allow data removal
without full retraining. The gold standard remains exact removal, in
which the unlearned model is statistically indistinguishable from a model
retrained from scratch on the retained set
\citep{cao2015machine,Bourtoule2021SISA,Sekhari2021remember,Ginart2019makingAIforget}.
\citet{Ginart2019makingAIforget} initiated the modern study of approximate
deletion in the specific case of $k$-means clustering, providing the first
formal definition of deletion efficiency. \citet{Bourtoule2021SISA} achieve
exact unlearning in deep models via SISA training, which partitions the data
into shards so that only the affected shard needs to be retrained on
deletion. Although SISA reduces retraining cost by isolating deletions, it
requires storing intermediate checkpoints for every shard, and at scale this
storage overhead and the accuracy loss caused by aggressive sharding make it
impractical for large deep networks.

\paragraph{Gradient-based approximate unlearning.}
To avoid retraining, the dominant paradigm relies on gradient-based weight
updates. Gradient ascent on the forget-set loss is the simplest baseline and
is used as a primitive in several methods
\citep{graves2021amnesiac,chen2023boundary,Thudi2022unrolling}. 
\citet{Thudi2022unrolling} provide an unrolled-SGD analysis of the verification
error and show how a training-time penalty can ease subsequent unlearning.
Knowledge-distillation variants jointly enforce forgetting and retention:
SCRUB \citep{Kurmanji2023scrub} trains a student that minimises the KL
divergence to the original model on the retained data while maximising the
loss on the forget set, and \citet{Chundawat2023badteacher} use an
incompetent teacher whose soft labels guide the student to suppress class
knowledge. SalUn \citep{Fan2024salun} restricts updates to the weights
identified as most responsible for the forgotten data via gradient-based
saliency, reaching state-of-the-art results on image classification and
generation, and \citet{Jia2023sparseUnlearn} show that model sparsification 
prior to unlearning closes the gap between exact and approximate unlearning
across multiple algorithms. \citet{Tarun2023fast} propose UNSIR, an
impair repair scheme in which an error-maximising noise first degrades
class knowledge and a repair step then restores retain-set accuracy, and
\citet{Neel2021descentToDelete} provide gradient-descent-based deletion
algorithms that handle adversarial sequences of deletion requests for convex
models. Beyond their empirical strengths, these methods share a structural
limitation: the update follows an optimisation trajectory that admits no
closed-form characterisation in terms of the data distribution or the
resulting predictive shift.

\paragraph{Fisher information and synaptic dampening.}
Fisher-based methods identify which parameters encode the forgotten data and
selectively modify them. \citet{golatkar2020eternal} derive a Fisher
Information Matrix (FIM) scrubbing procedure that perturbs weights in order
to remove cohort information, and \citet{golatkar2020forgettingOutsideBox} 
extend this analysis via a Neural Tangent Kernel linearisation that better
handles the null-space of the weights and bounds the information accessible
from input output observations. \citet{Foster2024ssd} propose Selective
Synaptic Dampening (SSD), a retrain-free two-step method that uses the FIMs
of the training and forget sets to select and dampen the parameters most
important to the forget set. SSD is computationally light and does not
require retained data at unlearning time, but its forgetting guarantee is
coupled to the quality of the diagonal FIM approximation, which does not
expose the structure of the data distribution.
\citet{Mehta2022deepUnlearningHessian} introduce L-CODEC, a randomised 
conditional-independence test that selects a Markov blanket of parameters,
making Newton-style updates tractable for very large vision and NLP models.

\paragraph{Influence functions and certified removal.}
Influence functions \citep{Koh2017influence}, borrowed from robust
statistics, approximate the leave-one-out parameter shift via a single
Newton step on the training loss. \citet{guo2020certified} exploit this
connection to derive certified data removal for $\ell_2$-regularised linear
models: a calibrated noise mechanism masks the residual Newton-step error,
yielding an $(\varepsilon,\delta)$ differential-privacy certificate.
\citet{Sekhari2021remember} extend the theoretical analysis by bounding
generalisation after unlearning, and \citet{Izzo2021approximateDeletion} 
propose a projection-based deletion algorithm whose cost is linear in
the feature dimension and independent of the dataset size for linear and
logistic models. \citet{Warnecke2023featuresLabels} extend influence-based 
unlearning to the deletion of features and labels, providing closed-form
updates and certified guarantees for strongly convex losses.
Beyond linear models, extending influence-function guarantees to non-convex
deep networks remains challenging \citep{Koh2017influence,guo2020certified}:
the Hessian approximation is hard to control, and no closed-form bound on
the predictive shift in terms of the data distribution is available. More
recent work such as \citet{Chien2024certified} uses Langevin dynamics to
obtain certified unlearning without convexity assumptions, but the resulting
bounds still ignore the class-conditional structure of the data.

\paragraph{Weight-space and Bayesian methods.}
A complementary line of work reasons about model uncertainty in parameter
space rather than via optimisation trajectories.
\citet{Golatkar2021mixed} extend FIM scrubbing to a mixed-privacy setting by
targeting only non-core data, and \citet{Peste2021ssse} scale FIM-based
scrubbing to deep networks via rank-one Hessian approximations. The Laplace
approximation \citep{Daxberger2021laplace} provides a principled Gaussian
posterior over weights and has been adopted in Bayesian unlearning to
characterise the uncertainty introduced by data deletion
\citep{Nguyen2020variationalBayesianUnlearning}. 
All of these methods operate entirely in weight space: their error
guarantees are coupled to Hessian-approximation quality and do not decompose
in terms of the quality of the class-conditional data model.

\paragraph{Geometry-based and training-free unlearning.}
A recent direction exploits the geometric structure of trained
representations to obtain closed-form, training-free corrections.
\citet{Le2025pour} build on Neural Collapse theory \citep{Papyan2020collapse}
to show that the orthogonal projection of a simplex Equiangular Tight Frame
(ETF) onto the subspace of retained classes remains an ETF in the
lower-dimensional space, which yields a provably optimal forgetting
operator. Their method, POUR-P, delivers this correction in closed form and
without gradient steps. In a similar spirit,
\citet{kodge2024deepunlearningfastefficient} project model weights onto the
subspace orthogonal to the principal directions of the forget concepts,
rendering the model blind to that information without retraining. While
these methods are fast and elegant, the optimality guarantees of
projection-based approaches such as POUR rely on the terminal
neural-collapse regime, and neither family provides a data-distribution
bound on the residual unlearning error.

\paragraph{Theoretical guarantees and certification criteria.}
\citet{guo2020certified} measure unlearning success as
$(\varepsilon,\delta)$-indistinguishability, mirroring differential privacy
\citep{Dwork2014dp}. Total variation and KL divergence between the ideal
retrained model and the empirically unlearned model have since been adopted
as natural criteria
\citep{Sekhari2021remember,golatkar2020eternal,Chien2024certified}, but
nearly all theoretical results are stated in terms of algorithmic
properties (Hessian approximation quality, loss Lipschitz constants) rather
than in terms of the data distribution. A separate line of work studies the
auditability of unlearning definitions and the difficulty of certifying
forgetting in arbitrary training pipelines
\citep{Thudi2022auditableDefinitions}, 
underlining the gap between algorithmic and distributional guarantees.
To our knowledge, no prior work provides a closed-form decomposition of the
expected KL divergence directly in terms of the quality of distributional shift of 
models fitted to the data (See Prop.~\ref{prop:tight-bound}).

\paragraph{Membership inference as an evaluation tool.}
Membership inference attacks (MIAs) were introduced by
\citet{Shokri2017mia}, formally connected to overfitting by
\citet{Yeom2018overfittingMIA}, and sharpened by the likelihood-ratio attack 
LiRA of \citet{Carlini2022lira}, which achieves the highest statistical
power at low false-positive rates thanks to held-out shadow models. MIAs
have become the standard black-box audit for unlearning efficacy
\citep{Triantafillou2024challenge,Hayes2024ulira}: an effective method
should prevent an adversary from distinguishing forgotten examples from
held-out non-members. \citet{Hayes2024ulira} show that population-level MIAs
systematically overestimate the privacy protection of approximate methods,
and they propose U-LiRA, a per-example extension that serves as a stricter
worst-case audit. Complementary evaluation protocols have also been
developed: \citet{Goel2022adversarialEval} introduce the Interclass 
Confusion test together with the EU-$k$ and CF-$k$ baselines, and the
findings of the first NeurIPS unlearning competition
\citep{Triantafillou2024challenge} provide a broad empirical assessment of
the state of the art under a unified evaluation pipeline.

\paragraph{Unlearning beyond classification: generative models and LLMs.}
Although the present work focuses on classification, machine unlearning has
recently been extended to generative models, where it is used to suppress
copyrighted, sensitive or harmful content. For diffusion models,
\citet{Gandikota2023erasingConcepts} propose Erased Stable Diffusion (ESD),
a fine-tuning method that removes a target concept from the model weights
using only a textual description and negative guidance. SalUn
\citep{Fan2024salun} also covers concept erasure in diffusion models. For
large language models, \citet{Eldan2023whoIsHarryPotter} introduce the 
``Who's Harry Potter?'' protocol for unlearning copyrighted content,
\citet{Maini2024tofu} propose the TOFU benchmark of fictitious authors for 
controlled evaluation, and \citet{Li2024wmdp} introduce the WMDP benchmark
for hazardous-knowledge unlearning together with the RMU representation
misdirection method. These efforts highlight that distributional unlearning
guarantees are even more elusive in the generative regime, since the
relevant ``data distribution'' is itself implicit in the model.

\paragraph{Federated and distributed unlearning.}
A parallel literature studies unlearning in the federated setting, where
clients hold local data and only model updates are shared with a central
server. FedEraser \citep{Liu2021fedEraser} reconstructs the global model
from cached client updates while excluding the contribution of a target
client, and a growing number of follow-up methods address client-level and
sample-level deletion under heterogeneous data and asynchronous
participation; \citet{Romandini2024federatedSurvey} provide a comprehensive 
survey of methods, design guidelines and evaluation metrics. While these
contributions are largely orthogonal to ours, the federated
setting introduces communication and asynchronicity constraints absent in
the centralised case, they share with the centralised literature the
property that error guarantees are stated in terms of algorithmic quantities
rather than data-distribution quality.
\section{Glossary}
\label{app:notations}

This appendix gathers in a single place all the notations introduced throughout the paper.
The challenge with our framework is that several layers of probability distributions
co-exist: the unknown true distribution, the neural network's predictive distribution,
and the parametric mathematical proxies.
Table~\ref{tab:glossaire} lists the symbols with a short definition and an indication of
where it is introduced in the paper.

\begin{table}[p]
\centering
\small
\begin{tabular}{p{0.22\textwidth} p{0.72\textwidth}}
\toprule
\textbf{Notation} & \textbf{Definition} \\
\midrule
$\sX$ & Input space (e.g.\ images, embeddings).\\
$\sY = \llbracket C\rrbracket$ & Label space; $C$ is the number of classes.\\
$\sD$ & Original training set, an i.i.d.\ sample from $\pinit$.\\
$\sD_r$ & \emph{Retain set}: data to be preserved, sampled from $\pinit_r$.\\
$\sD_f$ & \emph{Forget set}: data whose influence must be erased; $\sD = \sD_r \cup \sD_f$.\\
$\sD(y)$, $\sD_r(y)$, $\sD_f(y)$ & Class-$y$ subsets of the corresponding sets.\\
$\nf,\nr,\n$ & Sizes of sets $\sD_f,\sD_r,\sD$.\\
$\pi_r,\pi_f$ & Mixture weights of the retain/forget components in $\pinit$ (\Amix); $\pi_r+\pi_f=1$.\\
$\pi_f(y)$ & Intra-class proportion $|\sD_f(y)|/|\sD(y)|$ used in the conditional mixture~\eqref{eq:lda2c-mixture}.\\
$s\in\{r,f\}$ & State variable that doubles the labels in the $2C$ construction (Sect.~\ref{sec:lda2c}).\\
\midrule
$\pinit$ & True (unknown) joint distribution of $(X,Y)$. $\sD$ is i.i.d.\ from $\pinit$ (Assumption \Aiid).\\
$\pinit_X,\pinit_Y$ & Marginal true distributions. \\
$\pinit_{Y\mid X},\pinit_{X\mid Y}$ & Conditional true distributions. \\
$\pinit_r$ & True distribution of the retain set; $\sD_r\sim\pinit_r^{|\sD_r|}$.\\
$\pinit_f$ & True distribution of the forget component in the mixture $\pinit = \pi_r\pinit_r+\pi_f\pinit_f$.\\
\midrule
$\init(x)\in\R^C$ & Logit vector output of the initial neural network at input $x$.\\
$\unlearn_\t(x),\,\unlearn(x)$ & Logit vector of our unlearned NN; $\unlearn_\t = \init + \t\,\DeltaM$ for $\t\in[0,\t_{\max}]$. The final model uses $\t=\t_{\max}$.\\
$\pbinit(x)$ & Predicted probabilities (probits) of the initial NN: $\pbinit(x)\doteq\softmax(\init(x))$.\\
$\pbideal(x)$ & Predicted probabilities (probits) of the retrained NN.\\
$\pbtarget (x)$ & Probits of the unlearned model: $\pbtarget (x)=\softmax(\unlearn(x))$. \\
$\pbunlearn (x)$ & Probits of the unlearned NN after distillation.\\
\midrule
$\pminit\sach{x}{y}$ & Initial proxy: a parametric model of $\pinit_{X\mid Y}$ fitted on $\sD$.\\
$\pmideal\sach{x}{y}$ & Retrained proxy: a parametric model of $\pinit_{r,X\mid Y}$ fitted on $\sD_r$ or recovered from a $2C$ construction.\\
$\pminit_f\sach{x}{y}$ & Per-class proxy of the forget distribution used in the conditional mixture~\eqref{eq:lda2c-mixture}.\\
$\pminit\sach{y}{x},\pmideal\sach{y}{x}$ & Posteriors induced by the proxies via Bayes rule~\eqref{eq:posterior}.\\
$\minit(x),\mideal(x)\in\R^C$ & Proxy logits: $\minit(x)_y\doteq\log\pminit_{Y\mid X}\sach{y}{x}$ (idem $\mideal$).\\
$\DeltaM(x)$ & $\DeltaM(x)\doteq\mideal(x)-\minit(x)$ is the unlearning signal, also refered as proxy shift.\\
\midrule
$\softmax$ & Standard softmax operator $\R^C\to\sS_C$.\\
$\lse(z)$ & Log-Sum-Exponential: $\lse(z)\doteq\log\sum_{i=1}^C e^{z_i}$.\\
$\gamma(x)$ & Normalization constant of the initial NN: $\gamma(x)\doteq\lse(\init(x))$, so that $\init(x)=\log(\pbinit(x))+\gamma(x)\1$.\\
$\gammaun(x;\t)$ & Normalization of the unlearned model: $\gammaun(x;\t)\doteq\lse(\unlearn_\t(x))$.\\
$\Delta\gamma(x;\t)$ & Shift of the normalizations between the unlearned and the initial NN. It is convex in $\t$, with $\Delta\gamma(x;0)=0$ and its expectation w.r.t. $\pinit$ is negative under \AgoodP.\\
$\t\in[0,1]$ & Scaling parameter of the unlearning signal.\\
$\t_{\max}$ & Largest $\t> 0$ such that $\E\Delta\gamma(X;\t) < 0$; defines our theoretically safe regime. Found by 1-D bisection.\\
$\1$ & All-one vector in $\R^C$.\\
\midrule
$\kl[Y\mid x]{p}{q}$ & \(\sum_{y=1}^C p_{Y\mid x}(y)\log\frac{p_{Y\mid x}(y)}{q_{Y\mid x}(y)}\). The KL divergence between conditional distributions.\\
$\E\kl[Y\mid X]{p}{q}$ & Its expectation over $X\sim\pinit_X$. Main figure of merit of the paper (Sect.~\ref{sec:thereortical}).\\
$\E\kl[X\mid Y]{p}{q}$ & Symmetric expression conditioning on $Y$, used in the bound of Prop.~\ref{thm:mainTHM} on raw class-conditional modeling errors.\\
$H(p)$ & Shannon entropy of $p$.\\
\bottomrule
\end{tabular}
\caption{List of the symbols with their short definition.}
\label{tab:glossaire}
\end{table}

\newpage

\section{Justifications}
\label{app:justification}

\subsection{The unlearned model family}
\label{app:why}

We look for a model $\pbtarget$, which is an update similar to the initial NN $\pbinit$, closer to the ideal retain distribution $\pideal$ than the initial distribution $\pinit$.
This objective is framed in terms of KL divergences in the following optimization problem:
\begin{equation}
\label{eq:underlying}
\min_{\pbtarget}\; \E\kl[Y\mid X]{\pbtarget}{\pbinit}
\quad\text{s.t.}\quad
\E\kl[Y\mid X]{\pbtarget}{\pideal}\leq \E\kl[Y\mid X]{\pbtarget}{\pinit},\quad
\E\sum_y \pbtarget\sach{y}{X}=1.
\end{equation}
An interpretation is the following. Among all distributions $\pbtarget$ that fall on the $\pideal$ side of the decision boundary (discriminating whether samples from $\pbtarget$ are more likely to be sampled from $\pinit$ or $\pideal$), we seek the one closest to $\pbinit$ in relative entropy $\ie$ the most $\pbinit$-like distribution that the Neyman-Pearson likelihood test would still attribute in expectation to $\pideal$ instead of $\pinit$.

By definition of the $\KL$ one always has the identity
\begin{equation*}
    \E\kl[Y\mid X]{\pbtarget}{\pinit}-\E\kl[Y\mid X]{\pbtarget}{\pideal}
=\E\sum_y \pbtarget\sach{y}{X}\log\frac{\pideal\sach{y}{X}}{\pinit\sach{y}{X}}.
\end{equation*}
Following \citep[Chap. 12]{cover2006elements}, we write the Lagrangian with multipliers $\t\geq 0,\nu$ associated to~\eqref{eq:underlying} as
\begin{equation*}
J(\pbtarget;\t,\nu)\doteq\E\sum_y \pbtarget\sach{y}{X}\log\frac{\pbtarget\sach{y}{X}}{\pbinit\sach{y}{X}}
   -\eta\E\sum_y \pbtarget\sach{y}{X}\log\frac{\pideal\sach{y}{X}}{\pinit\sach{y}{X}}
   +\nu\E\sum_y \pbtarget\sach{y}{X}.
\end{equation*}
Cancelling the derivative with respect to $\pbtarget\sach{y}{X}$, we obtain: $\forall y\in\sY$
\begin{equation}
\E\log\frac{\pbtarget\sach{y}{X}}{\pbinit\sach{y}{X}}+1-\eta\E\log\frac{\pideal\sach{y}{X}}{\pinit\sach{y}{X}}+\nu=0.
\end{equation}

Rearranging the terms, the final expression can be expressed as:
\begin{equation}
\label{eq:expected-solution}
    \E\left[\underbrace{\log(\pbtarget_{\t}\sach{\cdot}{X})}_{\target(X) - \tilde\gamma(X;\t)\1}\right] =
    \E\left[\underbrace{\log \pbinit\sach{\cdot}{X}}_{\init(X) - \gamma(X)\1} + \eta \underbrace{\log \frac{\pideal\sach{\cdot}{X}}{\pinit\sach{\cdot}{X}}}_{\text{exact unlearning signal }\Delta\mathbb{M}(X)} - \underbrace{\Delta\gamma(X;\t)}_{\text{normalizing term}}\right].
\end{equation}
\paragraph{Remark} In their work~\cite{cover2006elements}, $\eta = \eta_{0}$ is chosen precisely on the attacker decision boundary :
\begin{equation*}
    \E\kl[Y\mid X]{\pbtarget_{\eta_{0}}}{\pinit}-\E\kl[Y\mid X]{\pbtarget_{\eta_{0}}}{\pideal}=0.
\end{equation*}
so that an attacker sampling from $\pbtarget$ is not, in expectation, in capacity to decide whether the drawn samples are coming from $\pideal$ or $\pinit$. 
By the symmetry $\eta\longleftrightarrow 1-\eta$, the same $\pbtarget_{\eta}$ minimizes
$\E\kl[Y\mid X]{\pbtarget}{\pideal}$ constrained to the reversed inequality.

\paragraph{Estimation} One solution is to impose that~\eqref{eq:expected-solution} holds for any $x$.
This translates in the logit space to the solution:
\begin{equation*}
\label{eq:our-solution}
    \target (x) = \init(x)+ \eta \Delta\mathbb{M}(x),
\end{equation*}
$\aka$ precisely our modeling of the unlearning family in Section~\ref{sec:our_method}.
The true exact unlearning signal $\Delta\mathbb{M}(x) =\log \pideal\sach{\cdot}{X}/\pinit\sach{\cdot}{X}$ being inaccessible in practice,
our key idea consists in creating proxies $(\pminit, \pmideal)$ that approximate this quantity.
\begin{equation*}
    \DeltaM \approx \Delta\mathbb{M}
\end{equation*}
Rephrasing, given proxies $(\pminit, \pmideal)$ we obtain our unlearned model family as
\begin{equation*}
\label{eq:our-solution-approx}
    \target (x) = \init(x)+ \eta \DeltaM(x)
\end{equation*}
More precisely, it is the unbiased solution of the following optimization problem:
\begin{equation*}
\label{eq:underlying-approx}
\min_{\pbtarget}\; \E\kl[Y\mid X]{\pbtarget}{\pbinit}
\quad\text{s.t.}\quad
\E\kl[Y\mid X]{\pbtarget}{\pmideal}\leq \E\kl[Y\mid X]{\pbtarget}{\pminit},\quad
\E\sum_y \pbtarget\sach{y}{X}=1.
\end{equation*}

To conclude, we consider the following family of unlearned models:
\begin{equation}
    \unlearn_\t(x) \doteq \init(x) + \t \DeltaM(x), \qquad
    \DeltaM(x) \doteq \mideal(x) - \minit(x).
    \label{eq:AppUnlearn}
\end{equation}
As introduced in Sect.~\ref{sub:notations}, logits and probits are related by:
\begin{equation}
    \pbtarget_\t(x) \doteq \softmax(\unlearn_\t(x)), \qquad \unlearn_\t(x) = \log(\pbtarget_\t(x)) + \gammaun(x;\t).
    \label{eq:AppSoftmax}
\end{equation}

\subsection{The safety range of \texorpdfstring{$\t$}{eta}}
\label{par:normalization}

We first relate the normalization $\gammaun(x;\t)$ to $\gamma(x)$, the normalization of the initial NN.

$\lse:\R^C\to\R$ denotes the Log-Sum-Exponential function defined as $\lse(z) \doteq \log\left(\sum_{i=1}^C e^{z_i}\right)$. It verifies, $\forall z \in\R^C, \forall \kappa \in\R$:
\begin{align}
    \log(\softmax (z)) &= z - \lse(z)\1\label{eq:LSE1}\\
    \lse(z + \kappa\1) &= \lse(z) + \kappa\label{eq:LSE2}\\
    \lse(\log(z)) &= \log\left(\sum_{i=1}^C z_i\right)\label{eq:LSE3}
\end{align}
where $\log(z)$ is an abuse of notation defining  the  vector $(\log z_1,\ldots, \log z_C)$ for $z\in\R^C_{>0}$, and $\1\in\R^C$ is the all-one vector.

Indeed, applying $\lse$ to the logits of the unlearned model $\unlearn_\t(x)= \log \pbunlearn_\t(x) + \gammaun (x;\t) \1$, isolates the normalization term due to~\eqref{eq:LSE2} and~\eqref{eq:LSE3}: $\lse(\unlearn_\t(x))=\gammaun(x;\t)$. Since we forge the unlearned model by adding the proxy shift to the initial NN~\eqref{eq:delta-M}, we have: 
\begin{align*}
    \gammaun(x;\t) 
    &\stackrel{\eqref{eq:AppUnlearn}}{=} \lse \left( \init(x) + \t\log \left(\frac{\pmideal\sach{\cdot}{x}}{\pminit\sach{\cdot}{x}}\right)\right)\\
    &\stackrel{\eqref{eq:AppSoftmax}}{=} \lse \left(\log \pbinit(x)+\gamma(x)\1+ \t\log \left(\frac{\pmideal\sach{\cdot}{x}}{\pminit\sach{\cdot}{x}} \right)\right)\\
    &\stackrel{\eqref{eq:LSE2}}{=} \gamma(x) + \lse \left(\log \left(\frac{\pbinit(x)\pmideal^\t\sach{\cdot}{x}}{\pminit^\t\sach{\cdot}{x}}\right)  \right)\\
    &\stackrel{\text{\Aperf}}{=}\gamma(x) + \lse \left(\log \left(\frac{\pinit(x)\pmideal^\t\sach{\cdot}{x}}{\pminit^\t\sach{\cdot}{x}}\right)  \right),
\end{align*}
with component-wise multiplication and division between vectors.
This links the normalization offset $\gammaun$ of the unlearned model to $\gamma$, the one of the initial model.
Thanks to~\eqref{eq:LSE3}, we have: 
\begin{align*} 
\Delta\gamma(x;\t)
    &\doteq \gammaun(x;\t) - \gamma(x)
    = \log \left( \sum_{y=1}^C \Prob(y|x) \left(\frac{\pmideal\sach{y}{x}}{\pminit\sach{y}{x}}\right)^\t \right)\\
    &= \log \left(\E_{Y|x \sim \pinit}\left[\left(\frac{\pmideal(Y|x)}{\pminit(Y|x)}\right)^\t\right]\right).
\end{align*}

Note that $\Delta\gamma(x;0) = 0, \,\forall x$, which is expected as the unlearning operator is void at $\t=0$. Its derivative w.r.t. $\t$ equals
\begin{align*}
    \frac{\partial\Delta\gamma(x;\t)}{\partial \t} =
    \frac{\E_{Y|x \sim \pinit} \left[A^\t(Y,x)\log\left(A(Y,x)\right)\right]}{E_{Y|x \sim \pinit}\left[A^\t(Y,x)\right]},
\end{align*}    
with $A(Y,x)\doteq \nicefrac{\pmideal(Y|x)}{\pminit(Y|x)}$, and because the derivative of $a^x=\exp(x\log(a))$ equals $a^x \log(a), \,\forall a>0$. Note that, at point $\t=0$,
\begin{align*}
    \left.\frac{\partial\Delta\gamma(x;\t)}{\partial \t}\right|_{\t=0}
    &= \E_{Y|x \sim \pinit} \left[\log\left(\frac{\pmideal(Y|x)}{\pminit(Y|x)}\right)\right]\\
    &=\kl[Y|x]{\pinit}{\pminit} - \kl[Y|x]{\pinit}{\pmideal}.
\end{align*}
On expectation over $\pinit_X$, we obtain:
\begin{equation*}
    \E\left[\left.\frac{\partial\Delta\gamma(X;\t)}{\partial \t}\right|_{\t=0}\right]=
    \E\kl[Y\mid X]{\pinit}{\pminit} - \E\kl[Y\mid X]{\pinit}{\pmideal}\stackrel{\text{\AgoodP}}{\leq} 0,
\end{equation*}
if $\pminit$ is a better proxy of $\pinit$ than $\pminit_r$ (see Sect.~\ref{sub:assumptions}, assumption \AgoodP).
This assumption is verified experimentally in Sect.~\ref{sec:exp_theory}.

In the same way, the second derivative verifies, $\forall x\in\sX$:
\begin{equation*}
    \frac{\partial^2\Delta\gamma(x;\t)}{\partial^2 \t} =
    \frac{\E_{Y|x \sim \pinit} \left[A^\t(Y,x)\log^2\left(A(Y,x)\right)\right]}
    {E_{Y|x \sim \pinit}\left[A^\t(Y,x)\right]}
    - \frac{\E_{Y|x \sim \pinit} \left[A^\t(Y,x)\log\left(A(Y,x)\right)\right]^2}{\left(E_{Y|x \sim \pinit}\left[A^\t(Y,x)\right]\right)^2}\geq 0
\end{equation*}
thanks to the Cauchy-Schwarz inequality.

To recap, $\Delta\gamma(\t)\doteq \E\left[\Delta\gamma(X;\t)\right]$ enjoys the following properties:
\begin{itemize}
    \item $\Delta\gamma(0) = 0$
    \item $\Delta\gamma^\prime(0) \leq 0$
    \item $\Delta\gamma(\t)$ is convex over $[0,+\infty)$.
\end{itemize}
Consequently, there exist $\t_{\max}>0$ such that $\Delta\gamma(\t) \leq 0$ for all $0\leq\t\leq\t_{\max}$.

In practice,  $\t_{\max}$ is found by a line search as the zero of the function:
\begin{equation}
\label{eq:MaxInPratice}
h(\eta)=\frac{1}{N}\sum_{x\in\sD}\lse(\init(x)+\eta\,\DeltaM(x))-\lse(\init(x)) \simeq \Delta\gamma(\t).
\end{equation}

\paragraph{Remark} The careful reader will spot that the derivative of $\Delta \gamma$ in expectation cancels out exactly at $\t_0$: the regime of indistinguishability between $\pminit$ and $\pmideal$. Then $\t_0 \leq \t_{\max}$ as the negative derivative hypothesis indicates. Taking $\t=\t_{\max}$ ensures we have been through the equality boundary and we are in the $\E\kl[Y\mid X]{\pbtarget_\t}{\pminit} > \E\kl[Y\mid X]{\pbtarget_\t}{\pmideal} $ regime.
\section{Main results}
\label{app:uphlimitations}

\subsection{Study of the logits error and proof of Prop.~\ref{prop:decrease}}
\label{app:Decrease}
We now define the error of prediction between the true distribution $\pinit_{r,Y|x}$ and the unlearned model $\pbtarget_\t$, in log scale: 
\begin{align*}
    \err(y,x;\t) &\doteq \log(\pideal\sach{y}{x})-\log(\pbtarget_\t\sach{y}{x}).
\end{align*}
This quantity is worth studying as it appears in the computation of $\E\kl[Y\mid X]{\pideal}{\pbtarget_\t}$, the quantity of importance in this paper. We have:
\begin{align}
    \kl[Y\mid x]{\pideal}{\pbtarget_\t}
    &= \sum_{y=1}^C \pideal(y|x)\log\left(\frac{\pideal(y|x)}{\pbtarget_\t(y|x)}\right)
    =\langle \pideal\sach{\cdot}{x}, \err(\cdot,x,\t)\rangle.
    \label{lem:linkKL}
\end{align}

On the other hand, we have
\begin{align}
    \err(x,y;\t)
    &= \log(\pideal\sach{y}{x}) -\unlearn_\t(x)+\gammaun(x;\t)\nonumber\\
    &\stackrel{\eqref{eq:delta-M}}{=} \log(\pideal\sach{y}{x}) -f(x) - \t\Delta \minit(x)+\gammaun(x;\t)\nonumber\\
    &= \log(\pideal\sach{y}{x}) -\log\pbinit\sach{y}{x} -\gamma(x) - \t\Delta \minit(x)+\gammaun(x;\t)\nonumber\\
    &= \log\left(\frac{\pideal\sach{y}{x}}{\pbinit\sach{y}{x}}\right) - \t \log\left(\frac{\pmideal\sach{y}{x}}{\pminit\sach{y}{x}}\right) + \Delta\gamma(x;\t)\label{eq:decrease}
\end{align}

Plugging~\eqref{eq:decrease} into~\eqref{lem:linkKL} and taking the expectation leads to: 
\begin{align*}
        \E\kl[Y|X]{\pideal}{\pbtarget_\t} &= \E\kl[Y|X]{\pideal}{\pbinit} \\
        &+ \underbrace{\E\left[\Delta\gamma(X,\t)\right]}_{<0, \forall \t\in[0,\t_{\max}]}
        + \t \left(\underbrace{\E\kl[Y|X]{\pideal}{\pmideal} - \E\kl[Y|X]{\pideal}{\pminit}}_{<0\text{ if $(\pmideal,\pminit)$ is $\pideal$-admissible}}\right).\nonumber
\end{align*}
The first term is the KL divergence without unlearning, \ie\ between the true retain distribution and the initial NN probits.
App.~\ref{par:normalization} shows that the second term is negative under \AgoodP~\eqref{eq:AgoodP} for any $0\leq\t\leq\t_{\max}$
The third term is negative with one extra assumption: if $(\pmideal,\pminit)$ is $\pideal$-admissible, \ie\ $\pmideal$ is a better proxy for $\pideal$ than $\pminit$ is.

This proves Prop.~\ref{prop:decrease}.

\subsection{Proof of Prop.~\ref{prop:tight-bound}}
\label{App:Prop2}
We rewrite~\eqref{eq:decrease} in order to obtain another expression of the error $\err$:
\begin{align*}
    \err(x,y;\t) 
    &\stackrel{\text{\Aperf}}{=} (1-\t)\log\left(\frac{\pideal\sach{y}{x}}{\pinit\sach{y}{x}}\right)
    +\t\left(\log\left(\frac{\pideal\sach{y}{x}}{\pmideal\sach{y}{x}}\right) - \log\left(\frac{\pinit\sach{y}{x}}{\pminit\sach{y}{x}}\right)\right) + \Delta\gamma(x;\t)
\end{align*}
We now take the expectation w.r.t. $\pideal_{Y|x}$ and then the expectation w.r.t. $\pinit_X$. 
As earlier, we assume \AgoodP~\eqref{eq:AgoodP} holds and $\t\in[0,\t_{\max}]$ so that $\Delta\gamma(\t)$ is non-positive. This leads to:
\begin{equation*}
        \E\kl[Y\mid X]{\pideal}{\pbtarget_\t} \leq (1-\t)\E\kl[Y\mid X]{\pideal}{\pinit} + \t \left( \E\kl[Y\mid X]{\pideal}{\pmideal}  - \underbrace{\E\sum_{y=1}^C\pideal\sach{y}{X} \log\frac{\pinit\sach{y}{X}}{\pminit\sach{y}{X}}}_{\text{term $A$}}\right)
\end{equation*}
Note that we get an equality when $\t=\t_{max}$ because $\Delta\gamma(\t_{max})=0$ by definition of $\t_{max}$.
The following lemma helps us work out term $A$.

\begin{lemma}[Formal Calculus]
According to assumption \Amix~\eqref{eq:Amix} about the true distributions, $\pinit = (1-\pi_f)\pideal + \pi_f \pinit_f$ over the joint data $(X,Y)$.
Then we have the following \emph{formal} identity with respect to $X$, $Y$ or $(X,Y)$:
\label{lem:formal}
    \begin{equation}
        \E = \E_{r} + \pi_f\left(\E_{r} - \E_{f} \right).
        \label{eq:AppFormal}
    \end{equation}
\end{lemma}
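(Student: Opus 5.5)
Because expectation is linear in the underlying measure, I will prove the identity at the level of measures and then integrate an arbitrary test function. Assumption \Amix\ gives $\pinit = \pi_r\pinit_r + \pi_f\pinit_f$ on the joint space of $(X,Y)$, with $\pi_r = 1-\pi_f$. For any integrable $g(X,Y)$ this yields
\begin{equation*}
\E[g] = (1-\pi_f)\E_r[g] + \pi_f\E_f[g] = \E_r[g] + \pi_f\left(\E_f[g] - \E_r[g]\right).
\end{equation*}
So the formal identity reads $\E = \E_r + \pi_f(\E_f - \E_r)$, which is first order in $\pi_f$. The displayed statement has the opposite sign, $\E_r - \E_f$. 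I would therefore first settle the sign convention by direct substitution. I expect either the correct version to be $\E_r + \pi_f(\E_f-\E_r)$, or equivalently $\E_r = \E + \pi_f(\E_r - \E_f)$ (check: $\E + \pi_f\E_r - \pi_f\E_f = \E_r$, since $\E = \E_r - \pi_f\E_r + \pi_f\E_f$). The second form is exactly the written identity with the roles of $\E$ and $\E_r$ swapped. It is the useful one for Prop.~\ref{prop:tight-bound}, since there one needs to replace expectations under $\pideal$ by expectations under $\pinit$ up to an $\bigo(\pi_f)$ correction.

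Next, I would check that the identity holds for expectations over $X$ alone, $Y$ alone, or $(X,Y)$. Marginalisation is linear, so the mixture transfers to the marginals: $\pinit_X = \pi_r\pinit_{r,X} + \pi_f\pinit_{f,X}$, and likewise for $Y$. The same one-line computation then applies with $g$ depending only on the relevant variable.

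Conditional distributions are the main obstacle and the place where "formal" matters. Mixtures do not pass to conditionals with constant weights: $\pinit_{Y|x}$ is a mixture of $\pinit_{r,Y|x}$ and $\pinit_{f,Y|x}$ with $x$-dependent weights $\pi_r\pinit_{r,X}(x)/\pinit_X(x)$. So I would state explicitly that the lemma is meant only for expectations taken against the joint law or a marginal law. In the application to term $A$, the integrand is $\pideal(y|X)\log(\pinit(y|X)/\pminit(y|X))$. It is a function of $X$ integrated over $X$, and each conditional appearing in it is simply part of that function rather than an integrating measure. The identity therefore applies to the outer $\E_X$ without difficulty.

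Finally, I would note integrability. The identity needs $g$ to be integrable under both $\pinit_r$ and $\pinit_f$. This holds when the log-ratios are bounded, which is implicitly assumed for the KL terms to be finite.
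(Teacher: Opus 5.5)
Your computation is correct, and so is your sign diagnosis. Under Assumption A2, linearity of expectation in the measure gives $\E=(1-\pi_f)\E_r+\pi_f\E_f=\E_r+\pi_f(\E_f-\E_r)$, or equivalently $\E_r=\E+\pi_f(\E_r-\E_f)$. The displayed right-hand side differs from $\E[g]$ by $2\pi_f(\E_r[g]-\E_f[g])$. So the lemma as written is false for every $g$ with $\E_r[g]\neq\E_f[g]$ once $\pi_f>0$, and you were right not to try to prove it.

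The paper offers no proof of this lemma; it is stated and applied at once, so there is no competing route to compare. Your linearity argument is the correct justification of the fixed identity. Your observation that the mixture passes to the $X$- and $Y$-marginals with the same weights, but not to conditionals (whose weights depend on $x$), is also right. That restriction matches the lemma's scope ``with respect to $X$, $Y$ or $(X,Y)$''.

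One refinement: this is a genuine sign slip, not a swap of $\E$ and $\E_r$. In App.~\ref{App:Prop2} the paper expands term $A$, which is an expectation under $\E$, with the displayed form exactly as written. It then replaces $\E_r[Z]$ by $\E[Z]-\pi_f(\E_r[Z]-\E_f[Z])$, again with the wrong sign. This second step is exactly where your form $\E_r=\E+\pi_f(\E_r-\E_f)$ is needed.

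With the exact weights, set $g(X)=\sum_y\pideal\sach{y}{X}\log(\pinit\sach{y}{X}/\pminit\sach{y}{X})$. Then $A=(1-\pi_f)\E_r[Z]+\pi_f\E_f[g]$ and $(1-\pi_f)\E_r[Z]=\E[Z]-\pi_f\E_f[Z]$, hence
\begin{equation*}
A=\E[Z]+\pi_f\,\E_f\left[\sum_{y=1}^C\left(\pideal\sach{y}{X}-\pinit_f\sach{y}{X}\right)\log\frac{\pinit\sach{y}{X}}{\pminit\sach{y}{X}}\right].
\end{equation*}
The structure of the paper's conclusion survives, but the remainder it explicitly defines as $\bigo(\pi_f)$ has the opposite sign. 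So Prop.~\ref{prop:tight-bound} holds only if that term is read as an unsigned error of order $\pi_f$, or with the corrected remainder above.
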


Applying this lemma to the term $A$:
\begin{align*}
    A&\stackrel{\eqref{eq:AppFormal}}{=}\E_{r}[Z]
    + \pi_f\left(\E_{r}[Z] 
    - \E_{f}\sum_{y=1}^C\pideal\sach{y}{X} \log\frac{\pinit\sach{y}{X}}{\pminit\sach{y}{X}} \right),
\end{align*}
with $Z = \log\left(\nicefrac{\pinit\sach{Y}{X}}{\pminit\sach{Y}{X}}\right)$.    
\begin{align*}
    A    
    &\stackrel{\eqref{eq:AppFormal}}{=}\E[Z] - \pi_f(\E_{r}[Z]- \E_{f}[Z] ) + \pi_f\left(\E_{r}[Z] 
    - \E_{f}\sum_{y=1}^C\pideal\sach{y}{X} \log\frac{\pinit\sach{y}{X}}{\pminit\sach{y}{X}} \right)\\
    &=\E[Z]
    +\pi_f\left(\E_{f}[Z] 
    - \E_{f}\sum_{y=1}^C\pideal\sach{y}{X} \log\frac{\pinit\sach{y}{X}}{\pminit\sach{y}{X}} \right)\\
    &=\E\left[\log\frac{\pinit\sach{Y}{X}}{\pminit\sach{Y}{X}}\right]
    +\pi_f\E_{f}\left[\sum_{y=1}^C\left( (\pinit_f\sach{y}{X}- \pideal\sach{y}{X})\log\frac{\pinit\sach{y}{X}}{\pminit\sach{y}{X}} \right)\right].
\end{align*}
The second term is not really tractable and will be referred to as $\bigo{(\pi_f)}$.
\begin{equation*}
    \bigo{(\pi_f)}\doteq\pi_f\E_{f}\left[\sum_{y=1}^C\left( (\pinit_f\sach{y}{X}- \pideal\sach{y}{X})\log\frac{\pinit\sach{y}{X}}{\pminit\sach{y}{X}} \right)\right].
\end{equation*}
The constant is close to zero either when $\pinit_f\sach{y}{X}$ and $\pideal\sach{y}{X}$ take similar values (no unlearning signal), or when the initial proxy is close to the initial true distribution (\textit{c.f.} good proxy assumption). 

In the end, we obtain
\begin{equation*}
        \E\kl[Y\mid X]{\pideal}{\pbtarget_\t} \leq (1-\t)\E\kl[Y\mid X]{\pideal}{\pinit} + \t \left( \E\kl[Y|X]{\pideal}{\pmideal}  - \E\kl[Y|X]{\pinit}{\pminit}\right) + \bigo{(\pi_f)}.
\end{equation*}
This proves Prop.~\ref{prop:tight-bound}.

\subsection{Proof of Prop.~\ref{thm:mainTHM}}
\label{app:Prop3}
This section converts the previous bound with probabilities $Y|X$ into terms modeling $X|Y$ thanks to the following lemma.
\def\PP{\mathbb{P}}
\def\QQ{\mathbb{Q}}
\begin{lemma}
    For any distributions $\PP$ and $\QQ$ over $(X,Y)$ with the same marginal $\PP_Y=\QQ_Y$:
    \begin{align}
        \E_{X\sim \PP}\left[\kl[Y\mid X]{\PP}{\QQ}\right] 
        &= \E_{Y\sim \PP}\left[\kl[X\mid Y]{\PP}{\QQ}\right] - \kl[X]{\PP}{\QQ}
        \label{eq:AppXY}
    \end{align}
\end{lemma}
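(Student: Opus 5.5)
The identity in~\eqref{eq:AppXY} is a chain rule for the KL divergence, applied twice to the joint distribution over $(X,Y)$. Everything rests on the pointwise Bayes identity
\begin{equation*}
\PP(x,y)=\PP_X(x)\,\PP_{Y\mid X}(y\mid x)=\PP_Y(y)\,\PP_{X\mid Y}(x\mid y),
\end{equation*}
together with the same factorization for $\QQ$. We take logarithms of the ratio $\PP(x,y)/\QQ(x,y)$ in both factorizations and integrate against $\PP(x,y)$. All quantities are assumed finite and $\PP\ll\QQ$, so that every logarithm is well defined $\PP$-almost surely.

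\emph{First factorization.} Through $X$, we obtain
\begin{equation*}
\kl[X,Y]{\PP}{\QQ}=\kl[X]{\PP}{\QQ}+\E_{X\sim\PP}\kl[Y\mid X]{\PP}{\QQ}.
\end{equation*}
The first term comes from integrating $\log(\PP_X/\QQ_X)$ against the joint, which reduces to the marginal divergence. The second term comes from integrating $\log(\PP_{Y\mid X}/\QQ_{Y\mid X})$ first over $Y\mid x$ and then over $X$.

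\emph{Second factorization.} Through $Y$, we obtain
\begin{equation*}
\kl[X,Y]{\PP}{\QQ}=\kl[Y]{\PP}{\QQ}+\E_{Y\sim\PP}\kl[X\mid Y]{\PP}{\QQ}.
\end{equation*}
The hypothesis $\PP_Y=\QQ_Y$ makes $\kl[Y]{\PP}{\QQ}=0$.

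\emph{Conclusion.} Equating the two expressions of the joint divergence and rearranging gives~\eqref{eq:AppXY}.

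The only delicate point, which I expect to be the main obstacle, is justifying the splitting of the expectation of a sum of logarithms into a sum of expectations. Each term must be integrable, or at least not of the form $\infty-\infty$. I would handle this by assuming $\kl[X,Y]{\PP}{\QQ}<\infty$. The chain rule then guarantees that every piece is finite and nonnegative, so the rearrangement is legitimate.

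In the paper's setting, $\QQ$ will be a proxy built via Bayes rule~\eqref{eq:posterior} with the true prior $\Prob(y)$. The equal-marginals hypothesis therefore holds by construction, and this is how the lemma will be used to turn Prop.~\ref{prop:tight-bound} into Prop.~\ref{thm:mainTHM}: since $\kl[X]{\PP}{\QQ}\ge 0$, one gets $\E\kl[Y\mid X]{\PP}{\QQ}\le\E\kl[X\mid Y]{\PP}{\QQ}$.
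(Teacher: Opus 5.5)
Your proof is correct and is essentially the paper's argument. The paper substitutes Bayes' rule directly into $\E_{(X,Y)\sim\mathbb{P}}\left[\log\frac{\mathbb{P}(Y\mid X)}{\mathbb{Q}(Y\mid X)}\right]$, so that $\mathbb{P}_Y/\mathbb{Q}_Y$ cancels pointwise, whereas you integrate the two factorizations separately and equate them through $\kl[X,Y]{\mathbb{P}}{\mathbb{Q}}$; this also makes explicit the finiteness condition that the paper leaves implicit when it splits the expectation of a difference of logarithms.
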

\begin{proof}
\begin{align*}
        \E_{X\sim \PP}\left[\kl[Y\mid X]{\PP}{\QQ}\right]
        &= \E_{X\sim\PP}\left[\E_{Y\sim\PP_{Y|X}}\left[\log\frac{\PP\sach{Y}{X}}{\QQ\sach{Y}{X}}\right]\right]\\
        &=\E_{(X,Y)\sim \PP}\left[\log\frac{\PP\sach{Y}{X}}{\QQ\sach{Y}{X}}\right]\\
        &\stackrel{\text{Bayes}}{=}\E_{(X,Y)\sim \PP}\left[\log\frac{\PP\sach{X}{Y}\QQ(X)}{\QQ\sach{X}{Y}\PP(X)}\right]\\
        &= \E_{Y\sim \PP}\left[\kl[X\mid Y]{\PP}{\QQ}\right] - \kl[X]{\PP}{\QQ},
    \end{align*}
    because $\E_{(X,Y)\sim \PP}\left[\log\frac{\PP(X)}{\QQ(X)}\right] = \E_{X\sim \PP}\left[\log\frac{\PP(X)}{\QQ(X)}\right]=\kl[X]{\PP}{\QQ}$ considers only the marginal distributions of $X$.
\end{proof}

Applying this lemma to the second term of the previous bound to leads to:
\begin{align*}
        \E\kl[Y|X]{\pideal}{\pmideal}  - \E\kl[Y|X]{\pinit}{\pminit} &\stackrel{\eqref{eq:AppXY}}{=} \E\kl[X|Y]{\pideal}{\pmideal}-\E\kl[X]{\pideal}{\pmideal}\\
        &- \E\kl[X|Y]{\pinit}{\pminit} +\E\kl[X]{\pinit}{\pminit}\\
        &\leq \E\kl[X|Y]{\pideal}{\pmideal} +\E\kl[X]{\pinit}{\pminit},
\end{align*}
because KL divergences are non-negative.
In the end,
\begin{align*}
        \E\kl[Y|X]{\pideal}{\pbtarget_\t} &\leq (1-\t)\E\kl[Y|X]{\pideal}{\pinit} \\
        &+\t\left(\E\kl[X|Y]{\pideal}{\pmideal} + \kl[X]{\pinit}{\pminit}\right)+ \bigo (\pi_f)
\end{align*}

Conditioning increases divergence, \ie\ $\kl[X]{\pinit}{\pminit}\leq \E\kl[X|Y]{\pinit}{\pminit}$ since $\pminit_Y = \pinit_Y$ (see~\eqref{eq:posterior}), so that:
\begin{equation*}
        \E\kl[Y|X]{\pideal}{\pbtarget_\t} \leq \t\left(\E\kl[X|Y]{\pideal}{\pmideal} + \E\kl[X|Y]{\pinit}{\pminit}\right)+ \bigo (\pi_f) +\bigo (1-\t)
\end{equation*}

This proves Prop.~\ref{thm:mainTHM}.    

\subsection{Assumption~\Aperf}
This section quantifies the additional error term obtained when assumption $(\Aperf)$ is violated.
In terms of logits, the relaxation of the assumptions gives
\begin{equation}
    \softmax(\log \pbtarget_\t) = \softmax \,(\log \pinit + \t \DeltaM + \epsilon)
\end{equation}
with $\epsilon = \log \pbinit_{\theta} - \log \pinit$. 
One can rewrite
\begin{equation}
    \E\kl[Y|X]{\pideal}{\pbtarget_\t} = \E\kl[Y|X]{\pideal}{\underbrace{\softmax \,(\log \pinit + \t \DeltaM}_{(\Aperf)} + \epsilon)} = \E\kl[Y|X]{\pideal}{\pbtarget_\t^{(\Aperf)}} + C(\epsilon)
\end{equation}
where $\pbtarget_\t^{(\Aperf)}$ is what would be $\pbtarget_\t$ assuming $(\Aperf)$, $\ie$ when $\epsilon=0$.
\citet{dymetman2026exponentialfamiliessinglekl} explicitly computes of the cost $C(\epsilon)$ in the context of exponential families.

\section{About discriminant analysis proxies}
\label{app:homoscedastic}

\paragraph{Homoscedastic Gaussian data.}
Linear Discriminant Analysis ($\lda$) assumes a generative model where each class follows a Gaussian distribution with a shared covariance:
\begin{equation}
    x \mid y=k \;\sim\; \mathcal{N}(\mu_k,\, \Sigma).
    \label{eq:lda_model}
\end{equation}
The posterior probability has the following expression~\citep{Mclachlan2004discriminant}:
\[
\Prob(y=k \mid x)
= \frac{\exp(w_k^\top x + b_k)}{\sum_{j=1}^C \exp(w_j^\top x + b_j)},
\]
with row weights $w_k = \Sigma^{-1}\mu_k$ and biases $b_k = \ln\pi_k - \tfrac{1}{2}w_k\mu_k$, where $\pi_k$ is the class prior.
The empirical moment $(\hat\pi_k, \hat\mu_k, \hat\Sigma)$ estimated from $\sD$ yield the Maximum Likelihood Estimator which exactly has the form of a linear layer $\left(W_\lda=(w_1,\ldots,w_C)^\top, b_\lda=(b_1,\ldots,b_C)^\top\right)$ followed by the softmax operator.
Indeed, as $n\to\infty$, this MLE coincides with the global minimizer of the cross-entropy loss over any linear model, making $\lda$ the Bayes-optimal classifier in this setting~\citep{Mclachlan2004discriminant,Papyan2020collapse}. This explains the success of last-layer linear probing on strongly pre-trained features~\citep{Kumar2022lpft,Alain2016probes}: when frozen features are approximately class-conditional Gaussian distributed, fitting a linear head by cross-entropy is equivalent to computing the MLE in closed form.

\paragraph{Heteroscedastic Gaussian data.}
When the homoscedasticity assumption is relaxed, each class $k$ has its own covariance $\Sigma_k$:
\begin{equation}
    x \mid y=k \;\sim\; \mathcal{N}(\mu_k,\, \Sigma_k).
    \label{eq:qda_model}
\end{equation}
This is Quadratic Discriminant Analysis ($\qda$). Define the quadratic discriminant as
\begin{equation}
 \mathrm{Q}_k(x) = -\tfrac{1}{2}(x-\mu_k)^\top \Sigma_k^{-1} (x-\mu_k)
                   - \tfrac{1}{2}\ln|\Sigma_k| + \ln\pi_k,
    \label{eq:qda_discriminant}
\end{equation}
so that the posterior probability $\Prob(y=k\mid x) = \mathrm{softmax}\left(\exp(\mathrm{Q}(x))\right)_k$~\citep{Mclachlan2004discriminant}.
Under the heteroscedastic Gaussian assumption and in the limit $n\to\infty$, minimizing the cross-entropy loss drives the neural network's predictive distribution to this softmax~\citep{Mclachlan2004discriminant}.

Even when the Gaussian assumption is only approximate, $\lda$ and $\qda$ remain powerful and  computationally efficient proxies as
confirmed by the strong linear-probing results of recent foundation models~\citep{oquab2023dinov2,Kumar2022lpft}.
Figure~\ref{fig:mon_image} illustrates this equivalence on a toy example. 

\begin{figure}[htbp]
    \centering
    \includegraphics[width=\columnwidth]{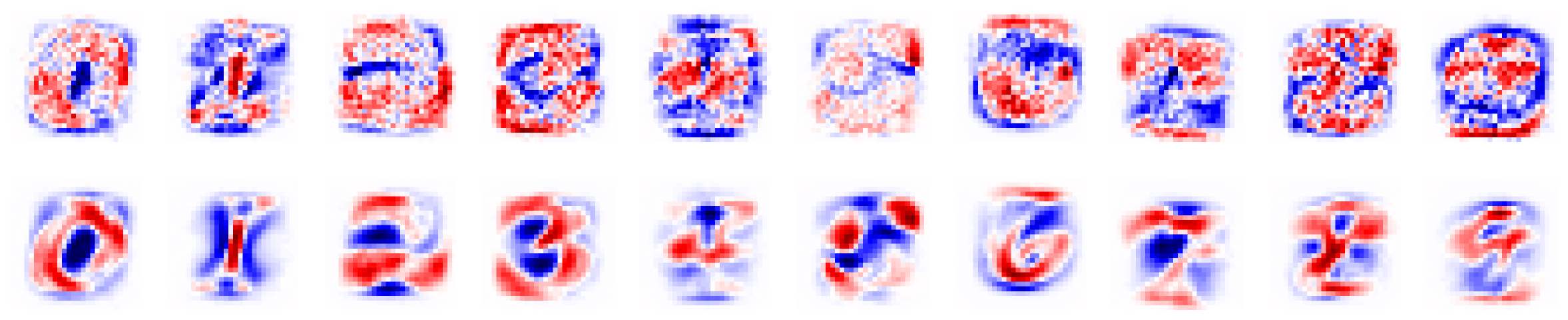}
    \caption{
      \textbf{Cross-entropy training matches $\lda$.}
      Top: last-layer weight matrix (rows reshaped as images) of a linear network trained with the Adam optimizer and cross-entropy loss.
      Bottom: closed-form $\lda$ weight formula.
      Dataset: MNIST~\citep{LeCun1998MNIST}.
      The near-perfect visual match empirically confirms the cross-entropy/$\lda$ equivalence.
    }
    \label{fig:mon_image}
\end{figure}

\begin{figure}[htbp]
    \centering
    \includegraphics[width=\textwidth]{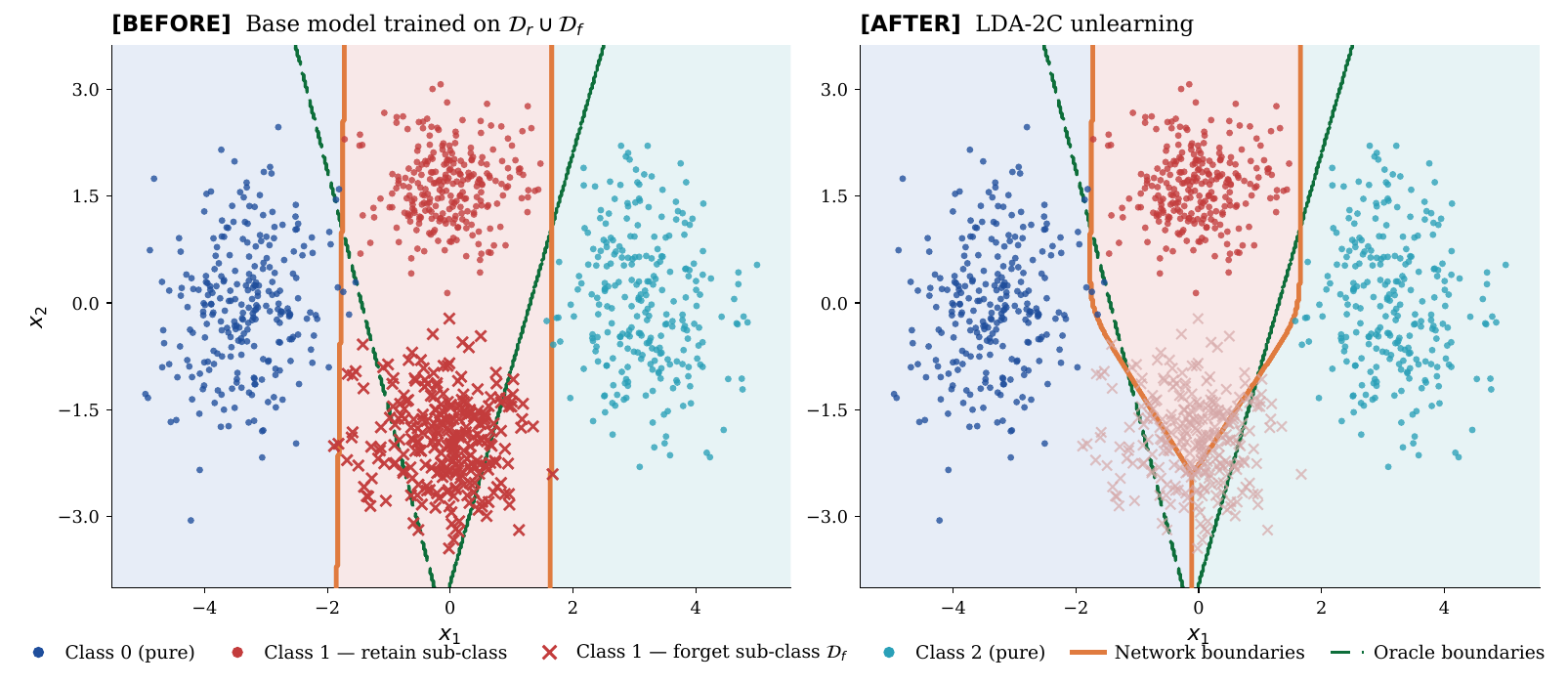}
        \caption{Illustration of the sub-class scenario over synthetic heteroscedastic Gaussian data. Class 1 is composed of two sub-classes, green and red points. The latter have to be unlearned. }
    \label{fig:illustration}
\end{figure}

\paragraph{Cost of Homoscedastic Assumption}
This subsection exhibits the cost to model the retain proxy by $\lda$ rather than $\qda$.

\begin{theorem}[Pythagorean Theorem of Information~\cite{csiszar1975divergence}.]
Let $\sP$ be a closed convex set of probability distributions, 
$p_0 \notin \sP$, and $p^\star = \arg\min_{p \in \sP} \kl{p}{p_0}$ 
the I-projection of $p_0$ onto $\sP$. 
Then for any $p \in \sP$:
\begin{equation}
    \kl{p}{p_0} \geq \kl{p}{p^\star} + \kl{p^\star}{p_0},
    \label{eq:pythagoras-general}
\end{equation}
with equality when $\sP$ is a linear family.
\end{theorem}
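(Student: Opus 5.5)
The plan is to prove the inequality by a direct expansion of the divergence gap, $\kl{p}{p_0}-\kl{p}{p^\star}$, in which the log-ratio $\log(p^\star/p_0)$ appears linearly in $p$. Fix an arbitrary $p\in\sP$ with $\kl{p}{p_0}<\infty$; otherwise the inequality is trivial. Writing the divergences as integrals, we have
\begin{equation*}
\kl{p}{p_0}-\kl{p}{p^\star}=\int p\log\frac{p^\star}{p_0},
\end{equation*}
and for $p=p^\star$ the right-hand side equals $\kl{p^\star}{p_0}$. The claim is therefore equivalent to the variational inequality
\begin{equation*}
\int (p-p^\star)\log\frac{p^\star}{p_0}\;\geq\;0\qquad\forall p\in\sP,
\end{equation*}
which I will extract from the optimality of $p^\star$.

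To obtain it, I consider the segment $p_t=(1-t)p^\star+tp$ for $t\in[0,1]$. It stays in $\sP$ by convexity, and $\kl{p_t}{p_0}$ exists as a minimum over $\sP$ because $\sP$ is closed and $\kl{\cdot}{p_0}$ is lower semicontinuous. The function $\phi(t)=\kl{p_t}{p_0}$ is convex and minimized at $t=0$, so its right derivative at $0$ is nonnegative. Differentiating under the integral gives
\begin{equation*}
\phi'(0^+)=\int (p-p^\star)\log\frac{p^\star}{p_0},
\end{equation*}
because the additional term $\int (p-p^\star)$ vanishes, both being probability densities. Nonnegativity of $\phi'(0^+)$ yields the variational inequality, and hence the Pythagorean inequality.

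The main obstacle is justifying this differentiation. The difference quotients $(\phi(t)-\phi(0))/t$ are monotone in $t$ by convexity of $u\mapsto u\log(u/p_0)$. I will split the integrand according to whether $p^\star>0$ or $p^\star=0$. On $\{p^\star>0\}$, monotone convergence applies, using the integrable upper bound at $t=1$. On $\{p^\star=0,\ p>0\}$, the quotient behaves like $\log t\to-\infty$, so $\phi'(0^+)=-\infty$ if this set has positive mass. That would contradict minimality, which shows $p\ll p^\star$ and handles the support issue. I also need to argue that the hypothesis $p_0\notin\sP$ is not actually used beyond guaranteeing that $\kl{p^\star}{p_0}>0$.

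For the equality case, suppose $\sP$ is linear, i.e. defined by moment constraints $\int p\,g_i=c_i$. If $p^\star$ lies in the relative interior, then $p_t\in\sP$ also for small negative $t$. A two-sided derivative exists and must vanish, so the variational inequality becomes an equality, and the Pythagorean identity holds with equality. Alternatively, I can invoke the exponential-family form $p^\star\propto p_0\exp(\sum_i\lambda_i g_i)$: then $\log(p^\star/p_0)$ is affine in the $g_i$, and its integral is identical for every $p\in\sP$.
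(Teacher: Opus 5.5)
The paper does not prove this theorem. It quotes it from Csisz\'ar (1975) and only uses its equality clause, for the Gaussian moment family $\sP_k$.

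Your argument for the inequality is Csisz\'ar's own proof, and it is correct: the segment $p_t$, difference quotients monotone in $t$ and dominated by the integrable quotient at $t=1$, the support lemma $p\ll p^\star$, then the variational inequality. Two small repairs are needed.
\begin{itemize}
\item Write the splitting $\kl{p}{p_0}=\kl{p}{p^\star}+\int p\log(p^\star/p_0)$ only after you know $\int p\log(p^\star/p_0)$ is finite. Finiteness follows because the limit integrand has integrable positive part and integral $\phi'(0^+)\geq 0$. As written, your opening identity risks $\infty-\infty$.
\item Drop the sentence on existence via lower semicontinuity. Existence of $p^\star$ is a hypothesis of the statement. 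Closedness plus lower semicontinuity would not give it anyway without a compactness argument.
\end{itemize}

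The genuine gap is the equality clause.

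Your first route needs $p_t=p^\star+t(p-p^\star)\geq 0$ for some $t<0$, i.e.\ $p/p^\star$ essentially bounded. ``$p^\star$ lies in the relative interior'' is neither defined nor automatic for a linear family of measures. On a finite alphabet you can close this with your own support lemma: $\mathrm{supp}\,p\subseteq\mathrm{supp}\,p^\star$ makes the ratio bounded, so $p_{-s}\in\sP$ for small $s>0$, and applying the variational inequality to $p_{-s}$ gives the reverse inequality. On $\R^d$, however, the argument breaks. This is exactly the paper's use: $p^\star=\mathcal{N}(\mu_k,\Sigma_k)$ and $p$ is any law with the same first two moments, for instance a heavier-tailed one. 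There the ratio is unbounded, and $p_t$ fails to be a probability density for every $t<0$.

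Your second route is the right one. As written, though, it assumes the structure theorem for I-projections onto linear families, which is the nontrivial part, and the exponential form $p^\star\propto p_0 e^{\lambda\cdot g}$ only holds on $\{p^\star>0\}$.

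For the paper's purpose you can bypass that theorem. Here $\log(p^\star/p_0)$ is explicitly a quadratic polynomial in $x$, hence affine in the constraint statistics $x$ and $xx^\top$. So $\int p\log(p^\star/p_0)$ is the same for every $p\in\sP_k$, namely $\kl{p^\star}{p_0}$. This gives the Pythagorean identity directly and shows at the same time that $p^\star$ is the I-projection. It is the same mechanism the paper appeals to when it notes that the Gaussian cross-entropy is constant on $\sP_k$.
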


We apply this theorem class by class conditioning on $Y=k$ for $k \in \{1,\ldots,C\}$. 
Let us define $\sP_k$ as the family of probabilities with mean $\mu_k$ and covariance $\Sigma_k$:
\begin{equation}
    \sP_k \doteq \left\{P:\, \E_P[X] = \mu_k, \E_P[(X-\mu_k)(X-\mu_k)^T] = \Sigma_k\right\}.
\end{equation}
This is a closed convex set of probabilities and a linear family 
in the sense of~\eqref{eq:pythagoras-general}, as it is defined 
by linear equality constraints on the sufficient statistics 
($P \mapsto \mathbb{E}_P[X]$ and $P \mapsto \mathbb{E}_P[XX^\top]$ 
are linear functionals of $P$),
so the Pythagorean equality holds exactly.

The I-projection of~\cite{https://doi.org/10.1002/ecja.4400660602} of $p_0 = \pminit^{\lda}_{r,X|Y=k} \notin \sP_k$ 
onto $\sP_k$ is indeed $\pminit^{\qda}_{r,X|Y=k}$, $\ie$ $\sN(\mu_k,\Sigma_k)$.
We write $\kl{p}{p_0}=\E_p[-\log p_0(X)]-H(p)$. Since $p_0$ is a Gaussian distribution, $-\log p_0(X)$ is quadratic in $X$ whose expectation depends only on the statistics $(\mu_k,\Sigma_k)$.
In other words, this cross-entropy term is constant over $\sP_k$.
Minimizing $\kl{p}{p_0}$ then amounts to maximzing the entropy $H(p)$ over $\sP_k$, which is achieved by $\sN(\mu_k,\Sigma_k)$~\cite{cover2006elements}.
\begin{equation*}
    p^\star 
    = \arg\min_{p \in \sP_k} \kl{p}{\pminit^\lda_{r,X|Y=k}}= \pminit^{\qda}_{r,X|Y=k}.
\end{equation*}

Assuming that $\pinit_{r,X|Y=k}\in\sP_k$, the equality holds exactly:
\begin{equation*}
    \kl[X|y=k]{\pinit_r}{\pminit^\lda}
    = \kl[X|y=k]{\pinit_r}{\pminit^\qda}
    + \kl[X|y=k]{\pminit^\qda}{\pminit^\lda}.
\end{equation*}

\begin{lemma}
The KL between two Gaussians with the same mean $\mu_k$ and different
covariances $\Sigma_k$ and $\Sigma$ respectively:
\begin{align*}
    \kl{\sN(\mu_k,\Sigma_k)}{\sN(\mu_k,\Sigma)}
    &= \frac{1}{2}\left(
        \log \frac{|\Sigma|}{|\Sigma_k|}
        + \mathrm{tr}(\Sigma^{-1}\Sigma_k) - d
    \right).
\end{align*}
\end{lemma}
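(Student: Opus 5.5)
Write $P=\sN(\mu_k,\Sigma_k)$ and $Q=\sN(\mu_k,\Sigma)$ on $\R^d$. I will compute $\kl{P}{Q}=\E_P[\log p(X)-\log q(X)]$ directly from the Gaussian log-densities. This uses the identity $\kl{p}{p_0}=\E_p[-\log p_0(X)]-H(p)$ already written above: the cross-entropy term depends only on the first two moments of $P$.

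\textbf{Step 1: log-densities.} For $X\sim P$,
\begin{equation*}
\log p(X)-\log q(X)=\tfrac12\log\frac{|\Sigma|}{|\Sigma_k|}-\tfrac12(X-\mu_k)^\top\Sigma_k^{-1}(X-\mu_k)+\tfrac12(X-\mu_k)^\top\Sigma^{-1}(X-\mu_k).
\end{equation*}
The normalizing constants $(2\pi)^{-d/2}$ cancel. Because both Gaussians share the mean $\mu_k$, there are no cross terms.

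\textbf{Step 2: quadratic forms via the trace trick.} For any symmetric $A$, I use $\E_P[(X-\mu_k)^\top A(X-\mu_k)]=\mathrm{tr}\big(A\,\E_P[(X-\mu_k)(X-\mu_k)^\top]\big)=\mathrm{tr}(A\Sigma_k)$. This gives:
\begin{itemize}
\item with $A=\Sigma_k^{-1}$, the value $\mathrm{tr}(I_d)=d$;
\item with $A=\Sigma^{-1}$, the value $\mathrm{tr}(\Sigma^{-1}\Sigma_k)$.
\end{itemize}
Substituting into the expectation of Step 1 yields exactly $\tfrac12\big(\log\frac{|\Sigma|}{|\Sigma_k|}+\mathrm{tr}(\Sigma^{-1}\Sigma_k)-d\big)$.

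\textbf{Where care is needed.} The only real point is to state the hypotheses: $\Sigma_k$ and $\Sigma$ must be symmetric positive definite, so the densities, inverses and determinants exist. I would also check that the log-ratio is integrable under $P$. It is a quadratic polynomial, so it is, and linearity of expectation together with the trace identity applies.

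As a sanity check, the expression vanishes when $\Sigma=\Sigma_k$. It is nonnegative because, in terms of the eigenvalues $\lambda_i>0$ of $\Sigma^{-1}\Sigma_k$, it equals $\tfrac12\sum_i(\lambda_i-1-\log\lambda_i)\ge 0$.
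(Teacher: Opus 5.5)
Your proof is correct. The paper states this lemma without proof as the standard Gaussian KL formula, and your computation is exactly the standard derivation that fills it in: cancel the normalizing constants, note that equal means remove the cross terms, then evaluate both quadratic forms via $\E_P[(X-\mu_k)^\top A(X-\mu_k)]=\mathrm{tr}(A\Sigma_k)$, consistent with the cross-entropy-minus-entropy viewpoint the paper invokes just before the lemma. Your remarks on positive definiteness and the eigenvalue form $\tfrac12\sum_i(\lambda_i-1-\log\lambda_i)\ge 0$ are also correct and make the implicit hypotheses explicit.
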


Finally, we link the above lemma to Prop.~\ref{thm:mainTHM} with the following result:
\begin{proposition} 
\label{prop:homo-hypo}
The explicit cost of the homoscedastic assumption, \ie\ $\lda$ proxy, on the bound of Prop.~\ref{thm:mainTHM}.
    \begin{equation*}
    \kl[X|Y]{\pideal}{\pmideal^\lda}
    = \kl[X|Y]{\pideal}{\pmideal^\qda}
    + \frac{1}{2}\sum_{k=1}^C \pideal(k)\left(
        \log \frac{|\Sigma|}{|\Sigma_k|}
        + \mathrm{tr}(\Sigma^{-1}\Sigma_k) - d
    \right).
\end{equation*}
\end{proposition}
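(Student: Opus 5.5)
The plan is to work class by class and then average over the label distribution. Fix a class $k\in\llbracket C\rrbracket$. The quantity $\kl[X|Y]{\pideal}{\pmideal^\lda}$ appearing in Prop.~\ref{thm:mainTHM} is the expectation over $Y\sim\pideal_Y$ of the class-conditional divergences $\kl[X|y=k]{\pideal}{\pmideal^\lda}$. It therefore suffices to establish, for each $k$,
\begin{equation*}
\kl[X|y=k]{\pideal}{\pmideal^\lda}=\kl[X|y=k]{\pideal}{\pmideal^\qda}+\tfrac12\left(\log\tfrac{|\Sigma|}{|\Sigma_k|}+\mathrm{tr}(\Sigma^{-1}\Sigma_k)-d\right),
\end{equation*}
then multiply by $\pideal(k)$ and sum.

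\textbf{Step 1: the per-class decomposition.} This is exactly the Pythagorean equality already derived above. We take $\sP_k$ to be the linear family of distributions with mean $\mu_k$ and covariance $\Sigma_k$, and we identify the I-projection of $\pmideal^\lda_{X|Y=k}=\sN(\mu_k,\Sigma)$ onto $\sP_k$ as $\sN(\mu_k,\Sigma_k)=\pmideal^\qda_{X|Y=k}$. The argument is that the cross-entropy against a Gaussian is constant on $\sP_k$, so minimizing the divergence reduces to maximizing entropy over $\sP_k$, and the Gaussian maximizes entropy there. Under the standing assumption $\pideal_{X|Y=k}\in\sP_k$, that is, the true retain class moments coincide with those used by the proxies, Csisz\'ar's theorem gives equality:
\begin{equation*}
\kl[X|y=k]{\pideal}{\pmideal^\lda}=\kl[X|y=k]{\pideal}{\pmideal^\qda}+\kl{\pmideal^\qda}{\pmideal^\lda}.
\end{equation*}

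\textbf{Step 2: the correction term.} The second divergence above is between two Gaussians with the same mean, so the preceding lemma applies. With equal means the Mahalanobis term vanishes, and it evaluates to $\tfrac12(\log|\Sigma|/|\Sigma_k|+\mathrm{tr}(\Sigma^{-1}\Sigma_k)-d)$. Substituting this into Step 1 and averaging with weights $\pideal(k)$ yields the proposition, since the left side and the first term on the right are, by definition, the $Y$-expectations in Prop.~\ref{thm:mainTHM}.

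\textbf{Main obstacle.} The delicate point is justifying that the LDA and QDA proxies share the mean $\mu_k$ per class, and that the true $\pideal_{X|Y=k}$ lies in $\sP_k$. This is an exact-moment assumption: it holds for population moments, and only approximately for empirical fits. I would state it explicitly as a hypothesis and note that the shared $\Sigma$ is the class-pooled covariance, so the correction term is $\geq 0$. Nonnegativity follows because it is itself a divergence, which also confirms that the LDA bound is never tighter than the QDA one. If one only assumes membership in a convex (non-linear) family, the same route gives the inequality $\geq$ instead of equality.
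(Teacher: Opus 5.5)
Your proposal is correct and follows the paper's own argument. You apply Csisz\'ar's Pythagorean equality class by class on the linear moment family $\sP_k$, identify the I-projection of the LDA Gaussian as the QDA Gaussian by maximum entropy, and assume $\pinit_{r,X\mid Y=k}\in\sP_k$; you then use the equal-mean Gaussian KL lemma and average with weights $\pideal(k)$. One small aside: nonnegativity of the correction term does not need $\Sigma$ to be the pooled covariance, since it holds for any positive-definite $\Sigma$ because the term is itself a KL divergence.
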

Then, an unlearner knows exactly the cost to pay in term of divergence when wanting to reduce the computational cost of various inverse covariances.

\section{Exact Unlearning Signal with empirical distribution}
\label{app:exact_dirac}

A proxy distribution that can be thought of is the empirical measure. What happens if we model the data structure with indicator function over the labeled data $(x,y(x))$ ? 

We derive a closed form $\DeltaM(x)$~\eqref{eq:delta-M} for
this proxy and discuss its consequences for the unlearned model
$\unlearn_\t = \init + \t\,\DeltaM$ \eqref{eq:exact_dirac_unlearn_t} and
for the KL distillation~\eqref{eq:lda2c-lstsq-grad}.
We use the framework of mixture models described in Sect.~\ref{sec:our_method}; in particular the mixture
$\pminit\sach{x}{y} = \pi_r(y)\pmideal\sach{x}{y} + \pi_f(y)\pminit_f\sach{x}{y}$
and posteriors~\eqref{eq:posterior}.
\begin{equation}
    \unlearn_\t(x) \doteq \init(x) + \t\,\DeltaM(x),
    \quad
    \DeltaM(x) = \log \pmideal\sach{\cdot}{x} - \log \pminit\sach{\cdot}{x},
    \quad 0\leq\t\leq 1.
    \label{eq:exact_dirac_unlearn_t}
\end{equation}
We propose to take both proxies are empirical measures on their respective support:
\begin{equation*}
    \pmideal\sach{x}{y}=\frac{1}{|\sD_r(y)|}\mathbf{1}\{(x,y)\in\sD_r\},
\end{equation*}
and likewise for $\pminit_f$ on $\sD_f$. $\pminit$ becomes a mixture of empirical measures on $\sD$:
\begin{equation*}
    \pminit\sach{x}{y} = \frac{\pi_r(y)}{|\sD_r(y)|}\mathbf{1}\{(x,y)\in\sD_r\} + \frac{\pi_f(y)}{|\sD_f(y)|}\mathbf{1}\{(x,y)\in\sD_f\}.
\end{equation*}
Since $\frac{\pi_r(y)}{|\sD_r(y)|} = \frac{|\sD_r(y)|}{|\sD(y)||\sD_r(y)|}= \frac{1}{|\sD(y)|}$, we simply recover:
\begin{equation*}
    \pminit\sach{x}{y} = \frac{1}{|\sD(y)|}\mathbf{1}\{(x,y)\in\sD\},
\end{equation*}
which is the initial empirical measure. This is the same as doing \emph{no} mixture.

For any $(x,y)\in\sD_r$, we have $\pmideal(y) = \pideal(y)$ and $\pmideal(x) = \sum_c \pideal (c)\pmideal\sach{x}{c} = \pideal(y)\pmideal\sach{x}{y}$.
The Bayes inversion~\eqref{eq:posterior} leaves $\pmideal\sach{y}{x} = \pmideal\sach{x}{y} \pideal(y) / \pmideal(x) = 1$. This also holds for $\pminit$. 
\begin{equation*}
    \pmideal\sach{y}{x} = \mathbf{1}\{(x,y)\in\sD_r\},\quad \pminit\sach{y}{x} = \mathbf{1}\{(x,y)\in\sD\}
\end{equation*}
Substituting in~\eqref{eq:exact_dirac_unlearn_t} gives
\begin{equation*}
    \DeltaM(x)_y = \log\,\frac{\mathbf{1}\bigl\{(x,y)\in\sD_r\bigr\}}{\mathbf{1}\bigl\{(x,y)\in\sD_r\bigr\} + \mathbf{1}\bigl\{(x,y)\in\sD_f\bigr\}}. 
\end{equation*}
Since $\sD_r\cap\sD_f = \emptyset$, we obtain the closed form
\begin{equation}
    \DeltaM(x)_y = \log \mathbf{1}\bigl\{(x,y)\in\sD_r\bigr\}.
    \label{eq:dd_DeltaM}
\end{equation}
This gives an unlearn model $\unlearn$ which completely annihilates the $(x,y)$ data (logit set to $-\infty$) unless it belongs to the retain set where the logit is left unchanged.

\paragraph{No more dependency on $\t$}
For any $\t>0$, the $-\infty$ entries dominates the softmax of
$\unlearn_\t$ uniformly, leaving the truncated probit in the limit:
\begin{equation}
\forall x\notin\sD_r,\,
    {\pbtarget_\t(x)}_c =
    \begin{cases}
        0 & c = y(x), \\
        \pbinit(x)_c\,/\,(1-\pbinit(x)_{y(x)}) & c \neq y(x).
    \end{cases}
    \label{eq:dd_target_softmax2}
\end{equation}
The truncated probit~\eqref{eq:dd_target_softmax2} carries zero mass at
$y$ on the forget set.
On the other hand, for all $x\in\sD_r$, $\pbtarget_\t(x) = \pbinit(x)$.
The unlearned probit is $\t$-independent.
The line search of Sect.~\ref{sub:proposed} is therefore degenerated.

\paragraph{Consequence on the distillation pipeline}
As it is, this correction makes little sense in practice, \ie\ for unseen testing data.
Yet, the KL distillation~\eqref{eq:lda2c-lstsq-grad} over $\sD$ remains well-posed because
$\kl{\pbtarget}{\softmax(f_{\theta+\delta\theta})}$ uses $\pbtarget$ as the
left argument and $0\cdot\log 0 = 0$. The KL distillation drives the unlearned NN to get closer to the
truncated probit~\eqref{eq:dd_target_softmax2}, but \emph{not} exactly equal to.
This makes a strong decrease of the probit related to $y(x)$ for the forget data.

\paragraph{Empirical exact unlearning in $2C$ dimensions}
\label{app:exact_dirac_2c}
The C-dimensional truncation~\eqref{eq:dd_target_softmax2} is too aggressive in the \subclass\ scenario.
We lift this truncation from
labels $y\in\llbracket C\rrbracket$ to 2C-dimensions labels $(y, s) \in \llbracket C\rrbracket\times\{r,f\}$.
This amounts to canceling the mass probability of $(y(x), f)$ for $x\in\sD_f$ and redistributing this quantity to the remaining cells $(y,s)$.
Each $s$-row needs a renormalization constant, equal to 
the probability $\pi_s(x)$ of a given sample $x$ to be in the retain ($s=r$) or the forget set ($s=f$).
The simplest inference uses a feature agnostic assumption: $\pminit\sach{s}{x,y} \simeq \pminit\sach{s}{y} =\pi_s(y)=\nicefrac{|\sD_s(y)|}{|\sD(y)|}$:
\begin{equation*}
    \pi_s(x) = \sum_y \pminit\sach{s}{x,y} \pminit\sach{y}{x}  
    \simeq \hat\pi_s(x) \doteq \sum_y \pminit\sach{y}{x} \pi_s(y).
\end{equation*}
Since $\pminit\sach{y}{x} = \mathbf{1}\bigl\{(x,y)\in\sD\bigr\}=\mathbf{1}\{y=y(x)\}$, then $\hat\pi_s(x) = \pi_s (y(x))$.
One can retrieve the dimension $C$ classifier:
\begin{equation*}
    \pbtarget(x)
    \doteq
    \hat\pi_r(x)\;\pbinit(x) + \hat\pi_f(x)\;\pbtarget_f(x),
    \label{eq:dd2c_target_C}
\end{equation*}
with $\pbtarget_f(x)$ defined as the previous the $C$ dimension unlearned model~\eqref{eq:dd_target_softmax2}.
In the end, we have a feature-agnostic unlearned model using only the empirical distributions.
Again, it makes little sense in practice, \ie\ for unseen testing data.
The KL distillation drives the unlearned NN to get closer to this target.
This results in a slight decrease of the probit related to $y(x)$ for the forget data.
This approximation makes experimentally a strong unlearning method for the \subclass\ scenario.

\section{Hypothesis Test and KL divergence}
\label{app:stein}
This section proposes an alternative to the evaluation via a Member Inference Attack.

Imagine that an attacker has access to a classifier in a black-box.
He knows that this black box contains either the ideal model retrained from scratch $\pideal$ or the unlearned NN $\punlearn$.
He queries the back box with some random data $X$ and observe the output $\hat{Y}$, the predicted label.
Of note, this is a worst case scenario for the attacker. MIAs in the literature agree on a more favorable scenario where the attack has access to the probit or logit vector.

This section makes the connection between $\KL_f$ (or $\KL_t$), as reported in the experimental results,
and the number of queries necessary to identify the classifier in the black-box with a given reliability.

For a given input $x$, we have the following hypothesis:
\begin{description}
    \item[$\mathcal{H}_0$:] $\hat{Y}\sim \pideal_{Y|x}$
    \item[$\mathcal{H}_1$:] $\hat{Y}\sim \punlearn_{Y|x}$
\end{description}

The attacker's test outputs a binary decision $D\in\{0,1\}$, which has the following Bernoulli distribution:
\begin{description}
    \item[$\mathcal{H}_0$:] The test makes a \emph{false positive} whenever $D=1$. Let us denote the probability of false positive by $\alpha$. Then, $D\sim\mathcal{B}(\alpha)$.
    \item[$\mathcal{H}_1$:] The test makes a \emph{false negative} whenever $D=0$. Let us denote the probability of false negative by $\beta$. Then, $D\sim\mathcal{B}(1-\beta)$.
\end{description}
Suppose that the attacker design the test for achieving an Equal Error Rate, \ie\ $\alpha=\beta$, since the error rates $\alpha$ and $\beta$ matters equally. 

The data processing theorem states that any processing of the data $\hat{Y}$ reduces the KL divergence. The attacker's test is a processing, therefore:
\begin{align*}
    \kl[Y|x]{\pideal}{\punlearn} &\geq \kl{\mathcal{B}(\alpha)}{\mathcal{B}(1-\beta)}=(1-\alpha)\log\left(\frac{1-\alpha}{\beta}\right)+\alpha\log\left(\frac{\alpha}{1-\beta}\right)\\
    &\stackrel{(\alpha=\beta)}{=}(1-2\alpha)\log\left(\frac{1-\alpha}{\alpha}\right).
\end{align*}
This inequality is at the root of the Stein lemma.

Querying $N>1$ inputs brings more information. Provided they are independent (assumption \Aiid), their KL divergences sum up.
This sum can be written $N\cdot\E\kl[Y|X]{\pideal}{\punlearn}$ where the expectation is indeed an average over the queries as reported experimentally in Sect.~\ref{sec:experiments}.
At the end, we get a necessary number of queries to reach a reliability given by $1-\alpha$:
\begin{equation*}
    N \geq \frac{(1-2\alpha)\log\left(\nicefrac{1-\alpha}{\alpha}\right)}{\E\kl[Y|X]{\pideal}{\punlearn}}.
\end{equation*}
This number comes with two flavors. The attacker may query test data (resp. forget data) and we plug $\KL_t$ (resp. $\KL_f$) in the formula.
For instance, for $\alpha = 1/1\,000$, $N_t = 138$ for $\KL_t=0.05$, $N_f=69$ for $\KL=0.1$, which are typical values reported in Sect.~\ref{sec:experiments}.
The good question is to compare $N_f$ with the actual number of forget data: if $|\sD_f|<N_f$, then the attacker is unable to distinguish the two classifiers with forget data, and even if he knows all of them.

\section{Handling large NN}
\label{app:random_projection}

This appendix shows that scaling our framework to networks whose hidden
representations are too high-dimensional for a direct $\lda/\qda$ fit
requires \emph{no theoretical change}. It is purely a question of choice of the proxies.

\paragraph{Composed proxy}

The whole machinery of Sect.~\ref{sec:our_method} only requires to model the input distribution so that
$\Delta\minit(x) = \mideal(x) - \minit(x)$ can be computed in closed form.
Up to now we instantiated this proxy directly on $x$ via
$\lda$ or $\qda$. When $x$ lives in a space too large for a class-conditional
Gaussian fit, we simply compose:
\begin{equation*}
    \pminit\sach{x}{y}
    \;\doteq\;
    \pminit_{0}\sach{F(x)}{y},
    \qquad
    F\colon \sX \to \R^k,
    \label{eq:rp_proxy}
\end{equation*}
where $\pminit_{0}$ is the standard $\lda$ or $\qda$ proxy on the space $F(\sX)$ and $F$ is any fixed injection from the high-dimensional input space onto a low-dimensional
target. The retain proxy $\pmideal$ is built identically with the same $F$.
Plugging~\eqref{eq:rp_proxy} into the framework yields a logit shift
\(
\Delta\minit(x) = \Delta\minit_{0}\!\big(F(x)\big)
\)
that is computed in $\R^k$.
All the closed-form properties of discriminant analysis are preserved.
Assumptions and the bounds of Sect.~\ref{sec:our_method} carry over verbatim.
Any $F$ that yields a low-dimensional representation on which $\lda/\qda$ is accurate is admissible.

\paragraph{Practical instance for ResNet-18}

The specific $F$ we use in the ResNet-18 experiments is anecdotal --- it is
one convenient choice among many. We take
\begin{equation}
    F(x) \;=\; \Omega\,\phi^{(3)}(x),
    \label{eq:rp_F}
\end{equation}
where $\phi^{(3)}$ is the activation just after the third residual stage
($\dim\sim 8{\times}10^3$ at our resolution) and
$\Omega\!\in\!\R^{k\times d}$ ($k\!=\!512$) is a deterministic
semi-orthogonal Johnson--Lindenstrauss
sketch~\citep{johnson1984extensions,dasgupta2003elementary} obtained as the
orthogonal factor of a Gaussian matrix via QR decomposition, with a fixed seed shared
across all proxy variants. The hook depth ($\phi^{(3)}$) is the standard
mid-level/pre-classification choice: earlier stages have too small a
receptive field, later ones already encode logits and would make the proxy
redundant with $\init$.

$F$ enters only the proxy and enables recovering approximately Gaussian
per-class features, as the semi-orthogonal projection of the high-dimensional
activations induces a sub-Gaussian design on each class
conditional~\citep{derezinski2023gaussianization,vershynin2018highdim}.
The MSE finalizer regresses the per-sample target
$\Delta(x) = \t_{\max}\Delta\minit_{0}(F(x))$ on the network's own penultimate
representation $\phi(x)$ by
closed-form least squares.
The alternative finalizer, the KL distillation,  distils the full logit target
into the network by backpropagation. The proxy and the
finalizer may live in different representations, exactly as in the DINOv2 pipeline.

\paragraph{Scalability}
This framework scales to arbitrary backbones at fixed cost in $k$ because the proxy fit is performed in $\R^k$ regardless of the input
dimension.
Operator $F$ is the only choice that needs revisiting for a new
architecture: any low-dimensional representation on which $\lda/\qda$ is reasonable
(an intermediate activation, a learned embedding, a sketch of any of these)
is a valid drop-in. One has to construct $F$ to make the good proxy assumptions hold.
\section{Experiments --- supplementary material}
\label{app:experiments}

\subsection{Setup details}
\label{app:experiments_setup}

\paragraph{Protocol.}
Datasets: CIFAR-10 / CIFAR-100~\citep{Krizhevsky2009cifar} under licence MIT (50k / 10k
torchvision splits). DINOv2-Small features
($d{=}384$)~\citep{oquab2023dinov2} are pre-extracted once; ResNet-18
takes raw $32\times 32$ images upsampled to $64\times 64$. CIFAR-100's
20 superclasses define the \subclass\ head. Frozen DINOv2 heads are
trained from scratch with Adam at $\mathrm{lr}{=}10^{-3}$ for 50
epochs, batch 512; ResNet-18 is fine-tuned from ImageNet for 20 epochs
at $\mathrm{lr}{=}10^{-4}$, batch 128. Iterative methods share the
same recipe per backbone (multiplicative decay $\gamma{=}0.95$, 20
epochs); for each method we record every epoch and select the best by
min $\mathrm{KL}_f$ on the training data, uniform across methods.
 Five seeds
$\{42, 0, 1, 2, 3\}$, five class / subclass IDs (CIFAR-10:
$\{0, 2, 4, 6, 8\}$, CIFAR-100: $\{0, 20, 40, 60, 80\}$),
$n_f \in \{1, 50, 500, 5000\}$ for \random.

\paragraph{Hardware.}
A cluster of GPUs V100 / A100 nodes; cells were not pinned to a single machine,
so we report \emph{normalised} wall-clock
$\mathrm{RTE} / \mathrm{RTE}_{\textsc{Retrain}}$ (per-seed, same node). We made the computation across a cluster of GPUs. Total compute for
the full grid is on the order of $200$ GPU-hours. Code release upon publication; every table / figure regenerates with one command from
the released raw JSON results.

\subsection{Auxiliary ablations}
\label{app:experiments_aux}
\subsubsection{Structure of the released \texttt{results/} folder}
\label{app:results_layout}

To enable an exhaustive replication of every tables in the paper, the full set of raw results is released as a JSON tree in the supplementary file.
This represents $2 \text{ datasets}\times 4 \text{ architectures} \times \text{ scenarios } (5+5+4) = 112$ variants with the raw per-seed outputs of every run.
Layout:
\begin{verbatim}
results/
  {dataset}/                  # cifar10 | cifar100
    {arch_kind}/              # dinov2  | resnet
      {scenario}_{arch}_raw.json
        # scenario : class | subclass | random
        # arch     : linear | mlp1 | mlp2 | resnet18
\end{verbatim}
Each JSON file is keyed first by the architecture short name, then
either flat \texttt{\{sub\_key: [seed\_entry, $\dots$]\}} (for
\textsc{class}, \textsc{random}) or wrapped as
\texttt{\{"meta": $\dots$, "results": \{sub\_key: [$\dots$]\}\}} (for
\textsc{subclass}). Sub-key conventions:
\begin{itemize}
  \item \class\ --- class index of the forgotten class.
        CIFAR-10: $\{0,2,4,6,8\}$, CIFAR-100: $\{0,20,40,60,80\}$.
  \item \subclass\ --- fine class index inside its superclass; same sets
        as above.
  \item \random\ --- number of forgotten samples
        $n_f \in \{1, 50, 500, 5000\}$.
\end{itemize}
A \texttt{seed\_entry} is a dict whose top-level keys are method
names. Per method we store a \texttt{best} block (best epoch by
$\mathrm{KL}_f$ on the train set), an \texttt{epochs} list (one entry
per epoch with all metrics), and (for our variants) a
\texttt{target} block reporting the proxy metrics
($\t_{\max}$, $\mathrm{KL}_{t}$, $\mathrm{KL}_{f}$,
$\mathrm{KL}_{\textsc{net}\rightarrow\textsc{proxy}}$ before/after,
\dots). The reference rows initial ($\pinit$) and retrained
($\pideal$) are stored at the top level of each seed entry.

\subsubsection{Ablation tables (non-exhaustive)}
\label{app:hero_ablation_tables}

The presented results are best $\KL$ over epochs: it favors the SOTA baselines, it has to be put in contrast with the last epoch KL pointing the stability of our method. As suggested in Fig.~\ref{fig:hero-evolution} SOTA baselines are unstable through time or show inefficacy in terms of $\KL$ when there is something to unlearn: if the $\KL$ of the initial model from the retrained is high. 

\paragraph{Across forgetting scenarios (CIFAR-10, DINOv2 + MLP-1).}
Reproduces the main table on \subclass\ and adds the \class\ and
\random\ companions at their default sub-keys.
\begin{table*}[t]
  \centering
  \small
  \caption{Benchmark on CIFAR10 / \texttt{dinov2-mlp1}, scenario
  {\textit{CLASS}} (airplane). Mean$\pm$std over 5 seeds.
  $\RTE$ is normalised by the $\RTE$ retrain. \textbf{Bold}: top 3 best on $\KL$ columns, best on others (closest to $\pbideal$ for $\Acc_{f}$).}
  \label{tab:hero-benchmark-cifar10-dinov2-mlp1-class-0}
  \begin{tabular}{lcccccc}
    \toprule
     & $\KL_{\TEST}$ (nats) & $\KL_{\LAST}$ (nats) & $\KL_{\FORGET}$ (nats) & $\Acc_{\TEST}$ (\%) & $\Acc_{\FORGET}$ (\%) & $\RTE$ (\%) \\
    \midrule
  $\pbinit$      & $1.5 \pm 0.02$ & -- & $16 \pm 0.19$ & $96.9 \pm 0.1$ & $100.0 \pm 0.0$ & -- \\
  $\pbideal$     & $0.00 \pm 0.00$ & -- & $0.00 \pm 0.00$ & $87.3 \pm 0.1$ & $0.0 \pm 0.0$ & $100$ (ref) \\
    \midrule
  \LDAI          & $0.13 \pm 0.01$ & $0.14 \pm 0.01$ & $1.2 \pm 0.12$ & $87.2 \pm 0.1$ & $\boldsymbol{0.0 \pm 0.0}$ & $4.6 \pm 0.27$ \\
  \LDAII         & $\boldsymbol{0.07 \pm 0.01}$ & $\boldsymbol{0.07 \pm 0.01}$ & $\boldsymbol{0.52 \pm 0.08}$ & $87.2 \pm 0.1$ & $0.1 \pm 0.1$ & $5.2 \pm 0.49$ \\
  \LDAIJ         & $\boldsymbol{0.07 \pm 0.01}$ & $\boldsymbol{0.07 \pm 0.01}$ & $\boldsymbol{0.58 \pm 0.09}$ & $87.2 \pm 0.1$ & $\boldsymbol{0.0 \pm 0.0}$ & $4.7 \pm 0.22$ \\
  \DiracI        & $\boldsymbol{0.07 \pm 0.01}$ & $0.07 \pm 0.01$ & $0.58 \pm 0.09$ & $87.2 \pm 0.1$ & $\boldsymbol{0.0 \pm 0.0}$ & $4.8 \pm 0.35$ \\
  \DiracIJ       & $0.07 \pm 0.01$ & $\boldsymbol{0.07 \pm 0.01}$ & $\boldsymbol{0.58 \pm 0.11}$ & $87.1 \pm 0.1$ & $\boldsymbol{0.0 \pm 0.0}$ & $4.7 \pm 0.31$ \\
    \midrule
  SCRUB          & $0.23 \pm 0.11$ & $1.6 \pm 0.36$ & $2.1 \pm 1.1$ & $87.4 \pm 0.3$ & $4.8 \pm 2.9$ & $6.4 \pm 0.34$ \\
  SalUn          & $0.21 \pm 0.00$ & $0.21 \pm 0.00$ & $1.5 \pm 0.04$ & $86.3 \pm 0.2$ & $\boldsymbol{0.0 \pm 0.0}$ & $4.4 \pm 0.14$ \\
  RL+FT          & $0.18 \pm 0.00$ & $0.20 \pm 0.00$ & $1.5 \pm 0.05$ & $87.0 \pm 0.1$ & $\boldsymbol{0.0 \pm 0.0}$ & $4.7 \pm 0.43$ \\
  GA+FT          & $0.68 \pm 0.25$ & $0.68 \pm 0.25$ & $7.0 \pm 2.7$ & $87.2 \pm 0.1$ & $\boldsymbol{0.0 \pm 0.0}$ & $18 \pm 1.1$ \\
  FT             & $0.57 \pm 0.05$ & $0.57 \pm 0.05$ & $5.7 \pm 0.55$ & $\boldsymbol{94.5 \pm 0.3}$ & $74.8 \pm 2.7$ & $17 \pm 0.96$ \\
  GA             & $1.4 \pm 0.17$ & $876 \pm 43$ & $8.2 \pm 1.5$ & $79.1 \pm 0.9$ & $\boldsymbol{0.0 \pm 0.0}$ & $\boldsymbol{2.1 \pm 0.17}$ \\
    \bottomrule
  \end{tabular}
\end{table*}

\paragraph{Across architectures (CIFAR-10, scenario \subclass, sub-key 0).}
Same scenario as the main table, but the head varies from a single
linear layer to a 2-hidden-layer MLP and finally to a fully
fine-tuned ResNet-18.
\begin{table*}[t]
  \centering
  \small
  \caption{Benchmark on CIFAR10 / \texttt{dinov2-linear}, scenario
  {\textit{SUBCLASS}} (airplane). Mean$\pm$std over 5 seeds.
  $\RTE$ is normalised by the $\RTE$ retrain. \textbf{Bold}: top 3 best on $\KL$ columns, best on others (closest to $\pbideal$ for $\Acc_{f}$).}
  \label{tab:hero-benchmark-cifar10-dinov2-linear-subclass-0}
  \begin{tabular}{lcccccc}
    \toprule
     & $\KL_{\TEST}$ (nats) & $\KL_{\LAST}$ (nats) & $\KL_{\FORGET}$ (nats) & $\Acc_{\TEST}$ (\%) & $\Acc_{\FORGET}$ (\%) & $\RTE$ (\%) \\
    \midrule
  $\pbinit$      & $0.39 \pm 0.01$ & -- & $3.8 \pm 0.11$ & $99.5 \pm 0.0$ & $99.2 \pm 0.0$ & -- \\
  $\pbideal$     & $0.00 \pm 0.00$ & -- & $0.00 \pm 0.00$ & $95.9 \pm 0.1$ & $62.0 \pm 1.0$ & $100$ (ref) \\
    \midrule
  \LDAI          & $\boldsymbol{0.05 \pm 0.00}$ & $\boldsymbol{0.05 \pm 0.00}$ & $\boldsymbol{0.40 \pm 0.02}$ & $95.1 \pm 0.0$ & $53.6 \pm 0.1$ & $5.1 \pm 0.79$ \\
  \LDAII         & $0.15 \pm 0.01$ & $0.15 \pm 0.00$ & $1.5 \pm 0.06$ & $\boldsymbol{98.9 \pm 0.0}$ & $92.3 \pm 0.2$ & $5.3 \pm 0.31$ \\
  \LDAIJ         & $\boldsymbol{0.04 \pm 0.00}$ & $\boldsymbol{0.04 \pm 0.00}$ & $\boldsymbol{0.35 \pm 0.01}$ & $96.3 \pm 0.0$ & $\boldsymbol{63.3 \pm 0.2}$ & $4.8 \pm 0.51$ \\
  \DiracI        & $0.11 \pm 0.00$ & $0.46 \pm 0.01$ & $0.63 \pm 0.02$ & $92.8 \pm 0.2$ & $40.3 \pm 1.9$ & $5.0 \pm 0.76$ \\
  \DiracIJ       & $0.06 \pm 0.00$ & $0.06 \pm 0.00$ & $0.44 \pm 0.02$ & $96.2 \pm 0.0$ & $66.0 \pm 0.5$ & $5.1 \pm 0.74$ \\
    \midrule
  SCRUB          & $0.05 \pm 0.00$ & $0.06 \pm 0.00$ & $0.52 \pm 0.02$ & $96.0 \pm 0.3$ & $64.1 \pm 2.4$ & $7.2 \pm 1.2$ \\
  SalUn          & $0.13 \pm 0.00$ & $0.55 \pm 0.01$ & $0.60 \pm 0.02$ & $93.2 \pm 0.2$ & $51.4 \pm 1.3$ & $5.0 \pm 0.74$ \\
  RL+FT          & $0.11 \pm 0.00$ & $0.48 \pm 0.02$ & $0.63 \pm 0.02$ & $92.8 \pm 0.2$ & $40.3 \pm 2.0$ & $4.6 \pm 0.34$ \\
  GA+FT          & $0.06 \pm 0.01$ & $0.06 \pm 0.01$ & $0.62 \pm 0.05$ & $97.9 \pm 0.0$ & $82.6 \pm 0.3$ & $19 \pm 2.1$ \\
  FT             & $\boldsymbol{0.04 \pm 0.00}$ & $\boldsymbol{0.04 \pm 0.01}$ & $\boldsymbol{0.36 \pm 0.04}$ & $97.3 \pm 0.1$ & $77.6 \pm 0.9$ & $18 \pm 2.6$ \\
  GA             & $0.14 \pm 0.01$ & $19 \pm 0.11$ & $0.68 \pm 0.02$ & $93.4 \pm 0.2$ & $57.0 \pm 1.5$ & $\boldsymbol{1.9 \pm 0.21}$ \\
    \bottomrule
  \end{tabular}
\end{table*}

\begin{table*}[t]
  \centering
  \small
  \caption{Benchmark on CIFAR10 / \texttt{dinov2-mlp2}, scenario
  {\textit{SUBCLASS}} (airplane). Mean$\pm$std over 5 seeds.
  $\RTE$ is normalised by the $\RTE$ retrain. \textbf{Bold}: top 3 best on $\KL$ columns, best on others (closest to $\pbideal$ for $\Acc_{f}$).}
  \label{tab:hero-benchmark-cifar10-dinov2-mlp2-subclass-0}
  \begin{tabular}{lcccccc}
    \toprule
     & $\KL_{\TEST}$ (nats) & $\KL_{\LAST}$ (nats) & $\KL_{\FORGET}$ (nats) & $\Acc_{\TEST}$ (\%) & $\Acc_{\FORGET}$ (\%) & $\RTE$ (\%) \\
    \midrule
  $\pbinit$      & $0.48 \pm 0.02$ & -- & $4.8 \pm 0.20$ & $99.7 \pm 0.0$ & $100.0 \pm 0.0$ & -- \\
  $\pbideal$     & $0.00 \pm 0.00$ & -- & $0.00 \pm 0.00$ & $95.7 \pm 0.2$ & $60.3 \pm 1.4$ & $100$ (ref) \\
    \midrule
  \LDAI          & $\boldsymbol{0.05 \pm 0.01}$ & $\boldsymbol{0.07 \pm 0.00}$ & $\boldsymbol{0.49 \pm 0.07}$ & $93.9 \pm 0.3$ & $42.6 \pm 2.8$ & $5.1 \pm 0.06$ \\
  \LDAII         & $0.06 \pm 0.00$ & $0.08 \pm 0.00$ & $0.63 \pm 0.04$ & $98.5 \pm 0.0$ & $88.3 \pm 0.6$ & $5.4 \pm 0.67$ \\
  \LDAIJ         & $\boldsymbol{0.04 \pm 0.00}$ & $\boldsymbol{0.05 \pm 0.00}$ & $\boldsymbol{0.40 \pm 0.03}$ & $94.7 \pm 0.6$ & $47.1 \pm 5.7$ & $5.0 \pm 0.08$ \\
  \DiracI        & $0.47 \pm 0.02$ & $0.62 \pm 0.03$ & $4.9 \pm 0.14$ & $89.9 \pm 0.2$ & $2.6 \pm 1.3$ & $5.0 \pm 0.05$ \\
  \DiracIJ       & $\boldsymbol{0.05 \pm 0.00}$ & $\boldsymbol{0.05 \pm 0.00}$ & $\boldsymbol{0.50 \pm 0.02}$ & $\boldsymbol{99.5 \pm 0.1}$ & $99.2 \pm 0.3$ & $5.0 \pm 0.06$ \\
    \midrule
  SCRUB          & $0.11 \pm 0.02$ & $3.3 \pm 0.19$ & $1.1 \pm 0.17$ & $96.2 \pm 1.2$ & $\boldsymbol{65.8 \pm 12.1}$ & $6.7 \pm 0.10$ \\
  SalUn          & $0.77 \pm 0.05$ & $0.81 \pm 0.06$ & $8.1 \pm 0.52$ & $89.6 \pm 0.1$ & $0.3 \pm 0.1$ & $4.7 \pm 0.12$ \\
  RL+FT          & $0.48 \pm 0.04$ & $0.63 \pm 0.05$ & $5.0 \pm 0.43$ & $90.0 \pm 0.2$ & $3.3 \pm 0.7$ & $4.4 \pm 0.05$ \\
  GA+FT          & $0.82 \pm 0.30$ & $0.84 \pm 0.29$ & $9.2 \pm 3.5$ & $91.9 \pm 0.3$ & $19.5 \pm 2.4$ & $19 \pm 0.32$ \\
  FT             & $0.26 \pm 0.03$ & $0.37 \pm 0.06$ & $2.7 \pm 0.32$ & $99.2 \pm 0.3$ & $95.0 \pm 2.7$ & $19 \pm 0.70$ \\
  GA             & $11 \pm 3.6$ & $24650 \pm 2051$ & $28 \pm 8.8$ & $60.1 \pm 0.1$ & $0.0 \pm 0.0$ & $\boldsymbol{2.2 \pm 0.07}$ \\
    \bottomrule
  \end{tabular}
\end{table*}

\begin{table*}[t]
  \centering
  \small
  \caption{Benchmark on CIFAR10 / \texttt{resnet18}, scenario
  {\textit{SUBCLASS}} (airplane). Mean$\pm$std over 5 seeds.
  $\RTE$ is normalised by the $\RTE$ retrain. \textbf{Bold}: top 3 best on $\KL$ columns, best on others (closest to $\pbideal$ for $\Acc_{f}$).}
  \label{tab:hero-benchmark-cifar10-resnet18-subclass-0}
  \begin{tabular}{lcccccc}
    \toprule
     & $\KL_{\TEST}$ (nats) & $\KL_{\LAST}$ (nats) & $\KL_{\FORGET}$ (nats) & $\Acc_{\TEST}$ (\%) & $\Acc_{\FORGET}$ (\%) & $\RTE$ (\%) \\
    \midrule
  $\pbinit$      & $0.51 \pm 0.03$ & -- & $5.3 \pm 0.25$ & $98.0 \pm 0.1$ & $100.0 \pm 0.0$ & -- \\
  $\pbideal$     & $0.00 \pm 0.00$ & -- & $0.00 \pm 0.00$ & $93.7 \pm 0.1$ & $49.1 \pm 0.6$ & $100$ (ref) \\
    \midrule
  \LDAI          & $0.21 \pm 0.03$ & $0.21 \pm 0.02$ & $2.0 \pm 0.24$ & $97.0 \pm 0.2$ & $98.9 \pm 1.1$ & $21 \pm 0.85$ \\
  \LDAII         & $\boldsymbol{0.20 \pm 0.02}$ & $\boldsymbol{0.21 \pm 0.01}$ & $2.0 \pm 0.17$ & $97.1 \pm 0.3$ & $98.6 \pm 1.1$ & $21 \pm 0.85$ \\
  \LDAIJ         & $\boldsymbol{0.20 \pm 0.01}$ & $\boldsymbol{0.21 \pm 0.01}$ & $2.0 \pm 0.16$ & $97.3 \pm 0.2$ & $99.2 \pm 0.6$ & $21 \pm 0.94$ \\
  \DiracI        & $0.22 \pm 0.03$ & $0.29 \pm 0.05$ & $\boldsymbol{0.79 \pm 0.06}$ & $92.4 \pm 1.2$ & $79.8 \pm 5.0$ & $18 \pm 0.79$ \\
  \DiracIJ       & $\boldsymbol{0.16 \pm 0.00}$ & $\boldsymbol{0.18 \pm 0.01}$ & $1.5 \pm 0.19$ & $96.8 \pm 0.8$ & $97.2 \pm 2.6$ & $18 \pm 0.76$ \\
    \midrule
  SCRUB          & $0.41 \pm 0.03$ & $0.45 \pm 0.05$ & $3.6 \pm 0.23$ & $95.9 \pm 0.2$ & $92.4 \pm 2.9$ & $19 \pm 0.83$ \\
  SalUn          & $0.28 \pm 0.03$ & $0.33 \pm 0.05$ & $\boldsymbol{0.60 \pm 0.02}$ & $86.2 \pm 2.6$ & $\boldsymbol{52.0 \pm 10.9}$ & $15 \pm 0.60$ \\
  RL+FT          & $0.22 \pm 0.07$ & $0.29 \pm 0.05$ & $\boldsymbol{0.81 \pm 0.04}$ & $92.2 \pm 2.6$ & $79.8 \pm 10.6$ & $14 \pm 0.57$ \\
  GA+FT          & $0.46 \pm 0.03$ & $0.60 \pm 0.03$ & $5.0 \pm 0.18$ & $97.7 \pm 0.2$ & $99.7 \pm 0.3$ & $65 \pm 2.8$ \\
  FT             & $0.58 \pm 0.04$ & $0.73 \pm 0.07$ & $6.1 \pm 0.13$ & $\boldsymbol{97.8 \pm 0.2}$ & $99.7 \pm 0.6$ & $64 \pm 2.6$ \\
  GA             & $1.3 \pm 0.28$ & $2.7 \pm 0.41$ & $2.9 \pm 0.13$ & $82.9 \pm 2.8$ & $45.5 \pm 6.5$ & $\boldsymbol{7.0 \pm 0.25}$ \\
    \bottomrule
  \end{tabular}
\end{table*}

\paragraph{Across datasets (DINOv2 + MLP-1, scenario \subclass, sub-key 0).}
The CIFAR-100 counterpart of the main table.
\begin{table*}[t]
  \centering
  \small
  \caption{Benchmark on CIFAR100 / \texttt{dinov2-mlp1}, scenario
  {\textit{SUBCLASS}} (apple). Mean$\pm$std over 5 seeds.
  $\RTE$ is normalised by the $\RTE$ retrain. \textbf{Bold}: top 3 best on $\KL$ columns, best on others (closest to $\pbideal$ for $\Acc_{f}$).}
  \label{tab:hero-benchmark-cifar100-dinov2-mlp1-subclass-0}
  \begin{tabular}{lcccccc}
    \toprule
     & $\KL_{\TEST}$ (nats) & $\KL_{\LAST}$ (nats) & $\KL_{\FORGET}$ (nats) & $\Acc_{\TEST}$ (\%) & $\Acc_{\FORGET}$ (\%) & $\RTE$ (\%) \\
    \midrule
  $\pbinit$      & $0.08 \pm 0.00$ & -- & $0.35 \pm 0.07$ & $92.4 \pm 0.0$ & $100.0 \pm 0.0$ & -- \\
  $\pbideal$     & $0.00 \pm 0.00$ & -- & $0.00 \pm 0.00$ & $92.3 \pm 0.1$ & $96.2 \pm 0.7$ & $100$ (ref) \\
    \midrule
  \LDAI          & $0.11 \pm 0.01$ & $0.09 \pm 0.00$ & $0.32 \pm 0.07$ & $91.9 \pm 0.2$ & $100.0 \pm 0.0$ & $1.2 \pm 0.03$ \\
  \LDAII         & $0.09 \pm 0.00$ & $\boldsymbol{0.09 \pm 0.00}$ & $\boldsymbol{0.12 \pm 0.02}$ & $92.2 \pm 0.0$ & $94.9 \pm 2.3$ & $1.5 \pm 0.43$ \\
  \LDAIJ         & $0.10 \pm 0.01$ & $0.09 \pm 0.00$ & $0.32 \pm 0.07$ & $92.1 \pm 0.2$ & $100.0 \pm 0.0$ & $1.2 \pm 0.03$ \\
  \DiracI        & $0.09 \pm 0.00$ & $0.15 \pm 0.01$ & $0.31 \pm 0.07$ & $92.2 \pm 0.1$ & $84.5 \pm 3.9$ & $1.1 \pm 0.05$ \\
  \DiracIJ       & $0.09 \pm 0.00$ & $\boldsymbol{0.09 \pm 0.00}$ & $0.18 \pm 0.03$ & $92.2 \pm 0.1$ & $98.7 \pm 0.4$ & $1.2 \pm 0.05$ \\
    \midrule
  SCRUB          & $\boldsymbol{0.09 \pm 0.00}$ & $\boldsymbol{0.09 \pm 0.00}$ & $0.28 \pm 0.12$ & $92.3 \pm 0.1$ & $98.2 \pm 2.3$ & $0.63 \pm 0.03$ \\
  SalUn          & $\boldsymbol{0.09 \pm 0.00}$ & $0.52 \pm 0.03$ & $\boldsymbol{0.10 \pm 0.02}$ & $\boldsymbol{92.3 \pm 0.0}$ & $\boldsymbol{96.7 \pm 0.2}$ & $0.45 \pm 0.04$ \\
  RL+FT          & $\boldsymbol{0.09 \pm 0.00}$ & $0.26 \pm 0.01$ & $0.16 \pm 0.02$ & $92.2 \pm 0.0$ & $93.2 \pm 1.3$ & $0.60 \pm 0.05$ \\
  GA+FT          & $0.13 \pm 0.01$ & $0.13 \pm 0.00$ & $0.19 \pm 0.06$ & $91.6 \pm 0.2$ & $98.8 \pm 0.4$ & $17 \pm 0.86$ \\
  FT             & $0.12 \pm 0.00$ & $0.13 \pm 0.00$ & $0.29 \pm 0.05$ & $91.9 \pm 0.1$ & $100.0 \pm 0.1$ & $17 \pm 0.64$ \\
  GA             & $0.09 \pm 0.00$ & $4.2 \pm 0.14$ & $\boldsymbol{0.13 \pm 0.02}$ & $92.2 \pm 0.1$ & $93.3 \pm 0.8$ & $\boldsymbol{0.21 \pm 0.01}$ \\
    \bottomrule
  \end{tabular}
\end{table*}

\begin{table*}[t]
  \centering
  \small
  \caption{Benchmark on CIFAR100 / \texttt{resnet18}, scenario
  {\textit{SUBCLASS}} (apple). Mean$\pm$std over 5 seeds.
  $\RTE$ is normalised by the $\RTE$ retrain. \textbf{Bold}: top 3 best on $\KL$ columns, best on others (closest to $\pbideal$ for $\Acc_{f}$).}
  \label{tab:hero-benchmark-cifar100-resnet18-subclass-0}
  \begin{tabular}{lcccccc}
    \toprule
     & $\KL_{\TEST}$ (nats) & $\KL_{\LAST}$ (nats) & $\KL_{\FORGET}$ (nats) & $\Acc_{\TEST}$ (\%) & $\Acc_{\FORGET}$ (\%) & $\RTE$ (\%) \\
    \midrule
  $\pbinit$      & $0.90 \pm 0.02$ & -- & $1.4 \pm 0.19$ & $77.0 \pm 0.3$ & $100.0 \pm 0.0$ & -- \\
  $\pbideal$     & $0.00 \pm 0.00$ & -- & $0.00 \pm 0.00$ & $77.0 \pm 0.4$ & $86.4 \pm 1.5$ & $100$ (ref) \\
    \midrule
  \LDAI          & $0.93 \pm 0.01$ & $0.93 \pm 0.01$ & $0.81 \pm 0.06$ & $72.1 \pm 0.2$ & $89.3 \pm 1.3$ & $7.7 \pm 0.09$ \\
  \LDAII         & $\boldsymbol{0.93 \pm 0.01}$ & $\boldsymbol{0.93 \pm 0.01}$ & $\boldsymbol{0.80 \pm 0.06}$ & $72.2 \pm 0.2$ & $89.4 \pm 1.2$ & $7.7 \pm 0.09$ \\
  \LDAIJ         & $\boldsymbol{0.92 \pm 0.01}$ & $\boldsymbol{0.92 \pm 0.01}$ & $\boldsymbol{0.80 \pm 0.06}$ & $72.3 \pm 0.2$ & $\boldsymbol{89.2 \pm 1.0}$ & $7.6 \pm 0.12$ \\
  \DiracI        & $1.1 \pm 0.01$ & $0.95 \pm 0.01$ & $0.87 \pm 0.09$ & $74.5 \pm 0.2$ & $92.8 \pm 1.6$ & $5.2 \pm 0.14$ \\
  \DiracIJ       & $0.95 \pm 0.06$ & $\boldsymbol{0.92 \pm 0.01}$ & $\boldsymbol{0.81 \pm 0.06}$ & $72.7 \pm 0.9$ & $90.2 \pm 2.5$ & $5.2 \pm 0.10$ \\
    \midrule
  SCRUB          & $1.2 \pm 0.05$ & $1.4 \pm 0.04$ & $0.93 \pm 0.15$ & $74.3 \pm 0.3$ & $93.3 \pm 1.6$ & $1.8 \pm 0.03$ \\
  SalUn          & $1.1 \pm 0.01$ & $1.0 \pm 0.02$ & $0.88 \pm 0.10$ & $73.8 \pm 0.3$ & $92.6 \pm 1.4$ & $1.4 \pm 0.03$ \\
  RL+FT          & $1.1 \pm 0.01$ & $1.1 \pm 0.02$ & $0.87 \pm 0.10$ & $74.0 \pm 0.2$ & $92.4 \pm 1.3$ & $0.67 \pm 0.04$ \\
  GA+FT          & $\boldsymbol{0.88 \pm 0.02}$ & $0.97 \pm 0.02$ & $1.4 \pm 0.21$ & $\boldsymbol{77.1 \pm 0.2}$ & $100.0 \pm 0.0$ & $84 \pm 0.78$ \\
  FT             & $0.99 \pm 0.03$ & $1.2 \pm 0.03$ & $1.6 \pm 0.20$ & $76.9 \pm 0.3$ & $100.0 \pm 0.0$ & $84 \pm 0.71$ \\
  GA             & $1.2 \pm 0.05$ & $2.5 \pm 0.11$ & $0.94 \pm 0.15$ & $74.1 \pm 0.4$ & $92.4 \pm 3.1$ & $\boldsymbol{0.39 \pm 0.01}$ \\
    \bottomrule
  \end{tabular}
\end{table*}

\paragraph{Across sub-keys (CIFAR-10, DINOv2 + MLP-1, scenario \subclass).}
We pick an available sub-keys to assess sensitivity to
the identity of the forgotten subclass.
\begin{table*}[t]
  \centering
  \small
  \caption{Benchmark on CIFAR10 / \texttt{dinov2-mlp1}, scenario
  {\textit{SUBCLASS}} (deer). Mean$\pm$std over 5 seeds.
  $\RTE$ is normalised by the $\RTE$ retrain. \textbf{Bold}: top 3 best on $\KL$ columns, best on others (closest to $\pbideal$ for $\Acc_{f}$).}
  \label{tab:hero-benchmark-cifar10-dinov2-mlp1-subclass-4}
  \begin{tabular}{lcccccc}
    \toprule
     & $\KL_{\TEST}$ (nats) & $\KL_{\LAST}$ (nats) & $\KL_{\FORGET}$ (nats) & $\Acc_{\TEST}$ (\%) & $\Acc_{\FORGET}$ (\%) & $\RTE$ (\%) \\
    \midrule
  $\pbinit$      & $0.02 \pm 0.00$ & -- & $0.22 \pm 0.02$ & $99.7 \pm 0.0$ & $100.0 \pm 0.0$ & -- \\
  $\pbideal$     & $0.00 \pm 0.00$ & -- & $0.00 \pm 0.00$ & $99.6 \pm 0.0$ & $98.0 \pm 0.1$ & $100$ (ref) \\
    \midrule
  \LDAI          & $0.02 \pm 0.00$ & $0.02 \pm 0.00$ & $0.18 \pm 0.02$ & $99.6 \pm 0.0$ & $100.0 \pm 0.0$ & $4.8 \pm 0.15$ \\
  \LDAII         & $\boldsymbol{0.01 \pm 0.00}$ & $\boldsymbol{0.01 \pm 0.00}$ & $\boldsymbol{0.11 \pm 0.02}$ & $99.7 \pm 0.1$ & $100.0 \pm 0.0$ & $5.1 \pm 0.41$ \\
  \LDAIJ         & $\boldsymbol{0.01 \pm 0.00}$ & $\boldsymbol{0.01 \pm 0.00}$ & $\boldsymbol{0.08 \pm 0.01}$ & $99.7 \pm 0.0$ & $99.9 \pm 0.0$ & $4.7 \pm 0.28$ \\
  \DiracI        & $0.65 \pm 0.05$ & $0.75 \pm 0.03$ & $6.6 \pm 0.52$ & $90.5 \pm 0.3$ & $10.3 \pm 3.8$ & $4.6 \pm 0.19$ \\
  \DiracIJ       & $0.02 \pm 0.00$ & $0.02 \pm 0.00$ & $0.15 \pm 0.01$ & $99.6 \pm 0.1$ & $99.2 \pm 0.7$ & $4.7 \pm 0.28$ \\
    \midrule
  SCRUB          & $\boldsymbol{0.01 \pm 0.00}$ & $2.3 \pm 0.05$ & $\boldsymbol{0.11 \pm 0.02}$ & $99.6 \pm 0.2$ & $\boldsymbol{99.0 \pm 2.0}$ & $6.2 \pm 0.44$ \\
  SalUn          & $1.0 \pm 0.03$ & $1.0 \pm 0.04$ & $10 \pm 0.28$ & $88.4 \pm 0.1$ & $1.0 \pm 0.1$ & $4.4 \pm 0.34$ \\
  RL+FT          & $0.66 \pm 0.05$ & $0.76 \pm 0.02$ & $6.7 \pm 0.42$ & $90.5 \pm 0.2$ & $10.4 \pm 3.7$ & $4.1 \pm 0.20$ \\
  GA+FT          & $0.16 \pm 0.04$ & $0.16 \pm 0.04$ & $1.7 \pm 0.47$ & $98.0 \pm 0.2$ & $81.4 \pm 2.5$ & $18 \pm 0.95$ \\
  FT             & $0.01 \pm 0.00$ & $\boldsymbol{0.01 \pm 0.00}$ & $0.13 \pm 0.03$ & $\boldsymbol{99.7 \pm 0.1}$ & $99.8 \pm 0.1$ & $17 \pm 0.80$ \\
  GA             & $13 \pm 0.59$ & $1338 \pm 49$ & $46 \pm 1.9$ & $45.5 \pm 0.4$ & $0.0 \pm 0.0$ & $\boldsymbol{2.0 \pm 0.08}$ \\
    \bottomrule
  \end{tabular}
\end{table*}

\begin{table*}[t]
  \centering
  \small
  \caption{Benchmark on CIFAR10 / \texttt{dinov2-mlp1}, scenario
  {\textit{SUBCLASS}} (frog). Mean$\pm$std over 5 seeds.
  $\RTE$ is normalised by the $\RTE$ retrain. \textbf{Bold}: top 3 best on $\KL$ columns, best on others (closest to $\pbideal$ for $\Acc_{f}$).}
  \label{tab:hero-benchmark-cifar10-dinov2-mlp1-subclass-6}
  \begin{tabular}{lcccccc}
    \toprule
     & $\KL_{\TEST}$ (nats) & $\KL_{\LAST}$ (nats) & $\KL_{\FORGET}$ (nats) & $\Acc_{\TEST}$ (\%) & $\Acc_{\FORGET}$ (\%) & $\RTE$ (\%) \\
    \midrule
  $\pbinit$      & $0.03 \pm 0.00$ & -- & $0.31 \pm 0.01$ & $99.7 \pm 0.0$ & $100.0 \pm 0.0$ & -- \\
  $\pbideal$     & $0.00 \pm 0.00$ & -- & $0.00 \pm 0.00$ & $99.4 \pm 0.0$ & $97.1 \pm 0.1$ & $100$ (ref) \\
    \midrule
  \LDAI          & $0.03 \pm 0.00$ & $0.03 \pm 0.00$ & $0.25 \pm 0.02$ & $99.6 \pm 0.1$ & $100.0 \pm 0.0$ & $5.0 \pm 0.07$ \\
  \LDAII         & $0.03 \pm 0.00$ & $0.03 \pm 0.00$ & $0.24 \pm 0.02$ & $\boldsymbol{99.6 \pm 0.0}$ & $100.0 \pm 0.0$ & $5.5 \pm 0.69$ \\
  \LDAIJ         & $0.03 \pm 0.00$ & $\boldsymbol{0.03 \pm 0.00}$ & $0.24 \pm 0.03$ & $99.6 \pm 0.1$ & $99.9 \pm 0.1$ & $5.0 \pm 0.05$ \\
  \DiracI        & $1.0 \pm 0.10$ & $1.3 \pm 0.05$ & $10 \pm 0.98$ & $89.8 \pm 0.2$ & $6.7 \pm 1.0$ & $5.0 \pm 0.08$ \\
  \DiracIJ       & $\boldsymbol{0.02 \pm 0.00}$ & $\boldsymbol{0.02 \pm 0.00}$ & $\boldsymbol{0.17 \pm 0.01}$ & $99.6 \pm 0.0$ & $99.9 \pm 0.0$ & $5.0 \pm 0.09$ \\
    \midrule
  SCRUB          & $\boldsymbol{0.02 \pm 0.00}$ & $4.7 \pm 0.10$ & $\boldsymbol{0.12 \pm 0.04}$ & $99.2 \pm 0.3$ & $\boldsymbol{95.9 \pm 3.3}$ & $6.6 \pm 0.13$ \\
  SalUn          & $1.5 \pm 0.06$ & $1.5 \pm 0.05$ & $14 \pm 0.56$ & $89.0 \pm 0.1$ & $0.2 \pm 0.1$ & $4.5 \pm 0.11$ \\
  RL+FT          & $1.0 \pm 0.04$ & $1.3 \pm 0.06$ & $10 \pm 0.36$ & $89.7 \pm 0.1$ & $6.0 \pm 0.5$ & $4.4 \pm 0.08$ \\
  GA+FT          & $0.20 \pm 0.05$ & $0.20 \pm 0.05$ & $2.1 \pm 0.50$ & $97.0 \pm 0.3$ & $71.8 \pm 2.8$ & $19 \pm 0.33$ \\
  FT             & $\boldsymbol{0.02 \pm 0.00}$ & $\boldsymbol{0.02 \pm 0.00}$ & $\boldsymbol{0.17 \pm 0.02}$ & $99.6 \pm 0.0$ & $99.7 \pm 0.2$ & $19 \pm 0.39$ \\
  GA             & $11 \pm 0.36$ & $1230 \pm 35$ & $43 \pm 1.6$ & $49.4 \pm 0.3$ & $0.0 \pm 0.0$ & $\boldsymbol{2.1 \pm 0.04}$ \\
    \bottomrule
  \end{tabular}
\end{table*}

\begin{table*}[t]
  \centering
  \small
  \caption{Benchmark on CIFAR10 / \texttt{resnet18}, scenario
  {\textit{SUBCLASS}} (deer). Mean$\pm$std over 5 seeds.
  $\RTE$ is normalised by the $\RTE$ retrain. \textbf{Bold}: top 3 best on $\KL$ columns, best on others (closest to $\pbideal$ for $\Acc_{f}$).}
  \label{tab:hero-benchmark-cifar10-resnet18-subclass-4}
  \begin{tabular}{lcccccc}
    \toprule
     & $\KL_{\TEST}$ (nats) & $\KL_{\LAST}$ (nats) & $\KL_{\FORGET}$ (nats) & $\Acc_{\TEST}$ (\%) & $\Acc_{\FORGET}$ (\%) & $\RTE$ (\%) \\
    \midrule
  $\pbinit$      & $0.10 \pm 0.01$ & -- & $0.38 \pm 0.06$ & $98.0 \pm 0.1$ & $100.0 \pm 0.0$ & -- \\
  $\pbideal$     & $0.00 \pm 0.00$ & -- & $0.00 \pm 0.00$ & $97.7 \pm 0.1$ & $95.7 \pm 0.6$ & $100$ (ref) \\
    \midrule
  \LDAI          & $\boldsymbol{0.09 \pm 0.02}$ & $\boldsymbol{0.10 \pm 0.01}$ & $0.18 \pm 0.03$ & $97.0 \pm 0.5$ & $99.8 \pm 0.1$ & $24 \pm 0.20$ \\
  \LDAII         & $\boldsymbol{0.10 \pm 0.03}$ & $\boldsymbol{0.09 \pm 0.01}$ & $\boldsymbol{0.17 \pm 0.02}$ & $96.6 \pm 1.2$ & $99.8 \pm 0.1$ & $24 \pm 0.15$ \\
  \LDAIJ         & $0.10 \pm 0.03$ & $\boldsymbol{0.10 \pm 0.01}$ & $\boldsymbol{0.17 \pm 0.03}$ & $96.5 \pm 1.3$ & $99.7 \pm 0.2$ & $24 \pm 0.10$ \\
  \DiracI        & $0.23 \pm 0.10$ & $0.83 \pm 0.21$ & $0.30 \pm 0.09$ & $96.3 \pm 2.3$ & $100.0 \pm 0.0$ & $22 \pm 0.14$ \\
  \DiracIJ       & $\boldsymbol{0.10 \pm 0.02}$ & $0.10 \pm 0.01$ & $\boldsymbol{0.15 \pm 0.02}$ & $96.9 \pm 0.8$ & $99.8 \pm 0.1$ & $22 \pm 0.11$ \\
    \midrule
  SCRUB          & $0.31 \pm 0.04$ & $0.28 \pm 0.04$ & $0.28 \pm 0.04$ & $95.0 \pm 0.7$ & $97.8 \pm 1.0$ & $22 \pm 0.10$ \\
  SalUn          & $0.37 \pm 0.18$ & $0.91 \pm 0.26$ & $0.37 \pm 0.12$ & $88.7 \pm 8.4$ & $\boldsymbol{97.2 \pm 5.5}$ & $17 \pm 0.14$ \\
  RL+FT          & $0.22 \pm 0.09$ & $0.79 \pm 0.16$ & $0.31 \pm 0.10$ & $97.1 \pm 0.7$ & $100.0 \pm 0.0$ & $17 \pm 0.13$ \\
  GA+FT          & $0.12 \pm 0.01$ & $0.14 \pm 0.01$ & $0.42 \pm 0.05$ & $97.8 \pm 0.1$ & $100.0 \pm 0.0$ & $75 \pm 0.28$ \\
  FT             & $0.13 \pm 0.01$ & $0.17 \pm 0.03$ & $0.44 \pm 0.04$ & $\boldsymbol{97.9 \pm 0.1}$ & $100.0 \pm 0.0$ & $75 \pm 0.37$ \\
  GA             & $1.9 \pm 0.30$ & $4.8 \pm 0.15$ & $2.6 \pm 0.40$ & $72.4 \pm 1.2$ & $56.3 \pm 0.9$ & $\boldsymbol{8.3 \pm 0.08}$ \\
    \bottomrule
  \end{tabular}
\end{table*}

\begin{table*}[t]
  \centering
  \small
  \caption{Benchmark on CIFAR10 / \texttt{resnet18}, scenario
  {\textit{SUBCLASS}} (frog). Mean$\pm$std over 5 seeds.
  $\RTE$ is normalised by the $\RTE$ retrain. \textbf{Bold}: top 3 best on $\KL$ columns, best on others (closest to $\pbideal$ for $\Acc_{f}$).}
  \label{tab:hero-benchmark-cifar10-resnet18-subclass-6}
  \begin{tabular}{lcccccc}
    \toprule
     & $\KL_{\TEST}$ (nats) & $\KL_{\LAST}$ (nats) & $\KL_{\FORGET}$ (nats) & $\Acc_{\TEST}$ (\%) & $\Acc_{\FORGET}$ (\%) & $\RTE$ (\%) \\
    \midrule
  $\pbinit$      & $0.13 \pm 0.01$ & -- & $0.50 \pm 0.06$ & $98.0 \pm 0.1$ & $100.0 \pm 0.0$ & -- \\
  $\pbideal$     & $0.00 \pm 0.00$ & -- & $0.00 \pm 0.00$ & $97.5 \pm 0.1$ & $94.3 \pm 0.7$ & $100$ (ref) \\
    \midrule
  \LDAI          & $0.17 \pm 0.11$ & $\boldsymbol{0.11 \pm 0.01}$ & $\boldsymbol{0.23 \pm 0.03}$ & $92.9 \pm 7.2$ & $99.6 \pm 0.2$ & $22 \pm 0.09$ \\
  \LDAII         & $0.23 \pm 0.19$ & $\boldsymbol{0.11 \pm 0.02}$ & $\boldsymbol{0.22 \pm 0.03}$ & $90.1 \pm 10.5$ & $99.6 \pm 0.4$ & $22 \pm 0.13$ \\
  \LDAIJ         & $0.18 \pm 0.09$ & $\boldsymbol{0.12 \pm 0.02}$ & $0.24 \pm 0.04$ & $93.3 \pm 5.7$ & $99.6 \pm 0.5$ & $22 \pm 0.21$ \\
  \DiracI        & $0.28 \pm 0.17$ & $0.69 \pm 0.13$ & $0.29 \pm 0.07$ & $91.4 \pm 9.4$ & $100.0 \pm 0.0$ & $20 \pm 0.10$ \\
  \DiracIJ       & $\boldsymbol{0.11 \pm 0.01}$ & $0.12 \pm 0.01$ & $\boldsymbol{0.18 \pm 0.01}$ & $96.4 \pm 0.4$ & $99.1 \pm 0.5$ & $20 \pm 0.10$ \\
    \midrule
  SCRUB          & $0.27 \pm 0.04$ & $0.25 \pm 0.03$ & $0.35 \pm 0.03$ & $95.2 \pm 0.7$ & $\boldsymbol{97.2 \pm 0.3}$ & $20 \pm 0.07$ \\
  SalUn          & $0.59 \pm 0.38$ & $0.87 \pm 0.23$ & $0.37 \pm 0.13$ & $81.2 \pm 12.4$ & $98.7 \pm 2.3$ & $16 \pm 0.16$ \\
  RL+FT          & $0.29 \pm 0.17$ & $0.74 \pm 0.14$ & $0.31 \pm 0.09$ & $91.2 \pm 10.0$ & $100.0 \pm 0.0$ & $16 \pm 0.06$ \\
  GA+FT          & $\boldsymbol{0.16 \pm 0.01}$ & $0.17 \pm 0.01$ & $0.54 \pm 0.06$ & $97.8 \pm 0.1$ & $100.0 \pm 0.0$ & $68 \pm 0.30$ \\
  FT             & $\boldsymbol{0.16 \pm 0.02}$ & $0.21 \pm 0.02$ & $0.58 \pm 0.09$ & $\boldsymbol{97.9 \pm 0.1}$ & $100.0 \pm 0.0$ & $68 \pm 0.27$ \\
  GA             & $1.4 \pm 0.19$ & $3.6 \pm 0.20$ & $2.8 \pm 0.29$ & $73.9 \pm 0.8$ & $44.6 \pm 4.0$ & $\boldsymbol{7.6 \pm 0.09}$ \\
    \bottomrule
  \end{tabular}
\end{table*}

\paragraph{Across forget-set size (CIFAR-10, DINOv2 + MLP-1, scenario \random).}
Sweeping $n_f \in \{1, 500, 5000\}$ around the default $n_f = 50$.
\begin{table*}[t]
  \centering
  \small
  \caption{Benchmark on CIFAR10 / \texttt{dinov2-mlp1}, scenario
  {\textit{RANDOM}} (random N=500). Mean$\pm$std over 5 seeds.
  $\RTE$ is normalised by the $\RTE$ retrain. \textbf{Bold}: top 3 best on $\KL$ columns, best on others (closest to $\pbideal$ for $\Acc_{f}$).}
  \label{tab:hero-benchmark-cifar10-dinov2-mlp1-random-500}
  \begin{tabular}{lcccccc}
    \toprule
     & $\KL_{\TEST}$ (nats) & $\KL_{\LAST}$ (nats) & $\KL_{\FORGET}$ (nats) & $\Acc_{\TEST}$ (\%) & $\Acc_{\FORGET}$ (\%) & $\RTE$ (\%) \\
    \midrule
  $\pbinit$      & $0.01 \pm 0.00$ & -- & $0.12 \pm 0.03$ & $96.9 \pm 0.1$ & $100.0 \pm 0.0$ & -- \\
  $\pbideal$     & $0.00 \pm 0.00$ & -- & $0.00 \pm 0.00$ & $96.9 \pm 0.0$ & $97.2 \pm 0.9$ & $100$ (ref) \\
    \midrule
  \LDAI          & $\boldsymbol{0.02 \pm 0.00}$ & $\boldsymbol{0.02 \pm 0.00}$ & $0.09 \pm 0.02$ & $96.8 \pm 0.1$ & $100.0 \pm 0.0$ & $1.3 \pm 0.02$ \\
  \LDAII         & $0.02 \pm 0.00$ & $0.02 \pm 0.00$ & $\boldsymbol{0.05 \pm 0.02}$ & $96.8 \pm 0.1$ & $98.7 \pm 0.4$ & $1.6 \pm 0.70$ \\
  \LDAIJ         & $\boldsymbol{0.02 \pm 0.00}$ & $0.02 \pm 0.00$ & $0.09 \pm 0.01$ & $\boldsymbol{96.8 \pm 0.1}$ & $99.2 \pm 0.5$ & $1.3 \pm 0.03$ \\
  \DiracI        & $0.03 \pm 0.01$ & $0.04 \pm 0.00$ & $0.10 \pm 0.02$ & $96.4 \pm 0.1$ & $99.5 \pm 0.6$ & $1.2 \pm 0.03$ \\
  \DiracIJ       & $0.02 \pm 0.00$ & $\boldsymbol{0.02 \pm 0.00}$ & $0.10 \pm 0.03$ & $96.8 \pm 0.1$ & $100.0 \pm 0.0$ & $1.3 \pm 0.03$ \\
    \midrule
  SCRUB          & $\boldsymbol{0.02 \pm 0.00}$ & $\boldsymbol{0.02 \pm 0.00}$ & $0.11 \pm 0.03$ & $96.8 \pm 0.1$ & $99.4 \pm 0.5$ & $0.66 \pm 0.08$ \\
  SalUn          & $0.03 \pm 0.00$ & $0.90 \pm 0.04$ & $0.09 \pm 0.02$ & $96.5 \pm 0.1$ & $100.0 \pm 0.1$ & $0.49 \pm 0.12$ \\
  RL+FT          & $0.03 \pm 0.01$ & $0.24 \pm 0.00$ & $0.11 \pm 0.03$ & $96.5 \pm 0.2$ & $100.0 \pm 0.1$ & $0.64 \pm 0.04$ \\
  GA+FT          & $0.03 \pm 0.00$ & $0.03 \pm 0.00$ & $\boldsymbol{0.05 \pm 0.01}$ & $96.7 \pm 0.1$ & $99.1 \pm 0.4$ & $19 \pm 0.28$ \\
  FT             & $0.03 \pm 0.00$ & $0.03 \pm 0.00$ & $0.08 \pm 0.02$ & $96.7 \pm 0.1$ & $99.7 \pm 0.2$ & $19 \pm 0.40$ \\
  GA             & $0.03 \pm 0.00$ & $27 \pm 7.1$ & $\boldsymbol{0.03 \pm 0.01}$ & $96.7 \pm 0.1$ & $\boldsymbol{97.9 \pm 0.2}$ & $\boldsymbol{0.21 \pm 0.00}$ \\
    \bottomrule
  \end{tabular}
\end{table*}

\begin{table*}[t]
  \centering
  \small
  \caption{Benchmark on CIFAR10 / \texttt{resnet18}, scenario
  {\textit{RANDOM}} (random N=500). Mean$\pm$std over 5 seeds.
  $\RTE$ is normalised by the $\RTE$ retrain. \textbf{Bold}: top 3 best on $\KL$ columns, best on others (closest to $\pbideal$ for $\Acc_{f}$).}
  \label{tab:hero-benchmark-cifar10-resnet18-random-500}
  \begin{tabular}{lcccccc}
    \toprule
     & $\KL_{\TEST}$ (nats) & $\KL_{\LAST}$ (nats) & $\KL_{\FORGET}$ (nats) & $\Acc_{\TEST}$ (\%) & $\Acc_{\FORGET}$ (\%) & $\RTE$ (\%) \\
    \midrule
  $\pbinit$      & $0.42 \pm 0.01$ & -- & $0.95 \pm 0.12$ & $89.6 \pm 0.1$ & $100.0 \pm 0.0$ & -- \\
  $\pbideal$     & $0.00 \pm 0.00$ & -- & $0.00 \pm 0.00$ & $89.7 \pm 0.1$ & $90.0 \pm 1.5$ & $100$ (ref) \\
    \midrule
  \LDAI          & $0.42 \pm 0.01$ & $0.43 \pm 0.01$ & $0.94 \pm 0.12$ & $89.5 \pm 0.1$ & $100.0 \pm 0.0$ & $7.0 \pm 0.26$ \\
  \LDAII         & $\boldsymbol{0.42 \pm 0.01}$ & $\boldsymbol{0.42 \pm 0.01}$ & $0.88 \pm 0.11$ & $89.5 \pm 0.1$ & $100.0 \pm 0.1$ & $7.0 \pm 0.31$ \\
  \LDAIJ         & $0.42 \pm 0.01$ & $0.43 \pm 0.01$ & $0.94 \pm 0.11$ & $\boldsymbol{89.6 \pm 0.2}$ & $99.9 \pm 0.1$ & $6.9 \pm 0.22$ \\
  \DiracI        & $\boldsymbol{0.41 \pm 0.02}$ & $\boldsymbol{0.40 \pm 0.02}$ & $\boldsymbol{0.76 \pm 0.07}$ & $88.9 \pm 0.5$ & $\boldsymbol{90.6 \pm 7.6}$ & $4.7 \pm 0.14$ \\
  \DiracIJ       & $\boldsymbol{0.42 \pm 0.00}$ & $\boldsymbol{0.42 \pm 0.00}$ & $0.87 \pm 0.12$ & $89.5 \pm 0.2$ & $100.0 \pm 0.0$ & $4.6 \pm 0.15$ \\
    \midrule
  SCRUB          & $0.42 \pm 0.01$ & $0.46 \pm 0.01$ & $0.92 \pm 0.12$ & $89.5 \pm 0.1$ & $99.6 \pm 0.3$ & $1.7 \pm 0.17$ \\
  SalUn          & $0.44 \pm 0.01$ & $0.64 \pm 0.07$ & $\boldsymbol{0.83 \pm 0.10}$ & $88.9 \pm 0.3$ & $96.5 \pm 1.3$ & $1.2 \pm 0.04$ \\
  RL+FT          & $0.43 \pm 0.00$ & $0.47 \pm 0.03$ & $\boldsymbol{0.83 \pm 0.10}$ & $89.1 \pm 0.3$ & $97.7 \pm 1.5$ & $0.79 \pm 0.02$ \\
  GA+FT          & $0.52 \pm 0.01$ & $0.66 \pm 0.02$ & $1.1 \pm 0.11$ & $89.4 \pm 0.2$ & $98.9 \pm 0.2$ & $70 \pm 2.2$ \\
  FT             & $0.54 \pm 0.04$ & $0.67 \pm 0.04$ & $1.1 \pm 0.16$ & $89.5 \pm 0.3$ & $100.0 \pm 0.0$ & $70 \pm 2.2$ \\
  GA             & $0.42 \pm 0.01$ & $0.62 \pm 0.04$ & $0.91 \pm 0.11$ & $89.5 \pm 0.2$ & $99.5 \pm 0.3$ & $\boldsymbol{0.48 \pm 0.06}$ \\
    \bottomrule
  \end{tabular}
\end{table*}

The qualitative ranking observed on the headline cell --- \LDAIJ\ and
\DiracIJ\ are competitive with the strongest SOTA on
\textsc{KL}-based metrics while paying a tiny RTE is preserved
across every architecture, dataset and sub-key reported here. The
quantitative spread is small enough that no scenario inverts the
ordering between our methods and the closest SOTA baseline.

\paragraph{Short Ablation across Scenarios and Architectures}
\begin{table}[t]
  \centering
  \small
  \caption{Recommendations across forgetting scenarios on CIFAR10,
  architecture \texttt{resnet18}. Each cell reports $\t_{\max}$ (top,
  mean$\pm$std over 5 seeds) and the percentage of seeds for which
  \texttt{KL\_net\_proxy\_before} $<$ \texttt{KL\_net\_proxy\_after}
  (bottom; \textbf{bold} when $100\%$). Sub-keys: CLASS=0, SUBCLASS=0, RANDOM=50.}
  \label{tab:app-reco-scenario-cifar10-resnet18}
  \begin{tabular}{lccccc}
    \toprule
    Scenario & \LDAI & \LDAII & \LDAIJ & \QDAI & \QDAII \\
    \midrule
  \class         & \shortstack{$0.53 \pm 0.04$\\$\boldsymbol{100\%}$} & \shortstack{$0.16 \pm 0.01$\\$\boldsymbol{100\%}$} & \shortstack{$0.53 \pm 0.04$\\$\boldsymbol{100\%}$} & \shortstack{$0.28 \pm 0.03$\\$\boldsymbol{100\%}$} & \shortstack{$0.28 \pm 0.03$\\$\boldsymbol{100\%}$} \\
  \subclass      & \shortstack{$0.43 \pm 0.12$\\$\boldsymbol{100\%}$} & \shortstack{$0.02 \pm 0.01$\\$80\%$} & \shortstack{$0.32 \pm 0.11$\\$\boldsymbol{100\%}$} & \shortstack{$0.24 \pm 0.06$\\$\boldsymbol{100\%}$} & \shortstack{$0.09 \pm 0.01$\\$\boldsymbol{100\%}$} \\
  \random        & \shortstack{$0.20 \pm 0.40$\\$20\%$} & \shortstack{$0.00 \pm 0.00$\\$80\%$} & \shortstack{$0.06 \pm 0.02$\\$\boldsymbol{100\%}$} & \shortstack{$0.40 \pm 0.49$\\$40\%$} & \shortstack{$0.05 \pm 0.02$\\$\boldsymbol{100\%}$} \\
    \bottomrule
  \end{tabular}
\end{table}

\begin{table}[t]
  \centering
  \small
  \caption{Recommendations across forgetting scenarios on CIFAR100,
  architecture \texttt{dinov2-mlp2}. Each cell reports $\t_{\max}$ (top,
  mean$\pm$std over 5 seeds) and the percentage of seeds for which
  \texttt{KL\_net\_proxy\_before} $<$ \texttt{KL\_net\_proxy\_after}
  (bottom; \textbf{bold} when $100\%$). Sub-keys: CLASS=0, SUBCLASS=0, RANDOM=50.}
  \label{tab:app-reco-scenario-cifar100-dinov2-mlp2}
  \begin{tabular}{lccccc}
    \toprule
    Scenario & \LDAI & \LDAII & \LDAIJ & \QDAI & \QDAII \\
    \midrule
  \class         & \shortstack{$0.14 \pm 0.00$\\$\boldsymbol{100\%}$} & \shortstack{$0.02 \pm 0.00$\\$\boldsymbol{100\%}$} & \shortstack{$0.14 \pm 0.00$\\$\boldsymbol{100\%}$} & \shortstack{$0.13 \pm 0.00$\\$\boldsymbol{100\%}$} & \shortstack{$0.13 \pm 0.00$\\$\boldsymbol{100\%}$} \\
  \subclass      & \shortstack{$0.00 \pm 0.00$\\$0\%$} & \shortstack{$0.00 \pm 0.00$\\$\boldsymbol{100\%}$} & \shortstack{$0.00 \pm 0.00$\\$\boldsymbol{100\%}$} & \shortstack{$0.00 \pm 0.00$\\$0\%$} & \shortstack{$0.00 \pm 0.00$\\$0\%$} \\
  \random        & \shortstack{$0.20 \pm 0.40$\\$20\%$} & \shortstack{$0.00 \pm 0.00$\\$\boldsymbol{100\%}$} & \shortstack{$0.01 \pm 0.00$\\$\boldsymbol{100\%}$} & \shortstack{$0.00 \pm 0.00$\\$0\%$} & \shortstack{$0.01 \pm 0.01$\\$80\%$} \\
    \bottomrule
  \end{tabular}
\end{table}

\begin{table}[t]
  \centering
  \small
  \caption{Recommendations across architectures on CIFAR10,
  scenario \class\ (sub-key 6). Each cell reports
  $\t_{\max}$ (top, mean$\pm$std over 5 seeds) and the percentage of seeds
  for which \texttt{KL\_net\_proxy\_before} $<$ \texttt{KL\_net\_proxy\_after}
  (bottom; \textbf{bold} when $100\%$).}
  \label{tab:app-reco-arch-cifar10-class-6}
  \begin{tabular}{lccccc}
    \toprule
    Architecture & \LDAI & \LDAII & \LDAIJ & \QDAI & \QDAII \\
    \midrule
  Linear         & \shortstack{$0.34 \pm 0.00$\\$\boldsymbol{100\%}$} & \shortstack{$0.07 \pm 0.00$\\$\boldsymbol{100\%}$} & \shortstack{$0.36 \pm 0.00$\\$\boldsymbol{100\%}$} & \shortstack{$0.25 \pm 0.00$\\$\boldsymbol{100\%}$} & \shortstack{$0.25 \pm 0.00$\\$\boldsymbol{100\%}$} \\
  MLP-1          & \shortstack{$0.40 \pm 0.00$\\$\boldsymbol{100\%}$} & \shortstack{$0.09 \pm 0.00$\\$\boldsymbol{100\%}$} & \shortstack{$0.43 \pm 0.00$\\$\boldsymbol{100\%}$} & \shortstack{$0.30 \pm 0.00$\\$\boldsymbol{100\%}$} & \shortstack{$0.30 \pm 0.00$\\$\boldsymbol{100\%}$} \\
  MLP-2          & \shortstack{$0.41 \pm 0.00$\\$\boldsymbol{100\%}$} & \shortstack{$0.09 \pm 0.00$\\$\boldsymbol{100\%}$} & \shortstack{$0.43 \pm 0.00$\\$\boldsymbol{100\%}$} & \shortstack{$0.30 \pm 0.00$\\$\boldsymbol{100\%}$} & \shortstack{$0.30 \pm 0.00$\\$\boldsymbol{100\%}$} \\
  ResNet-18      & \shortstack{$0.46 \pm 0.02$\\$\boldsymbol{100\%}$} & \shortstack{$0.15 \pm 0.00$\\$\boldsymbol{100\%}$} & \shortstack{$0.47 \pm 0.03$\\$\boldsymbol{100\%}$} & \shortstack{$0.26 \pm 0.03$\\$\boldsymbol{100\%}$} & \shortstack{$0.26 \pm 0.03$\\$\boldsymbol{100\%}$} \\
    \bottomrule
  \end{tabular}
\end{table}

\end{document}